\documentclass{article} %
\pdfoutput=1
\usepackage{iclr2026_conference,times}

\usepackage[utf8]{inputenc} %
\usepackage[T1]{fontenc}    %
\usepackage{hyperref}       %
\usepackage{url}            %
\usepackage{booktabs}       %
\usepackage{amsfonts}       %
\usepackage{nicefrac}       %
\usepackage{microtype}      %
\usepackage{xcolor}         %
\usepackage{wrapfig}

\usepackage{layouts}
\usepackage{nccmath}
\usepackage{amsmath}
\usepackage{amssymb}
\usepackage{amsthm}
\usepackage{bm}
\usepackage{bbm}
\usepackage{tikz}
\usetikzlibrary{bayesnet,arrows,fit,positioning,shapes,calc,decorations.pathmorphing,matrix,shapes.geometric,automata,decorations.text,arrows.meta,backgrounds} %
\usepackage{stackengine}
\usepackage{subcaption}
\usepackage{paralist}
\usepackage{tabularx}
\usepackage{scalerel,graphicx,xparse,textcomp}
\usepackage{mathtools}
\usepackage{etoolbox}
\usepackage[capitalize,noabbrev]{cleveref}
\usepackage{enumitem}
\usepackage{thmtools, thm-restate}
\usepackage{multirow}
\usepackage{changepage} %
\usepackage{adjustbox} %
\usepackage{rotating}
\usepackage{makecell}
\usepackage{lipsum}
\usepackage{cancel}
\usepackage[ruled,vlined]{algorithm2e}
\usepackage{multicol}
\usepackage{pifont}
\usepackage{stmaryrd}
\usepackage{trimclip}
\usepackage{tikzducks}
\usepackage[normalem]{ulem}
\usepackage{titlesec}
\usepackage{setspace}
\usepackage{todonotes}
\usepackage{nicematrix}
\usepackage[most]{tcolorbox}

\usepackage{float}
\usepackage{verbatim}
\usepackage{array}
\usepackage{xspace}

\usepackage{multibib}
\newcites{appendix}{Additional References in Appendix}

\makeatletter
\newcommand*\bigcdot{\mathpalette\bigcdot@{.5}}
\newcommand*\bigcdot@[2]{\mathbin{\vcenter{\hbox{\scalebox{#2}{$\m@th#1\bullet$}}}}}
\makeatother

\makeatletter
\newcommand{\dashover}[2][\mathop]{#1{\mathpalette\df@over{{\dashfill}{#2}}}}
\newcommand{\fillover}[2][\mathop]{#1{\mathpalette\df@over{{\solidfill}{#2}}}}
\newcommand{\df@over}[2]{\df@@over#1#2}
\newcommand\df@@over[3]{%
  \vbox{
    \offinterlineskip
    \ialign{##\cr
      #2{#1}\cr
      \noalign{\kern1pt}
      $\m@th#1#3$\cr
    }
  }%
}
\newcommand{\dashfill}[1]{%
  \kern-.5pt
  \xleaders\hbox{\kern.5pt\vrule height.4pt width \dash@width{#1}\kern.5pt}\hfill
  \kern-.5pt
}
\newcommand{\dash@width}[1]{%
  \ifx#1\displaystyle
    2pt
  \else
    \ifx#1\textstyle
      1.5pt
    \else
      \ifx#1\scriptstyle
        1.25pt
      \else
        \ifx#1\scriptscriptstyle
          1pt
        \fi
      \fi
    \fi
  \fi
}
\newcommand{\solidfill}[1]{\leaders\hrule\hfill}



\makeatletter

\setbox0\hbox{$\xdef\scriptratio{\strip@pt\dimexpr
    \numexpr(\sf@size*65536)/\f@size sp}$}

\newcommand{\scriptveryshortarrow}[1][3pt]{{%
    \hbox{\rule[\scriptratio\dimexpr\fontdimen22\textfont2-.2pt\relax]
               {\scriptratio\dimexpr#1\relax}{\scriptratio\dimexpr.4pt\relax}}%
   \mkern-4mu\hbox{\let\f@size\sf@size\usefont{U}{lasy}{m}{n}\symbol{41}}}}

\def\circarrow{\hbox{$\circ$}\kern-1.5pt\hbox{$\rightarrow$}}
\def\arrowcirc{\hbox{$\leftarrow$}\kern -1pt\hbox{$\circ$}}
\def\circcirc{\hbox{$\circ$}\kern-1.5pt\hbox{\rule[0.5ex]{0.8em}{0.5pt}}\kern -1pt\hbox{$\circ$}}
\def\circdash{\hbox{$\circ$}\kern-1.5pt\hbox{$\textrm{---}$}}

\makeatother

\newdimen\arrowsize 
\pgfarrowsdeclare{arcsq}{arcsq}
{
  \arrowsize=0.2pt
  \advance\arrowsize by .5\pgflinewidth
  \pgfarrowsleftextend{-4\arrowsize-.5\pgflinewidth}
  \pgfarrowsrightextend{.5\pgflinewidth}
}
{
  \arrowsize=1.5pt
  \advance\arrowsize by .5\pgflinewidth
  \pgfsetdash{}{0pt} %
  \pgfsetroundjoin   %
  \pgfsetroundcap    %
  \pgfpathmoveto{\pgfpoint{0\arrowsize}{0\arrowsize}}
  \pgfpatharc{-90}{-140}{4\arrowsize}
  \pgfusepathqstroke
  \pgfpathmoveto{\pgfpointorigin}
  \pgfpatharc{90}{140}{4\arrowsize}
  \pgfusepathqstroke
}

\newtheorem{theorem}{Theorem}

\newtheorem{lemma}{Lemma}
\newtheorem{corollary}{Corollary}

\theoremstyle{definition}
\newtheorem{definition}{Definition}
\newtheorem{assumption}{Assumption}

\newtheorem{example}{Example}
\AtBeginEnvironment{example}{%
  \pushQED{\qed}%
}
\AtEndEnvironment{example}{\popQED\endexample}

\crefformat{section}{\S#2#1#3} %
\crefformat{subsection}{\S#2#1#3}
\crefformat{subsubsection}{\S#2#1#3}

\DeclarePairedDelimiterX\braket[2]{\langle}{\rangle}{#1 \delimsize\vert #2}
\renewcommand{\bibname}{References}

\let\emptyset\varnothing

\newcommand{\Ecal}{\mathcal{E}}
\newcommand{\Gcal}{\mathcal{G}}
\newcommand{\Pcal}{\mathcal{P}}
\newcommand{\Acal}{\mathcal{A}}
\newcommand{\Hcal}{\mathcal{H}}

\newcommand{\pa}{\operatorname{pa}}
\newcommand{\ch}{\operatorname{ch}}
\newcommand{\anc}{\operatorname{an}}
\newcommand{\des}{\operatorname{de}}

\newcommand{\rank}{\operatorname{rank}}
\newcommand{\mrank}{\operatorname{mrank}}
\newcommand{\bases}{\operatorname{bases}}

\definecolor{mplblue}{rgb}{0.12156863, 0.46666667, 0.70588235}
\newcommand{\update}[1]{{\color{black}#1}}

\definecolor{BlueText}{HTML}{1f77b4}
\definecolor{BlueBackground}{HTML}{ddebfa}
\definecolor{BlueBorder}{HTML}{3275a8}

\definecolor{OrangeText}{HTML}{ff7f0e}
\definecolor{OrangeBackground}{HTML}{ffe6cc}
\definecolor{OrangeBorder}{HTML}{cc660b}

\definecolor{GreenText}{HTML}{2ca02c}
\definecolor{GreenBackground}{HTML}{dff0df}
\definecolor{GreenBorder}{HTML}{218421}

\definecolor{RedText}{HTML}{d62728}
\definecolor{RedBackground}{HTML}{f8dada}
\definecolor{RedBorder}{HTML}{a72020}

\definecolor{PurpleText}{HTML}{9467bd}
\definecolor{PurpleBackground}{HTML}{e8dff3}
\definecolor{PurpleBorder}{HTML}{7b4fa2}

\definecolor{BeigeText}{HTML}{8b4513}
\definecolor{BeigeBackground}{HTML}{fcf5e1} 
\definecolor{BeigeBorder}{HTML}{d2b48c}

\newcommand{\DrawCyclicGraphExample}[1]{%
\begin{tikzpicture}[->,>=stealth,shorten >=1pt,auto,semithick,inner sep=0.2mm,scale=0.8]
  \tikzstyle{Lstyle} = [draw, rectangle, minimum size=0.5cm, fill=gray!20]
  \tikzstyle{Xstyle} = [draw, circle, minimum size=0.6cm]
  \node[Lstyle] (L) at (1.2,1.6) {$L_1$};
  \node[Xstyle] (X1) at (0,0.8) {$X_1$};
  \node[Xstyle] (X3) at (1.2,0) {$X_3$};
  \node[Xstyle] (X2) at (2.4,0.8) {$X_2$};
  #1
\end{tikzpicture}%
}

\newcommand{\DrawCyclicGraphExampleTwo}[1]{%
\begin{tikzpicture}[->,>=stealth,shorten >=1pt,auto,semithick,inner sep=0.2mm,scale=0.8]
  \tikzstyle{Lstyle} = [draw, rectangle, minimum size=0.5cm, fill=gray!20]
  \tikzstyle{Xstyle} = [draw, circle, minimum size=0.6cm]
    \node[Lstyle] (L1) at (0,1.2) {$L_1$};
    \node[Lstyle] (L2) at (1.386,1.2) {$L_2$};
    \node[Xstyle] (X1) at (-0.693,0) {$X_1$};
    \node[Xstyle] (X2) at (0.693, 0) {$X_2$};
    \node[Xstyle] (X3) at (2.079, 0) {$X_3$};
  #1
\end{tikzpicture}%
}

\newcolumntype{L}[1]{>{\raggedright\arraybackslash}p{#1}}
\newcolumntype{C}[1]{>{\centering\arraybackslash}p{#1}}
\newcolumntype{R}[1]{>{\raggedleft\arraybackslash}p{#1}}
\newcolumntype{M}[1]{>{\centering\arraybackslash}m{#1}}
\crefname{sidewaystable}{Table}{Tables}

\title{Distributional Equivalence in Linear Non-Gaussian Latent-Variable Cyclic Causal Models: Characterization and Learning}

\def\mystrut{\rule{0pt}{1.0\normalbaselineskip}}
\author{
\begin{tabular}{@{}l}
Haoyue Dai$^{1}\quad\quad$Immanuel Albrecht$^2\quad\quad$Peter Spirtes$^1\quad\quad$Kun Zhang$^{1,3}$
\end{tabular}\\
$^1$Carnegie Mellon University\mystrut\quad\quad
$^2$FernUniversit{\"a}t in Hagen\mystrut\quad\quad
$^3$MBZUAI
}

\usepackage[toc,page,header]{appendix}
\usepackage{minitoc}
\iclrfinalcopy

\begin{document}
\doparttoc %
\faketableofcontents %
\maketitle

\begin{abstract}

Causal discovery with latent variables is a fundamental task. \update{Yet most existing methods rely on strong structural assumptions,} such as enforcing specific indicator patterns for latents or restricting how they can interact with others. We argue that a core obstacle to a general, structural-assumption-free approach is the lack of an equivalence characterization: without knowing \textit{what} can be identified, one generally cannot design methods for \textit{how} to identify it. In this work, we aim to close this gap for linear non-Gaussian models. We establish the graphical criterion for when two graphs with arbitrary latent structure and cycles are \textit{distributionally equivalent}, that is, they induce the same observed distribution set. Key to our approach is a new tool, \textit{edge rank} constraints, which fills a missing piece in the toolbox for latent-variable causal discovery in even broader settings. We further provide a procedure to traverse the whole equivalence class and develop an algorithm to recover models from data up to such equivalence. To our knowledge, this is the first equivalence characterization with latent variables in any parametric setting without structural assumptions, and hence the first structural-assumption-free discovery method. Code and an interactive demo are available at \url{https://equiv.cc}. \looseness=-1
\end{abstract}

\section{Introduction}
\label{sec:introduction}

At the core of scientific inquiry lies causal discovery, the \update{task of learning} causal relations from observational data~\citep{spirtes2000causation,pearl2009models}. In many real-world scenarios, the variables of interest can be unobserved. For instance, in psychology, personality traits are hidden behind survey responses, and in biology, crucial regulators may be unobserved due to technical inaccessibility. Discovering the causal structure with these latent variables, referred to as latent-variable causal discovery, is essential for understanding and reasoning, yet remains a challenging task.

Latent-variable causal discovery has seen significant development over the past three decades. A milestone was the Fast Causal Inference (FCI) algorithm~\citep{spirtes1992building}, which exploits conditional independence (CI) constraints under hidden confounding. However, FCI is typically not regarded as a method of latent-variable causal discovery, as it focuses solely on causal relations among observed variables, with no intension or capability to identify those among latent variables. In fact, though FCI is already maximally informative under nonparametric CI constraints~\citep{richardson2002ancestral,zhang2008completeness}, it is still not informative enough for recovering latent structure.

This limitation has motivated the development of many recent approaches that go beyond CI constraints, typically by introducing parametric assumptions, such as linearity~\citep{silva2003learning,dong2023versatile}, non-Gaussianity~\citep{hoyer2008a,jin2023structural}, mixture models~\citep{kivva2021learning}, and distribution shifts~\citep{zhang2024causal}. Within each setting, a rich array of techniques has emerged. For example, in the linear non-Gaussian setting alone, methods have been developed based on overcomplete independent component analysis (OICA)~\citep{salehkaleybar2020learning}, regression~\citep{tashiro2014parcelingam}, Bayesian estimation~\citep{shimizu2014bayesian}, independence testing~\citep{xie2020generalized}, cumulants~\citep{robeva2021multi}, independent subspace analysis~\citep{dai2024local}, and many more.\looseness=-1  %

However, despite this prosperity, most methods share a clear limitation: they rely on structural assumptions, often about how latent variables are indicated and how they can interact with others. Common examples include measurement models where observed variables have to be pure measurements of latents~\citep{silva2004generalized,zhang2018causal}; hierarchical models that prohibit effects from observed variables~\citep{choi2011learning,huang2020causal}; sufficient number of pure children per latent~\citep{squires2022causal,jin2023structural}; and assumptions like triangle- or bow-freeness~\citep{dong2023versatile,wang2023causal}. In addition, most methods also assume acyclicity, even though feedback loops are common in real systems. These assumptions, often overly strong and untestable, not only limit applicability but also complicate method selection for practitioners.

A pressing question naturally arises: after decades of progress, is it possible now to have a general structural-assumption-free approach for latent-variable causal discovery that, like FCI, allows arbitrary relations among latent and observed variables, yet goes beyond FCI's limited informativeness? %

One core obstacle, we argue, is the lack of a general equivalence characterization with latent variables. Equivalence is a notion fundamental to causal discovery: when different causal models induce the same observed distribution set (known as \textit{distributional equivalence}), no method can, or should, distinguish among them, without extra information like interventions or sparsity constraints. The expected discovery output is thus the entire \textit{equivalence class}, the best one can hope to identify from data. In practice, equivalence can also be defined more coarsely, depending on the specific constraints used. One example is \textit{Markov equivalence}, capturing when models entail the same CI constraints. A well-known and nice result is that in causally sufficient, acyclic, and nonparametric models, Markov equivalence coincides with distributional equivalence~\citep{spirtes2000causation}; the resulting equivalence class is represented by a completed partially directed acyclic graph (CPDAG). %

In the presence of cycles or latent variables, however, equivalence characterization becomes more complex. For example, the nice coincidence between Markov and distributional equivalences breaks down, even with only cycles~\citep{spirtes1994conditional,mooij2020constraint}, or only latent variables~\citep{verma1991equivalence,richardson2023nested}, let alone both. The resulting equivalence classes, be it Markov~\citep{richardson2002ancestral,claassen2023establishing} or distributional~\citep{nowzohour2017,evans2018margins}, also become far more complex. Such complications carry over to parametric settings: for cycles alone, distributional equivalence has been characterized in linear non-Gaussian and Gaussian models~\citep{lacerda2008discovering,ghassami2020characterizing}; yet for latent variables, no characterization of any kind, whether distributional or constraint specific, is currently known to us. The closest result~\citep{adams2021identification} gives conditions for when a linear non-Gaussian acyclic model can be uniquely identified, but leaves open describing the equivalence when such identifiability fails. %

All such complications from latent variables and cycles have so far prevented a general equivalence characterization, which is exactly what obstruct progress towards a structural-assumption-free method. The need for such a characterization is yet clear: without knowing \textit{what} can be identified, one generally cannot design methods for \textit{how} to identify it. This is echoed in history: PC algorithm followed CPDAGs; FCI's guarantee followed maximal ancestral graphs (MAGs)~\citep{richardson2002ancestral}.\looseness=-1

Our goal in this work is hence to overcome these challenges and establish a general equivalence notion with latent variables and cycles. We focus on linear non-Gaussian models, a parametric setting that has received much attention. Under this setting, we address three questions: \textbf{1)} When are two graphs with arbitrary latent variables and cycles equivalent? \textbf{2)} How can one traverse the entire equivalence class? \textbf{3)} How can one recover latent-variable models up to equivalence from data?

Centered around these three questions, our contributions are summarized as follows:
\begin{enumerate}[itemsep=0pt,parsep=2pt, topsep=-3pt, leftmargin=22pt] %
\item We present a general equivalence notion that allows arbitrary latent structure and cycles in linear non-Gaussian models. This is the first such result known to us in any parametric setting (\cref{sec:problem_setup}). %
\item We introduce a new tool, \textit{edge rank} constraints. It contributes a missing piece to the broader toolbox for latent-variable causal discovery, with potential use across many settings (\cref{sec:algebraic_to_graphical}).
\item We characterize equivalence graphically and provide procedures to traverse the entire class. Results are cleaner than expected. We provide an interactive demo at \url{https://equiv.cc} (\cref{sec:equiv_class}).\looseness=-1
\item We develop an efficient algorithm to recover the equivalence class from data, which is, to our knowledge, the first structural-assumption-free method for latent-variable causal discovery (\cref{sec:algorithm}).
\end{enumerate}

\vspace{-0.4em}
\section{Problem Setup}
\label{sec:problem_setup}

\vspace{-0.8em}

In this section, we lay the groundwork for our study. In~\cref{subsec:preliminaries}, we define the notion of distributional equivalence in linear non-Gaussian latent-variable causal models. Then in~\cref{subsec:irreducibility}, we introduce the idea of irreducibility to rule out trivial cases, clearing the way for the main results to come.

\vspace{-0.7em}

\subsection{Preliminaries for Linear non-Gaussian Models}\label{subsec:preliminaries}

\textbf{Notations on matrices.} \ \ 
For a matrix $M$, we let $M_{i,j}$ be its $(i,j)$-th entry. For two index sets $A, B$, we let $M_{A,B} = (M_{a,b})_{a\in A, b\in B}$ be the submatrix of $M$ with rows indexed by $A$ and columns indexed by $B$. We let $M_{A,:}$ be the rows in $M$ indexed by $A$, and similarly $M_{:,B}$ for the columns. For a finite set $A$, we denote its cardinality by $|A|$. We denote by $\operatorname{Scale}(d)$ the set of all $d \times d$ diagonal matrices with nonzero diagonal entries, and by $\operatorname{Perm}(d)$ the set of all $d \times d$ permutation matrices. For a permutation $\pi: V \to V$ on a ground set $V$, we denote $\pi(F) \coloneqq \{ \pi(i) : i \in F \}$ for any set $F \subseteq V$, and extend this notation to families of sets by $\pi(\mathcal{F}) \coloneqq \{ \pi(F) : F \in \mathcal{F} \}$ for $\mathcal{F} \subset 2^{V}$.

\textbf{Notations on graphs.} \ \ 
Throughout, by a \emph{digraph} we refer to a directed graph that may contain cycles but no self-loops (edges from a vertex to itself). In a digraph $\Gcal$, let $V(\Gcal)$ be its vertex set. For vertices $a, b$, we say $a$ is a \textit{parent} of $b$ and $b$ is a \textit{child} of $a$, denoted by $a\in \pa_\Gcal(b)$ and $b\in \ch_\Gcal(a)$, when $a\rightarrow b$ is an edge in $\Gcal$, written $a\rightarrow b \in \Gcal$; $a$ is an \textit{ancestor} of $b$ and $b$ is a \textit{descendant} of $a$, denoted by $a\in \anc_\Gcal(b)$ and $b\in \des_\Gcal(a)$, when $a=b$ or there is a directed path $a \rightarrow \cdots \rightarrow b$ in $\Gcal$. These notations extend to sets: e.g., for a vertex set $A$, $\anc_\Gcal(A) \coloneqq \bigcup_{a \in A} \anc_\Gcal(a)$.

\textbf{Linear non-Gaussian (LiNG) causal models.} \ \ 
We consider a \textit{linear non-Gaussian model} associated with a digraph $\Gcal$, in which random variables $V = (V_1, \cdots, V_{|V|})^\top$, corresponding to the vertices of $\Gcal$, are generated according to the structural equation:
\begin{equation}\label{eq:linear_sem}
    V \;=\; B V+E,
\end{equation}
where $E = (E_1, \cdots, E_{|V|})^\top $ consists of mutually independent, non-constant, non-Gaussian exogenous noise terms. The matrix $B \in \mathcal{B}(\Gcal)$ is a weighted \textit{adjacency matrix} (whose entries represent \textit{direct causal effects}) that follows $\Gcal$, where $\mathcal{B}(\Gcal)$, all adjacency matrices that \textit{follow} $\Gcal$, is defined as:
\begin{equation}\label{eq:all_adjacencies}
    \mathcal{B}(\Gcal) \;\coloneqq\; \{B \in \mathbb{R}^{|V| \times |V|}:  \ B_{V_j,V_i} \neq 0  \implies  V_i \rightarrow V_j \in \Gcal \}. %
\end{equation}
Assuming $I - B$ is invertible, solving for~\cref{eq:linear_sem} gives an equivalent mixing form:
\begin{equation}\label{eq:linear_sem_mixing}
    V \;=\; (I - B)^{-1} E \;\eqqcolon\; AE, %
\end{equation}
where $A$ is called the weighted \textit{mixing matrix}. The entry $A_{V_j,V_i}$ represents the \textit{total causal effect} from $V_i$ to $V_j$. All mixing matrices that follow $\Gcal$, denoted by $\Acal(\Gcal)$, is defined as:
\begin{equation}\label{eq:all_mixings}
    \Acal(\Gcal) \;\coloneqq\; \{(I - B)^{-1}: \ B \in \mathcal{B}(\Gcal), \  I - B \text{ invertible}\}.
\end{equation}

\textbf{Latent-variable LiNG models.} \ \ Let the vertices $V$ of a digraph $\Gcal$ be partitioned as $V = L \cup X$, where $L$ denotes \textit{latent} (unobserved) variables and $X$ denotes \textit{observed} variables. A \textit{latent-variable model} is specified by the tuple $(\Gcal, X)$, with latent variables $L$ omitted when clear from context.

Given a full mixing matrix $A \in \Acal(\Gcal)$, the submatrix $A_{X,:} \in \mathbb{R}^{|X| \times |V|}$ maps exogenous noise terms to the observed variables. The collection of such wide rectangular mixing matrices is defined as:
\begin{equation}\label{eq:all_mixings_observed}
\Acal(\Gcal, X) \;\coloneqq\; \{ A_{X,:} : \ A \in \Acal(\Gcal) \}.
\end{equation}
Accordingly, the induced \textit{observed distribution set} of $\Gcal$ on $X$, that is, the set of all distributions over $X$ that can arise from a LiNG model over $(\Gcal, X)$, denoted $\Pcal(\Gcal, X)$, is given by:
\begin{equation}\label{eq:all_distributions_observed}
\Pcal(\Gcal, X) \;\coloneqq\; \{ p(X) : \ X = A E,\ A \in \Acal(\Gcal, X),\ E \in \operatorname{NG}(|V|) \},
\end{equation}
where $p(X)$ denotes the probability distribution of the random vector $X$, and $\operatorname{NG}(d)$ denotes the set of all $d$-dim random vectors with mutually independent, non-constant, and non-Gaussian components. \looseness=-1

We are now ready to formalize the central notion of this work: distributional equivalence.
\begin{definition}[\textbf{Distributional equivalence}]\label{def:distribution_equivalence}
    Let $\Gcal$ and $\Hcal$ be two digraphs with possibly different vertices, and $X \subseteq V(\Gcal) \cap V(\Hcal)$ be the shared observed variables. We say $\Gcal$ and $\Hcal$ are \textit{distributionally equivalent} (or for short, \textit{equivalent}) on $X$, denoted by $\Gcal \overset{X}{\sim} \Hcal$, when $\Pcal(\Gcal, X) = \Pcal(\Hcal, X)$.
\end{definition}

The equivalence (\cref{def:distribution_equivalence}) captures when two models yield identical observed distribution set, i.e., observationally indistinguishable. With this notion in place, next we clean up some trivialities.

\vspace{-0.3em}

\subsection{Irreducibility: To First Rule Out Trivial Cases of Equivalence}\label{subsec:irreducibility}

\vspace{-0.3em}

To study identifiability, let us first see what is inherently non-identifiable. For instance, one can freely add latent vertices that are not ancestors of any observed variables $X$ to a digraph $\Gcal$ without affecting $\Pcal(\Gcal, X)$, yielding trivially equivalent models. Identifying those latents is both impossible and meaningless. To rule out such trivialities, we introduce the notion of \textit{irreducibility}.

\vspace{0.2em}

\begin{definition}[\textbf{Irreducibility}]\label{def:irreducibility}
    We say a latent-variable model $(\Gcal, X)$ is \textit{irreducible}, when there exists no digraph $\Hcal$ with $|V(\Hcal)| < |V(\Gcal)|$ such that $\Gcal \overset{X}{\sim} \Hcal$.
\end{definition}

\vspace{-0.2em}

Irreducibility captures when an observed distribution set cannot arise from any other model with fewer latent variables. We now present a simple graphical condition for this property.

\begin{restatable}[\textbf{Graphical condition for irreducibility}]{proposition}{THMCRITERIAIRREDUCIBILITY}\label{thm:criteria_irreducibility}
    A model $(\Gcal, X)$ is irreducible, if and only if for each non-empty set $\bm{l} \subseteq L$, $|\ch_\Gcal(\bm{l}) \setminus \bm{l}| \geq 2$, i.e., it has more than one child outside.\looseness=-1
\end{restatable}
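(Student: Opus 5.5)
The plan is to prove the two directions separately. Both rest on one fact from overcomplete ICA identifiability (Eriksson--Koivunen): for $X = A_{X,:}E$ with $E \in \operatorname{NG}(|V|)$, the distribution $p(X)$ determines the nonzero columns of $A_{X,:}$ up to scaling and permutation, except that mutually proportional columns are merged into a single column. So the number $r$ of pairwise non-proportional nonzero columns of $A_{X,:}$ is a function of $p(X)$.

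\emph{Sufficiency (the condition implies irreducibility).} I will show that for generic $B \in \mathcal{B}(\Gcal)$ the $|V|$ columns of $A_{X,:}$ are nonzero and pairwise non-proportional. Granting this, take $p \in \Pcal(\Gcal,X)$ generated by such an $A$. Then $r=|V|$. Any $\Hcal$ with $p \in \Pcal(\Hcal,X)$ has at most $|V(\Hcal)|$ columns, so $|V(\Hcal)| \ge |V(\Gcal)|$ and $(\Gcal,X)$ is irreducible.

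\begin{itemize}
\item \emph{Nonzero columns.} If some latent $u$ were not an ancestor of $X$, take $\bm{l}$ to be the set of all non-ancestors of $X$. This set is closed under taking children, so $\ch_\Gcal(\bm{l})\setminus \bm{l} = \emptyset$, contradicting the hypothesis. Hence every vertex is an ancestor of $X$, and generically its column is nonzero.
\item \emph{Non-proportional columns.} For two vertices $u,v$, generic proportionality of the columns means $\rank A_{X,\{u,v\}} \le 1$ for all parameters. I would use the path-rank result for $(I-B)^{-1}$: by a Lindstr\"om--Gessel--Viennot argument (extended to cycles), the generic rank of $A_{X,\{u,v\}}$ equals the maximum number of vertex-disjoint directed paths from $\{u,v\}$ to $X$. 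If this number is $1$, Menger's theorem gives a single vertex $c$ meeting every path from $\{u,v\}$ to $X$.
\item Let $\bm{l}$ be the set of vertices all of whose directed paths to $X$ pass through $c$, excluding $c$ itself. This set contains at least one of $u,v$, and it is contained in $L$, since an observed vertex reaches $X$ trivially. Every edge leaving $\bm{l}$ goes to $c$, because any other child would give a path to $X$ avoiding $c$. So $|\ch(\bm{l})\setminus \bm{l}| \le 1$, again a contradiction. The same path-rank argument also covers the case $c \in \{u,v\}$.
\end{itemize}

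\emph{Necessity (irreducibility implies the condition).} Suppose some nonempty $\bm{l}\subseteq L$ has at most one child outside. I will build an equivalent $\Hcal$ with fewer vertices.

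\begin{itemize}
\item \emph{No outside child.} Then $\bm{l}$ has no edges to $V\setminus \bm{l}$. So $I-B$ is block triangular and the subsystem on $V\setminus \bm{l}$ is unaffected by $\bm{l}$. Deleting $\bm{l}$ gives $\Pcal(\Hcal,X)=\Pcal(\Gcal,X)$ directly.
\item \emph{Exactly one outside child $c$.} Let $\bm{l}'\subseteq \bm{l}$ be the vertices that reach $c$ inside $\bm{l}$. Vertices of $\bm{l}\setminus\bm{l}'$ have no path to $X$ and can be removed as in the first case. Solving the block for $\bm{l}'$ gives $\bm{l}' = (I-B_{\bm{l}'\bm{l}'})^{-1}(B_{\bm{l}',\mathrm{rest}}V_{\mathrm{rest}} + E_{\bm{l}'})$, which enters only $c$.
\item So $c$ receives new parents $P = \pa(\bm{l}')\setminus \bm{l}'$, and its noise becomes $E_c + w^\top E_{\bm{l}'}$. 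This noise is still non-constant and non-Gaussian by Cram\'er's decomposition theorem. A possible self-loop $c\to c$ (when $c \in P$) is absorbed by rescaling $c$'s equation.
\item Take $\Hcal$ to be $\Gcal$ with $\bm{l}'$ deleted and edges $p\to c$ added for $p\in P$. This gives $\Pcal(\Gcal,X)\subseteq \Pcal(\Hcal,X)$.
\item For the reverse inclusion, route each $p \in P$ through one fixed path into $c$ inside $\bm{l}'$. The entry-edge weight $p\to l_i$ is free for each $p$, so any vector of coefficients from $P$ to $c$ is attainable.
\end{itemize}

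\emph{Main obstacle.} The hard step is the noise in the reverse inclusion. In $\Gcal$, the noise of $c$ is forced to be a nontrivial sum $E_c + w^\top E_{\bm{l}'}$ of independent non-Gaussian variables. An arbitrary non-Gaussian $E'_c$ in $\Hcal$ need not decompose this way; indecomposable laws exist. My first attempt is to make $w^\top E_{\bm{l}'}$ vanish in the limit while keeping the routing coefficients fixed, by compensating small weights inside $\bm{l}'$ with large entry weights. This would give equality only up to closure, so it is not enough on its own. If that fails, I would choose $\Hcal$ differently: keep one vertex $l_0\in\bm{l}'$ adjacent to $c$ and contract the rest, so that $\Hcal$'s noise at $c$ is again a two-term sum and the noise structures match exactly. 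This still reduces the vertex count whenever $|\bm{l}'|\ge 2$. The singleton case $\bm{l}'=\{l_0\}$ would then be treated via the mixing-matrix description $\Acal(\Gcal,X)$, arguing that the column of $l_0$ generically merges with that of $c$.
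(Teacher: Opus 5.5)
Your sufficiency half is correct and follows the paper's route. Generic columns of $A_{X,:}$ are nonzero and pairwise non-proportional by path ranks and Menger, and OICA identifiability then forces $|V(\Hcal)|\ge|V(\Gcal)|$. Your extraction of the violating set $\bm{l}$ from the cut vertex $c$ is more explicit than the paper's case split, and your no-outside-child case is fine. The gap is the one-outside-child case, exactly at the obstacle you flagged, and neither of your repairs works.

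\textbf{The case $|\bm{l}'|\ge 2$.} Contracting $\bm{l}'$ to one vertex $l_0$ does not make the noise structures match, because $w^\top E_{\bm{l}'}$ is itself a sum of independent terms, one for each latent on a routing path that must carry nonzero weight. For example, with $p\to l_1\to l_2\to c$, a nonzero $p$-to-$c$ effect forces both $E_{l_1}$ and $E_{l_2}$ into the source attached to $c$'s column, so that source has at least four support points. After contraction, the graph allows e.g.\ $E'_c+E'_{l_0}$ with two Rademacher summands, which has three support points. The mismatch simply reappears one level down.

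\textbf{The singleton case.} This cannot be settled through $\Acal(\Gcal,X)$. Merging proportional columns gives equality of mixing-matrix sets up to column scaling and permutation. That is strictly weaker than $\Pcal(\Gcal,X)=\Pcal(\Hcal,X)$, because in $\Gcal$ the merged source is forced to be decomposable.

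\textbf{The literal ``only if'' direction is false.} No argument can close this gap, as the left example of Figure~\ref{fig:example_reduction_operation} shows: $L_1\to L_2\to X_2$ and $L_1\to X_1$. Here $X_1=aE_{L_1}+E_{X_1}$ and $X_2=cbE_{L_1}+cE_{L_2}+E_{X_2}$.
\begin{itemize}
\item A law in $\Pcal(\Gcal,X)$ with $X_1,X_2$ dependent needs $a,b,c\neq0$. Its reduced representation then has columns $(a,cb)$, $(1,0)$, $(0,1)$, and the last carries $cE_{L_2}+E_{X_2}$, which has at least three support points.
\item Any $\Hcal$ with at most three vertices that generates a generic such law must, by identifiability, have three vertices with pairwise non-collinear columns. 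So a single vertex noise sits on the direction $(0,1)$.
\item Making that noise two-point, with all other parameters fixed, gives some $p^\ast\in\Pcal(\Hcal,X)$, still with $X_1,X_2$ dependent.
\item Suppose $p^\ast\in\Pcal(\Gcal,X)$. Then the two representations share column directions. With three columns in $\mathbb{R}^2$, the log-characteristic functions obey a local Cauchy equation, so the two $(0,1)$-sources have characteristic functions agreeing near $0$ up to a shift.
\item Agreement near $0$ with an entire characteristic function forces equal moments, and a two-point law is determined by its moments. Hence $cE_{L_2}+E_{X_2}$ would be two-point, a contradiction.
\end{itemize}
So this model is irreducible in the sense of Definition~\ref{def:irreducibility}, although $\{L_2\}$ has only one child.

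\textbf{What does hold.} The necessity direction holds for a coarser notion, in which models are compared through their mixing-matrix sets up to column scaling, permutation, and merging of proportional columns. Your vanishing-noise limit only shows that $\Pcal(\Hcal,X)$ lies in the weak closure of $\Pcal(\Gcal,X)$. The paper's proof silently uses the coarser notion when it asserts that irreducibility ``is equivalent to that there are no proportional columns in the mixing matrix.'' If you adopt that notion explicitly, your contraction (delete $\bm{l}$, add $p\to c$ for $p\in P$) finishes the proof, and the noise question never arises.
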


\update{Note that when $\Gcal$ is acyclic, it suffices to check each single $L_i\in L$, consistent with the condition previously derived by~\citet{salehkaleybar2020learning}.} The proof of~\cref{thm:criteria_irreducibility}, along with others, is provided in~\cref{app:proofs}. The key idea here is that any violation of the condition leads to proportional columns in mixing matrices $\Acal(\Gcal, X)$, so that the observed distributions can be equivalently generated by a smaller graph with these columns merged to one. Conversely, identifiability results of OICA~\citep{eriksson2004identifiability} suggest that as long as in the absence of such proportional columns, the mixing matrix is identifiable up to column scaling and permutation, so the number of latents is identifiable.\looseness=-1

We next provide an explicit procedure for reducing an arbitrary model to its irreducible form.

\begin{restatable}[\textbf{Procedure of reduction to the irreducible form}]{proposition}{THMOPERATIONREDUCTION}\label{thm:operation_reduction}
Given any latent-variable model $(\Gcal, X)$, the following procedure outputs a digraph $\Hcal$ such that $\Hcal \overset{X}{\sim} \Gcal$ and $(\Hcal, X)$ is irreducible.
\begin{itemize}[itemsep=0pt,parsep=1pt, topsep=0pt, left=30pt]
    \item[Step 1.] Initialize $\Hcal$ as $\Gcal$.
    \item[Step 2.] Remove vertices $V(\Hcal) \setminus \anc_\Hcal(X)$ from $\Hcal$, i.e., remove latents who have no effects on $X$.
    \item[Step 3.] Identify the \underline{m}aximal \underline{r}edundant \underline{l}atents in the remaining latent vertices:
    \begin{equation}
        \operatorname{mrl} \coloneqq \{\bm{l} \subseteq V(\Hcal) \backslash X : \   |\bm{l}|>0, \ |\ch_\Hcal(\bm{l}) \backslash \bm{l}| < 2, \text{ and } \forall \bm{l}' \supsetneq \bm{l}, \ |\ch_\Hcal(\bm{l'}) \backslash \bm{l'}| \geq 2\}.
    \end{equation}
    \item[Step 4.] For each $\bm{l} \in \operatorname{mrl}$, let $c$ be the exact child in $\ch_\Hcal(\bm{l}) \backslash \bm{l}$; for each parent $p \in \pa_\Hcal(\bm{l}) \backslash \bm{l} \backslash \{c\}$, add an edge $p \rightarrow c$ into $\Hcal$ if not already present; finally, remove $\bm{l}$ vertices from $\Hcal$.
\end{itemize}
\end{restatable}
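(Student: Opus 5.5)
We will prove the two claims of \cref{thm:operation_reduction} separately: that each step preserves $\Pcal(\cdot,X)$, and that the output meets the graphical condition of \cref{thm:criteria_irreducibility}, which then gives irreducibility directly.

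\textbf{Step 2 preserves equivalence.} Order the vertices so that $\anc_\Gcal(X)$ comes first. Then $B$ is block lower triangular, because no edge enters $\anc_\Gcal(X)$ from outside: a parent of an ancestor of $X$ is itself an ancestor of $X$. Hence $(I-B)^{-1}$ restricted to rows $\anc_\Gcal(X)$ equals $[(I-B_{\anc,\anc})^{-1},\,0]$. So $A_{X,:}$ is the corresponding matrix of the subgraph with zero columns appended. Zero columns only contribute independent noise that does not affect $X$. Conversely, any model on the subgraph extends by arbitrary noise on the removed vertices. Invertibility of $I-B$ factorizes over the blocks, so the parameter sets match.

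\textbf{Step 4 preserves equivalence.} Fix $\bm{l}\in\operatorname{mrl}$ with its unique outside child $c$.
\begin{itemize}
    \item \emph{Existence of $c$.} After Step 2 every latent is an ancestor of $X$, so $|\ch(\bm{l})\setminus\bm{l}|\ge1$ and $c$ exists.
    \item \emph{Reduction to a single equation.} All effects leaving $\bm{l}$ pass through $c$. Solving the block equations $V_{\bm{l}} = B_{\bm{l},\bm{l}}V_{\bm{l}} + B_{\bm{l},P}V_P + E_{\bm{l}}$ gives $V_{\bm{l}} = (I-B_{\bm{l},\bm{l}})^{-1}(B_{\bm{l},P}V_P+E_{\bm{l}})$, where $P$ is the set of outside parents. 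Substituting into $c$'s equation yields $V_c = \sum_{p} b'_{cp}V_p + (\text{other parents}) + w^\top E_{\bm{l}} + E_c$.
    \item \emph{Collapsing the noise.} The term $w^\top E_{\bm{l}}+E_c$ is a sum of independent terms. We want to replace it by a single non-Gaussian noise $E'_c$, which is fine when it is non-Gaussian. The subtlety is that by Cramér's theorem a sum of independent non-Gaussians is never Gaussian; it is non-constant since $E_c$ is. So the observed law is realized by $\Hcal$.
    \item \emph{Edge case $p=c$.} If $c$ is itself a parent of $\bm{l}$, it yields a self-loop term $b'_{cc}V_c$. We absorb it by dividing through by $1-b'_{cc}$ (generically nonzero; otherwise $I-B$ would be singular), rescaling the other coefficients and the noise.
    \item \emph{Converse direction.} Given $\Hcal$'s parameters, realize the same law in $\Gcal$. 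Take any one latent chain in $\bm{l}$ carrying all $p\to c$ effects: pick weights in $\bm{l}$ so the induced coefficient equals the target $p\to c$ weight. Then split the noise $E'_c$, which also requires representing a non-Gaussian as a sum of independent non-Gaussians. This is always possible in distribution by choosing $E_c$ as $E'_c$ and making the latent contribution vanish in the limit. Since we need exact equality, I would rather set the $\bm{l}$ noise coefficients via scaling and choose $E_c$ suitably.
\end{itemize}
The converse direction is the main obstacle: showing that $\Acal(\Hcal,X)$ lifts exactly into $\Acal(\Gcal,X)$ modulo merging of proportional columns, with genericity handled by restricting to parameter values where the lift exists. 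The fallback argument is an OICA-style one. The observed distribution set depends only on the column-direction sets of $A_{X,:}$ up to merging proportional columns. Columns of $\bm{l}$ are proportional to column $c$, because their effects factor through $c$. So merging them yields exactly $\Hcal$'s column span structure.

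\textbf{Irreducibility of the output.} We also need that the members of $\operatorname{mrl}$ are disjoint, or that processing them sequentially is consistent. Maximality gives this: if two members overlap, their union also has at most one outside child, contradicting maximality.

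For the final check, suppose some nonempty latent set $\bm{l}'$ of $\Hcal$ had $|\ch_\Hcal(\bm{l}')\setminus\bm{l}'|<2$. Then, tracing back to $\Gcal$, the set $\bm{l}'$ together with the absorbed sets would form a redundant set in the Step-3 graph. Removed sets rewire only onto their child, so children outside are preserved. That set would strictly contain some member of $\operatorname{mrl}$, or be disjoint from all of them, contradicting maximality. Hence $\Hcal$ satisfies \cref{thm:criteria_irreducibility}'s condition and is irreducible.
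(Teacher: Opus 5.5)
\textbf{Main gap: the inclusion $\Pcal(\Hcal,X)\subseteq\Pcal(\Gcal,X)$.} You leave this direction open, and it cannot be closed the way you try. In any $\Gcal$-realization, the noise entering $X$ through $c$ is $E_c+w^\top E_{\bm l}$ with $w=B_{c,\bm l}(I-B_{\bm l,\bm l})^{-1}$. This $w$ is nonzero whenever some \emph{added} edge $p\to c$ must carry a nonzero weight, so the merged source is always a sum of at least two independent non-constant variables. A limit argument gives only weak closure, ``choosing $E_c$ suitably'' fails for indecomposable laws, and your fallback (that $\Pcal(\cdot,X)$ depends only on column directions up to merging) is false.

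Concretely, take the left example of \cref{fig:example_reduction_operation}, and let $\Hcal$ generate $X_1=R_1+R_2$, $X_2=R_1+R_3$ with $R_i$ i.i.d.\ Rademacher. Suppose $\Gcal$ generated the same law via $X_1=\beta_2E_{L_1}+E_{X_1}$ and $X_2=\beta_1\beta_3E_{L_1}+\beta_3E_{L_2}+E_{X_2}$.
\begin{itemize}
\item The mixed cumulants of orders $2$ and $4$ force $\beta_2=\beta_1\beta_3\neq0$.
\item The mixed cumulants of every order $n\ge2$ then give $\kappa_n(\beta_2E_{L_1})=\kappa_n(R_1)$.
\item Everything is bounded, hence moment-determinate, so $\beta_2E_{L_1}\overset{d}{=}R_1+\mu$.
\item The marginal of $X_2$ then makes $\beta_3E_{L_2}+E_{X_2}$ a two-point law, which no sum of two independent non-constant variables can be.
\end{itemize}
So under the literal definition of $\Pcal$, the equality fails.

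The paper's proof never constructs a lift. It identifies Steps 3--4 with merging the maximal classes of proportional columns of $A_{X,:}$. In other words, it argues at the level of mixing matrices up to column scaling and permutation (as in \cref{lem:equivalence_as_closure_columns}), tacitly treating a merged source as a single non-Gaussian source. Your coefficient-matching lift is exactly the ingredient needed at that level: route each added $p\to c$ through a path inside $\bm l$, set $B_{\bm l,c}=0$, and use $\det(I-B)=\det(I-B_{\bm l,\bm l})\det(I-B^{\Hcal})$. This shows every matrix in $\Acal(\Hcal,X)$ is the column-merge of one in $\Acal(\Gcal,X)$. You should state and prove the claim in that form rather than aim for an exact distributional lift.

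\textbf{Smaller points.}

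(i) Your elimination needs $I-B_{\bm l,\bm l}$ invertible and $1-b'_{cc}\neq0$, and the justification ``otherwise $I-B$ would be singular'' is false. Take $\bm l=\{l\}$, edges $l\to c$, $c\to l$, $c\to y$, $y\to c$ with weights $d,e,g,h$, and $X=\{c,y\}$. Then $\det(I-B)=1-de-gh$, so $de=1$, $gh\neq0$ is admissible, yet $1-b'_{cc}=1-de=0$. That law is still produced by $\Hcal$, but only with $c\to y$ set to $0$ and $y\to c$ set to $1/g$, i.e.\ up to a column permutation. Such null-set points need a matrix-level argument. A useful starting fact is that every path from $\bm l\cup\{c\}$ to $X$ meets $c$, so all $2\times2$ minors of $A_{X,\bm l\cup\{c\}}$ vanish identically.

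(ii) Your overlap argument silently uses Step 2. If $\bm m_1\cap\bm m_2\neq\varnothing$, $c_{\bm m_1}\neq c_{\bm m_2}$, and neither child lies in the other set, then $\bm m_1\cap\bm m_2$ has no outside child, which contradicts ancestrality.

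(iii) Disjointness of the members of $\operatorname{mrl}$ is not enough. You also need, as the paper's proof records explicitly, that the child and all outside parents of each member lie outside $\bigcup\operatorname{mrl}$. For instance, $c_{\bm m_1}\in\bm m_2$ would make $\bm m_1\cup\bm m_2$ redundant. Without this fact, an edge added in Step 4 could be deleted by a later removal, and your ``rewire only onto their child'' step is unjustified.

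With that fact in hand, your irreducibility argument is sound and is a genuinely graphical alternative to the paper's ``no proportional columns remain''. Enlarge a redundant surviving set by every member of $\operatorname{mrl}$ it points into. This gives a redundant set of the Step-3 graph, which must lie inside some member of $\operatorname{mrl}$ even though it contains surviving vertices — a contradiction.
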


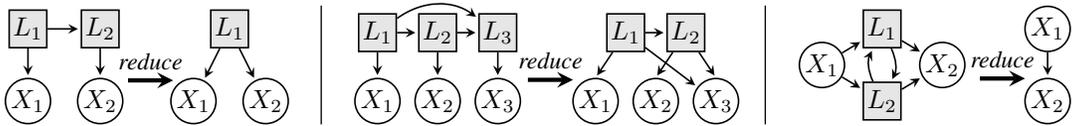
\begin{figure}[t!]
\centering
\makebox[\textwidth]{
\begin{minipage}[b]{0.27\textwidth}
\centering
\begin{tikzpicture}[>=stealth, auto]
\node (pic1) [inner sep=0pt] at (0,0) {
  \begin{tikzpicture}[->,>=stealth,shorten >=1pt,semithick,scale=0.8]
    \tikzstyle{Lstyle} = [draw, rectangle, minimum size=0.5cm, fill=gray!20]
    \tikzstyle{Xstyle} = [draw, circle, minimum size=0.6cm]
    \node[Lstyle] (L1) at (0,1.2) {$L_1$};
    \node[Lstyle] (L2) at (1.2,1.2) {$L_2$};
    \node[Xstyle] (X1) at (0,0) {$X_1$};
    \node[Xstyle] (X2) at (1.2, 0) {$X_2$};
    \draw[->] (L1) -- (L2);
    \draw[->] (L1) -- (X1);
    \draw[->] (L2) -- (X2);
  \end{tikzpicture}
};
\node (pic2) [inner sep=0pt] at (2.2,0) {
  \begin{tikzpicture}[->,>=stealth,shorten >=1pt,semithick,scale=0.8]
    \tikzstyle{Lstyle} = [draw, rectangle, minimum size=0.5cm, fill=gray!20]
    \tikzstyle{Xstyle} = [draw, circle, minimum size=0.6cm]
    \node[Lstyle] (L1) at (0.6,1.2) {$L_1$};
    \node[Xstyle] (X1) at (0,0) {$X_1$};
    \node[Xstyle] (X2) at (1.2, 0) {$X_2$};
    \draw[->] (L1) -- (X1);
    \draw[->] (L1) -- (X2);
  \end{tikzpicture}
};
\draw[->, ultra thick] (0.85, -0.18) -- node[above]{\footnotesize \textit{reduce}} (1.45, -0.18);
\end{tikzpicture}
\end{minipage}
\hfill
\begin{minipage}[b]{0.05\textwidth}
\centering
\rule{0.5pt}{10ex}
\end{minipage}
\hfill
\begin{minipage}[b]{0.36\textwidth}
\centering
\begin{tikzpicture}[>=stealth, auto, baseline=(current bounding box.south)]
\node (pic1) [inner sep=0pt, anchor=south] at (0,0) {
  \begin{tikzpicture}[->,>=stealth,shorten >=1pt,semithick,scale=0.8, baseline=(current bounding box.south)]
    \tikzstyle{Lstyle} = [draw, rectangle, minimum size=0.5cm, fill=gray!20]
    \tikzstyle{Xstyle} = [draw, circle, minimum size=0.6cm]
    \node[Lstyle] (L1) at (0,1.2) {$L_1$};
    \node[Lstyle] (L2) at (1,1.2) {$L_2$};
    \node[Lstyle] (L3) at (2,1.2) {$L_3$};
    \node[Xstyle] (X1) at (0,0) {$X_1$};
    \node[Xstyle] (X2) at (1, 0) {$X_2$};
    \node[Xstyle] (X3) at (2, 0) {$X_3$};
    \draw[->] (L1) -- (L2);
    \draw[->] (L2) -- (L3);
    \draw[bend left=36] (L1) to (L3);
    \draw[->] (L1) -- (X1);
    \draw[->] (L2) -- (X2);
    \draw[->] (L3) -- (X3);
  \end{tikzpicture}
};
\node (pic2) [inner sep=0pt, anchor=south] at (2.9,0) {
  \begin{tikzpicture}[->,>=stealth,shorten >=1pt,semithick,scale=0.8, baseline=(current bounding box.south)]
    \tikzstyle{Lstyle} = [draw, rectangle, minimum size=0.5cm, fill=gray!20]
    \tikzstyle{Xstyle} = [draw, circle, minimum size=0.6cm]
    \node[Lstyle] (L1) at (0.5,1.2) {$L_1$};
    \node[Lstyle] (L2) at (1.5,1.2) {$L_2$};
    \node[Xstyle] (X1) at (0,0) {$X_1$};
    \node[Xstyle] (X2) at (1, 0) {$X_2$};
    \node[Xstyle] (X3) at (2, 0) {$X_3$};
    \draw[->] (L1) -- (L2);
    \draw[->] (L1) -- (X1.north);
    \draw[->] (L2) -- (X2.north);
    \draw[->] (L1) -- (X3);
    \draw[->] (L2) -- (X3.north);
  \end{tikzpicture}
};
\draw[->, ultra thick] (1.2, 0.6) -- node[above]{\footnotesize \textit{reduce}} (1.8, 0.6);
\end{tikzpicture}
\end{minipage}
\hfill
\begin{minipage}[b]{0.05\textwidth}
\centering
\rule{0.5pt}{10ex}
\end{minipage}
\hfill
\begin{minipage}[b]{0.26\textwidth}
\centering
\begin{tikzpicture}[>=stealth, auto]
\node (pic1) [inner sep=0pt] at (-0.2,0) {
  \begin{tikzpicture}[->,>=stealth,shorten >=1pt,semithick,scale=0.8]
    \tikzstyle{Lstyle} = [draw, rectangle, minimum size=0.5cm, fill=gray!20]
    \tikzstyle{Xstyle} = [draw, circle, minimum size=0.6cm]
    \node[Lstyle] (L1) at (1.2, 0.6) {$L_1$};
    \node[Lstyle] (L2) at (1.2, -0.6) {$L_2$};
    \node[Xstyle] (X1) at (0.2,0) {$X_1$};
    \node[Xstyle] (X2) at (2.2, 0) {$X_2$};
    \draw[bend left=25] (L1) to (L2);
    \draw[bend left=25] (L2) to (L1);
    \draw[->] (X1) -- (L1);
    \draw[->] (X1) -- (L2);
    \draw[->] (L1) -- (X2);
    \draw[->] (L2) -- (X2);
  \end{tikzpicture}
};
\node (pic2) [inner sep=0pt] at (2,0) {
  \begin{tikzpicture}[->,>=stealth,shorten >=1pt,semithick,scale=0.8]
    \tikzstyle{Lstyle} = [draw, rectangle, minimum size=0.5cm, fill=gray!20]
    \tikzstyle{Xstyle} = [draw, circle, minimum size=0.6cm]
    \node[Xstyle] (X1) at (0,1.2) {$X_1$};
    \node[Xstyle] (X2) at (0, 0) {$X_2$};
    \draw[->] (X1) -- (X2);
  \end{tikzpicture}
};
\draw[->, ultra thick] (1.1, -0.18) -- node[above]{\footnotesize \textit{reduce}} (1.7, -0.18);
\end{tikzpicture}
\end{minipage}
}
\caption{Examples of reducing models to their irreducible forms via the procedure in~\cref{thm:operation_reduction}. Throughout, white circles denote observed variables and grey squares denote latent variables.\looseness=-1
\vspace{-0.8em}
} %
\label{fig:example_reduction_operation}
\end{figure}

Illustrative examples of this reduction are shown in~\cref{fig:example_reduction_operation}. This reduction lets us, without loss of generality, restrict attention to irreducible models for the remainder, as arbitrary models are equivalent if and only if their irreducible forms are equivalent. Note that irreducibility is not a structural assumption as discussed in~\cref{sec:introduction}, but rather a canonicalization to eliminate trivialities. As a side note, applying the reduction in~\cref{thm:operation_reduction} does not increase the number of edges or cycles. %

\vspace{-0.2em}

\section{Developing Graphical Tools for Characterizing Equivalence}
\label{sec:algebraic_to_graphical}

\vspace{-0.7em}

In the previous section, we defined distributional equivalence and irreducibility to rule out trivial unidentifiable cases, so we can focus solely on irreducible models in what follows. Then, when are two irreducible models equivalent? In this section, we tackle this question step by step.

Specifically, in~\cref{subsec:mixing_matrix_to_path_ranks} we first show that distributional equivalence reduces to an algebraic condition on mixing matrices, and further to a graphical condition involving a concept familiar to the community: \textit{path ranks}, given by max-flow-min-cuts in digraphs. Although familiar, path ranks are difficult to work with due to their global, non-local nature, as we illustrate in~\cref{subsec:hardness_path_ranks}. To overcome this, we introduce a new tool: \textit{edge ranks}, a local, edge-level constraint that complements path ranks and is easier to manipulate. This new tool, developed in \cref{subsec:edge_ranks}, not only enables our final result to come in the next section, but also enriches the broader rank-based picture beyond our specific setting.

\vspace{-0.5em}

\subsection{Equivalence via Path Ranks}\label{subsec:mixing_matrix_to_path_ranks}

\vspace{-0.45em}

We start by examining the algebra behind distributional equivalence. By~\cref{def:irreducibility}, all equivalent irreducible models must have the same number of latents. This follows from OICA, which guarantees exact recovery of the number of (nontrivial) latent variables. Hence, in what follows, when considering the equivalence of two irreducible models $(\Gcal, X)$ and $(\Hcal, X)$, \update{we can, without loss of generality, denote their latent variables by a same set of labels, so that $V(\Gcal) = V(\Hcal) = X \cup L$.} %

We then observe that distributional equivalence can be rephrased in terms of the mixing matrices: two models are equivalent if and only if for every mixing matrix one model can generate, the other can also generate a version of it up to column scaling and permutation, and vice versa, due to the scaling and permutation closedness of exogenous noise terms. Formally,

\begin{restatable}[\textbf{Equivalence via mixing matrices closure}]{lemma}{LEMEQUIVALENCEASCLOSURECOLUMNS}\label{lem:equivalence_as_closure_columns}
    Two irreducible models are equivalent, written $\Gcal \overset{X}{\sim} \Hcal$, if and only if $\fillover{\Acal(\Gcal, X)} = \fillover{\Acal(\Hcal, X)}$, where for a set of matrices $\Acal \subseteq \mathbb{R}^{m \times d}$, we let:
    \begin{equation}\label{eq:closure_definition}
     \fillover{\Acal} \;\coloneqq\; \{APD: \ A \in \Acal, \ P \in \operatorname{Perm}(d), \ D \in \operatorname{Scale}(d)\},
    \end{equation}
    that is, the closure of $\Acal$ up to column scaling and permutation.
\end{restatable}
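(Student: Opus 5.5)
The plan is to prove the two directions separately, taking $V(\Gcal)=V(\Hcal)=X\cup L$ as justified by irreducibility, so that both models have $d\coloneqq|V|$ columns in their mixing matrices.

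\emph{Sufficiency} ($\Leftarrow$). Assume $\fillover{\Acal(\Gcal, X)} = \fillover{\Acal(\Hcal, X)}$ and take any $p\in\Pcal(\Gcal,X)$, realized as $X=AE$ with $A\in\Acal(\Gcal,X)$ and $E\in\operatorname{NG}(d)$.
\begin{itemize}[itemsep=0pt,parsep=1pt,topsep=0pt]
    \item The closure hypothesis gives $A=A'PD$ for some $A'\in\Acal(\Hcal,X)$, $P\in\operatorname{Perm}(d)$ and $D\in\operatorname{Scale}(d)$.
    \item Set $E'\coloneqq PDE$. Its entries are nonzero rescalings of a permutation of the entries of $E$, so they are still mutually independent, non-constant and non-Gaussian. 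Hence $E'\in\operatorname{NG}(d)$.
    \item Since $X=A'E'$, we get $p\in\Pcal(\Hcal,X)$.
\end{itemize}
Swapping the roles of $\Gcal$ and $\Hcal$ gives the reverse inclusion, so $\Pcal(\Gcal,X)=\Pcal(\Hcal,X)$.

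\emph{Necessity} ($\Rightarrow$). Assume $\Pcal(\Gcal,X)=\Pcal(\Hcal,X)$ and take $A\in\Acal(\Gcal,X)$. We must show $A\in\fillover{\Acal(\Hcal,X)}$.
\begin{itemize}[itemsep=0pt,parsep=1pt,topsep=0pt]
    \item Fix some $E\in\operatorname{NG}(d)$, for instance i.i.d.\ uniform entries.
    \item The law of $X=AE$ lies in $\Pcal(\Hcal,X)$, so $X\overset{d}{=}A'E'$ for some $A'\in\Acal(\Hcal,X)$ and $E'\in\operatorname{NG}(d)$.
    \item Apply the OICA identifiability theorem of \citet{eriksson2004identifiability}: if two linear representations of the same distribution have non-Gaussian independent sources and mixing matrices with no pair of proportional columns, then the mixing matrices agree up to column permutation and scaling.
    \item This yields $A=A'PD$, hence $A\in\fillover{\Acal(\Hcal,X)}$. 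Symmetry then gives equality of the closures.
\end{itemize}

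The main obstacle is verifying the OICA hypothesis for \emph{every} $A\in\Acal(\Gcal,X)$. Irreducibility does rule out proportional columns generically, which is the content of \cref{thm:criteria_irreducibility}. But particular parameter choices can still produce proportional or zero columns, for example when effects cancel along paths.

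The first attempt is a closure argument. Restrict to the generic set of $A$ with no proportional or zero columns, prove the inclusion there, and pass to degenerate $A$ by taking limits. This step is delicate: $\fillover{\Acal}$ is not obviously closed, because $I-B$ must stay invertible and the scalings must stay nonzero.

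If that fails, the fallback is to handle a degenerate $A$ directly.
\begin{itemize}[itemsep=0pt,parsep=1pt,topsep=0pt]
    \item Merge its proportional columns and drop its zero columns, obtaining a reduced $\tilde A$ whose OICA factorization is identifiable.
    \item Note that the corresponding distributions are still attained in $\Pcal(\Hcal,X)$, so identifiability forces any $\Hcal$-representation $A'$ of the same distribution to have the same column structure as $A$ (same proportional groups, same zero columns) up to permutation and scaling.
    \item Then split the merged columns back consistently on both sides, using that a sum of independent non-Gaussian terms is again non-Gaussian, and read off $A'$ with $A=A'PD$.
\end{itemize}
If this split-back bookkeeping becomes unwieldy, I would fall back on proving the lemma with the closures taken over generic parameters only. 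That version is the one actually used downstream, via path ranks, and it follows cleanly from the generic argument.
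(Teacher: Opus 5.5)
Your $\Leftarrow$ direction matches the paper's reasoning, and so does your $\Rightarrow$ direction for matrices whose columns are nonzero and pairwise non-proportional. The paper lists this lemma among results that ``follow immediately'', justified only by the closure of the noise class under scaling and permutation plus OICA identifiability. Even in that case one check is missing. \cref{lem:identifiability_ica} needs \emph{both} representations to be irreducible, and you only examine $A$. The fix is short. Suppose the $\Hcal$-side matrix $A'$ had zero or proportional columns. Merge each proportional group; the merged source is non-constant and, by Cram\'er's decomposition theorem, non-Gaussian. Discard the zero columns. This gives an irreducible representation with fewer than $d$ columns, contradicting part 1. Part 2, together with equal column counts, then gives $A=A'PD$.

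The degenerate case is a genuine gap, and neither of your repairs can close it. Let $\Gcal_a$ have edges $L\to X_1$ (weight $a$), $L\to X_2$ ($b$), $X_1\to X_2$ ($c$). Let $\Gcal_b$ have edges $L\to X_1$ ($a'$), $L\to X_2$ ($b'$), $X_1\to L$ ($e'$). Both are irreducible and are linked by admissible edge additions/deletions (\cref{lem:edge_flips}). Order columns as $(E_L,E_1,E_2)$ and set $s=(1-a'e')^{-1}$. The observed mixing matrices are then $\bigl(\begin{smallmatrix} a&1&0\\ b+ca&c&1\end{smallmatrix}\bigr)$ and $\bigl(\begin{smallmatrix} sa'&s&0\\ sb'&sb'e'&1\end{smallmatrix}\bigr)$. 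Every matrix of $\Gcal_b$, and every matrix of $\Gcal_a$ with $b\neq 0$, lies in the other model's closure. But at $b=0$ with $a,c\neq 0$, $A_a$ has two columns proportional to $(1,c)^\top$. For $\Gcal_b$ this forces $b'e'=c$ and $b'=ca'$, hence $a'e'=1$ and $I-B'$ singular. So the degenerate inclusion does not follow from the generic one, and no limiting argument can produce it.

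The key claim of your split-back plan also fails. If $E_2\overset{d}{=}b'E'_L+E'_2$ is decomposable, then $\Gcal_b$ with $a'=0$ and $b'e'=c$ reproduces the law of $A_aE$ through the columns $(0,b')^\top,(1,c)^\top,(0,1)^\top$. Identifiability pins down only the reduced matrix, not how many columns fall in each proportional class or which ones are zero.

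The same pair shows the degenerate case should not be pursued at all. Take $E_2$ Rademacher. $X_1$ and $X_2-cX_1$ are independent under this law. Writing both in $\Gcal_b$'s sources and applying Darmois--Skitovich forces $a'=0$ and $b'e'=c\neq 0$. Hence $E_2\overset{d}{=}b'E'_L+E'_2$ with two independent non-constant summands, which is impossible for a two-point law. Thus $\Pcal(\Gcal_a,X)\neq\Pcal(\Gcal_b,X)$ under the literal definitions, even though the paper's criteria declare the models equivalent. The paper tacitly works with generic parameters throughout, and its proof of \cref{thm:equivalence_as_path_ranks} invokes this lemma only at a generic $A$. Your fallback is therefore the right statement. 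Prove the generic version directly, including the check on $A'$ above, and drop the limiting and split-back plans.
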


Then, what are exactly these mixing matrices, namely, $\Acal(\Gcal, X)$? As defined in~\cref{eq:all_adjacencies,eq:linear_sem_mixing,eq:all_mixings,eq:all_mixings_observed}, it arises from a mapping over the free parameters in adjacency matrices. Concretely, each entry of the mixing matrix is a rational function: the numerator polynomial reflects ``total causal effects'' between variables, and the denominator polynomial accounts for ``global cycle discounts'', which is simply $1$ when the digraph is acyclic. In cyclic cases, there is a small pathological locus where denominators vanish, that is, where $I - B$ becomes singular and cycles ``cancel themselves.'' But as we will show in the proof, this does not affect our results. So for now, let us progress with the Zariski closure of $\Acal(\Gcal, X)$, an algebraic variety that can be defined by finitely many \textit{equality constraints}.

We now study these constraints. One fundamental class of them is the so-called \textit{rank constraints}, which admits a nice graphical interpretation in terms of \textit{max-flow-min-cut} in digraphs, defined below:

\begin{definition}[\textbf{Path ranks}]\label{def:path_ranks}
    In a digraph $\Gcal$, for two sets of vertices $Z, Y \subseteq V(\Gcal)$, the \emph{path rank} $\rho_\Gcal(Z, Y)$ is defined as the maximum number of vertex-disjoint directed paths from $Y$ to $Z$ in $\Gcal$. By~\citep{menger1927allgemeinen}, this max-flow quantity can also be defined by its min-cut version:
    \begin{equation}
        \rho_\Gcal(Z, Y) \coloneqq \min_{\bm{c} \subseteq V(\Gcal)}{\{|\bm{c}|: \bm{c} \text{'s removal from } \Gcal \text{ ensures no directed path from } Y \backslash \bm{c} \text{ to } Z \backslash \bm{c} \}}.
    \end{equation}
\end{definition}

These purely graphical quantities can be read off from the mixing matrices by examining the matrix ranks of corresponding submatrices, which is the well-known (path) rank constraint:
\begin{restatable}[\textbf{Path rank constraints in mixing matrices}]{lemma}{LEMRANKCONSTRAINTSINMIXINGMATRICES}\label{lem:rank_constraints_in_mixing_matrices}
    In a digraph $\Gcal$, for any two sets of vertices $Z, Y \subseteq V(\Gcal)$ that need not be disjoint, the following equality holds for generic choice of $A \in \Acal(\Gcal)$:
    \begin{equation}\label{eq:path_rank_mixing}
    \operatorname{rank}(A_{Z, Y}) \;=\; \rho_\Gcal(Z, Y).
    \end{equation}
     Here, $\operatorname{rank}$ denotes the usual matrix rank, and ``generic'' means the equality holds almost everywhere except for a Lebesgue measure zero set where coincidental lower matrix ranks occur.
\end{restatable}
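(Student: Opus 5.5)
We prove the lemma through the classical Lindström--Gessel--Viennot (LGV) / Cauchy--Binet route, extended to cyclic digraphs. There are three steps: an upper bound $\rank(A_{Z,Y}) \le \rho_\Gcal(Z,Y)$ that holds for every $A \in \Acal(\Gcal)$, a single parameter choice attaining equality, and a genericity argument. Throughout we write $A = (I-B)^{-1} = \operatorname{adj}(I-B)/\det(I-B)$. Each entry is then a rational function in the free edge weights $\{B_{j,i} : i\to j \in \Gcal\}$, and its denominator $\det(I-B)$ is not identically zero, since it equals $1$ at $B=0$.

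\textbf{Upper bound.} Let $\bm{c}$ be a minimum cut separating $Y$ from $Z$ as in \cref{def:path_ranks}, so $|\bm{c}| = \rho_\Gcal(Z,Y)$. We show that $A_{Z,Y}$ factors through $\bm{c}$, namely $A_{Z,Y} = M\,N$ with $M$ of size $|Z|\times|\bm{c}|$ and $N$ of size $|\bm{c}|\times|Y|$.
\begin{itemize}
\item Delete from $\Gcal$ all edges pointing into $\bm{c}$. The resulting matrix $B'$ defines $A' = (I-B')^{-1}$, whose columns describe propagation that never re-enters $\bm{c}$.
\item For $y\in Y\setminus\bm{c}$, the noise of $y$ reaches $Z$ only through $\bm{c}$. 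Concretely, from $V = BV + E$ one solves for $V$ given the values $V_{\bm{c}}$ by treating them as exogenous. The submatrix $A_{Z\setminus \bm{c},\, Y\setminus \bm{c}}$ has no contribution except through $V_{\bm{c}}$: in the graph with $\bm{c}$ removed there is no path from $Y\setminus\bm{c}$ to $Z\setminus\bm{c}$, so the corresponding block of the cut-out inverse vanishes.
\item Rows in $Z\cap\bm{c}$ and columns in $Y\cap\bm{c}$ are absorbed into the same factorization.
\end{itemize}
Together these give $\rank(A_{Z,Y})\le|\bm{c}|$. We will make this precise with a Schur-complement identity: partition $V$ into $\bm{c}$ and $R=V\setminus\bm{c}$, and express $A_{Z,Y}$ as a sum of a term factoring through $\bm{c}$ and the block $[(I-B_{RR})^{-1}]_{Z\setminus\bm{c},\,Y\setminus\bm{c}}$. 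The latter is zero by the separation property, since in the subgraph on $R$ no directed path connects the two sets. This argument requires $I-B_{RR}$ to be invertible, which holds generically; we then extend to all parameters by continuity of rank lower semicontinuity, which preserves upper bounds under Zariski closure.

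\textbf{Attaining equality.} Let $r=\rho_\Gcal(Z,Y)$ and fix $r$ vertex-disjoint directed paths $y_k \to\cdots\to z_k$. Without loss of generality each path is simple and induced-minimal, and a trivial path applies when $y_k=z_k\in Y\cap Z$. Set $B_{j,i}=1$ for edges on these paths and $0$ otherwise.
\begin{itemize}
\item The resulting $B$ is acyclic, being a disjoint union of paths, so $A=(I-B)^{-1}=\sum_{n} B^n$ is a finite sum.
\item $A_{z_k,y_k}=1$, and $A_{z_k,y_l}=0$ for $l\ne k$ unless $z_k$ lies on the path of $l$. We handle this case by choosing the paths so that the ordering is triangular.
\item A cleaner alternative is to compute, via the LGV lemma on this acyclic weighting, the minor on rows $\{z_k\}$ and columns $\{y_k\}$. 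It equals a signed sum over vertex-disjoint path systems, and with only these paths present the system is unique, so the minor is $\pm1\neq0$.
\end{itemize}

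\textbf{Genericity.} The chosen $r\times r$ minor of $A_{Z,Y}$, multiplied by $\det(I-B)^r$, is a polynomial in the edge weights. It is nonzero at the point constructed above, hence nonzero outside a proper algebraic subset, which has Lebesgue measure zero. Combined with the upper bound, $\rank(A_{Z,Y})=\rho_\Gcal(Z,Y)$ generically.

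The main obstacle is the upper bound in the cyclic case with overlapping $Z$, $Y$, and $\bm{c}$, since LGV does not directly apply to cycles. I would therefore argue it carefully through the block (Schur-complement) factorization, rather than through path expansions.
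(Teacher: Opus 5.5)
Your proof is correct, but it takes a different route from the paper. The paper gives no argument of its own: it defers to Lindstr\"om and Gessel--Viennot for acyclic graphs and to Talaska for general digraphs. In that route, each minor of $(I-B)^{-1}$ is expanded as a ratio. The numerator is a signed sum over vertex-disjoint path families (together with disjoint cycle collections), and the denominator is a sum over cycle collections. Both directions are then read off that one formula: all $k\times k$ minors vanish when no $k$ disjoint paths exist, and they are generically nonzero otherwise.

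You split the two directions instead. For the upper bound you use Menger's min cut $\bm c$ and the exact identity (with $T = I-B_{R,R}$ and $S = I-B_{\bm c,\bm c}-B_{\bm c,R}T^{-1}B_{R,\bm c}$)
\[
A \;=\; \begin{pmatrix} 0 & 0\\ 0 & T^{-1}\end{pmatrix} \;+\; \begin{pmatrix} I \\ T^{-1}B_{R,\bm c}\end{pmatrix} S^{-1} \begin{pmatrix} I & B_{\bm c,R}\,T^{-1}\end{pmatrix}.
\]
For the lower bound you specialize the weights to a maximum disjoint path system, where $B$ is nilpotent.

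Your route gives a self-contained linear-algebraic proof that needs only Menger's theorem and no cycle-aware path expansion. It also shows the upper bound holds at every parameter with $I-B$ invertible. What it gives up is the explicit combinatorial formula for each minor, which the cited route supplies. That formula is what one would need to reason about finer constraints, such as the non-rank constraints discussed in the appendix.

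A few cleanups are worth making.
\begin{itemize}
\item In the cyclic case the Neumann series for $T^{-1}$ need not converge. So justify the vanishing of $[T^{-1}]_{Z\setminus\bm c,\,Y\setminus\bm c}$ explicitly: for each $y$, $T$ is block triangular with respect to $\des_{\Gcal[R]}(y)$ and its complement, hence so is $T^{-1}$.
\item Your worry that $z_k$ might lie on the path of $y_l$ is vacuous, because the paths are vertex-disjoint. The specialized submatrix on rows $\{z_k\}$ and columns $\{y_k\}$ is exactly the identity, so neither a triangular ordering nor LGV is needed.
\item The informal first bullets of your upper bound (deleting edges into $\bm c$) can simply be replaced by the identity above.
\item The extension of the upper bound to all parameters is harmless but unnecessary for a generic statement.
\end{itemize}
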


Rank constraints bridge algebra in matrices with geometry in digraphs. They were initially proved for acyclic graphs only~\citep{lindstroem1973,gessel1985}, and later generalized by~\citep{talaska2012determinants}. They are powerful: as we will show in the proof, rank constraints alone, together with a column permutation, suffice to determine equivalence. We directly state the result below:  %

\begin{restatable}[\textbf{Equivalence via path ranks}]{lemma}{THMEQUIVALENCEASPATHRANKS}\label{thm:equivalence_as_path_ranks}
Two irreducible models are distributionally equivalent, written $\Gcal \overset{X}{\sim} \Hcal$, if and only if there exists a permutation $\pi$ over the vertices $V(\Gcal)$, such that
\begin{equation}
    \rho_\Gcal(Z, Y) = \rho_\Hcal(Z, \pi(Y)) \quad \text{for all } Z \subseteq X \text{ and } \ Y \subseteq V(\Gcal).\label{eqn:rhoGeqrhoHpi}
\end{equation}
\end{restatable}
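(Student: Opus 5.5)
We would prove \cref{thm:equivalence_as_path_ranks} by going through \cref{lem:equivalence_as_closure_columns}, which reduces equivalence to the equality $\fillover{\Acal(\Gcal, X)} = \fillover{\Acal(\Hcal, X)}$. Both directions then compare the Zariski closures of the parametrized families $\Acal(\Gcal,X)$ and $\Acal(\Hcal,X)$, up to a column permutation. \cref{lem:rank_constraints_in_mixing_matrices} supplies the link between submatrix ranks and path ranks. Throughout, columns are indexed by $V = X\cup L$, with latents labelled in common as justified just before \cref{lem:equivalence_as_closure_columns}.

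\emph{Necessity.} Suppose $\Gcal \overset{X}{\sim} \Hcal$. Take a generic $A\in\Acal(\Gcal,X)$, meaning one avoiding the measure-zero exceptional sets of \cref{lem:rank_constraints_in_mixing_matrices} for the finitely many pairs $(Z,Y)$. By the closure equality there are $A'\in\Acal(\Hcal,X)$, $P\in\operatorname{Perm}(|V|)$ and $D\in\operatorname{Scale}(|V|)$ with $A = A'PD$.

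The first difficulty is that this $A'$ need not be generic for $\Hcal$. We handle it with a dimension/measure argument. The map $\Acal(\Hcal,X)\times \operatorname{Perm}\times\operatorname{Scale}\to\fillover{\Acal(\Hcal,X)}$ is a finite union of images of irreducible parametrizations. Since $\Acal(\Gcal,X)$ is irreducible and is covered by finitely many such pieces, one permutation $\pi$ (induced by $P$) must account for a Zariski-dense subset of $\Acal(\Gcal,X)$. Both sides are irreducible and of equal dimension, since each contains the other up to closure. So the generic points correspond, and generic $A$ pulls back to generic $A'$.

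Then $\rank(A_{Z,Y}) = \rank(A'_{Z,\pi(Y)})$, because column scaling does not change rank. Applying \cref{lem:rank_constraints_in_mixing_matrices} on both sides gives \cref{eqn:rhoGeqrhoHpi}.

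\emph{Sufficiency.} This is the harder direction and the main obstacle: we must show that rank constraints alone determine the variety. Without loss of generality relabel $\Hcal$ by $\pi$, so that $\rho_\Gcal(Z,Y)=\rho_\Hcal(Z,Y)$ for all $Z\subseteq X$ and $Y\subseteq V$. We plan to use the fact that $\Acal(\Gcal,X)$ is governed by the support pattern of $A_{X,:}$ and its minors. Concretely:

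\textbf{(i)} The path ranks $\rho(Z,Y)$ with $Z\subseteq X$ determine the rank function of the column matroid of $A_{X,:}$ restricted to every row subset, and hence the gammoid structure.

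\textbf{(ii)} We would argue, with a reparametrization, that the Zariski closure of $\Acal(\Gcal,X)$ equals the set of matrices whose submatrices $A_{Z,Y}$ all have rank at most $\rho_\Gcal(Z,Y)$. Equivalently, the image of $B\mapsto((I-B)^{-1})_{X,:}$ is cut out by its determinantal constraints. The idea is to show that both closures are irreducible varieties of the same dimension, with dimension read off from rank data, for instance via a transversal/linkage count $\sum$ of ranks. Since each is contained in the determinantal variety defined by the common rank bounds, an irreducible-component argument gives equality.

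\textbf{(iii)} Once the Zariski closures agree, both sets contain dense open subsets of that common variety. Combined with the scale and permutation closure, this yields $\fillover{\Acal(\Gcal,X)}=\fillover{\Acal(\Hcal,X)}$. Two points still need care. First, the pathological cyclic locus where $I-B$ is singular must be handled. Second, the passage from ``equal closures'' to equal closed-under-scaling sets needs justification. For the latter we would argue that OICA identifiability, through irreducibility, lets any matrix in one family be realized in the other after rescaling the columns, for example by solving for $B$ from $A$ via $B = I - D A^{-1}$ entrywise.

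We expect step (ii) to be the crux. If the ideal-theoretic argument fails, the fallback is a direct constructive route. Rank equalities on singleton and pairwise sets force identical supports and identical zero patterns of the relevant minors. From these we would build, for any $B\in\mathcal B(\Gcal)$, an explicit $B'\in\mathcal B(\Hcal)$ with matching observed mixing matrix up to scaling.
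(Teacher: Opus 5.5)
Your necessity direction is sound, and it handles the genericity of $A'$, which the paper passes over. A lighter fix is available: $\rank A'_{Z,Y}\le\rho_\Hcal(Z,Y)$ holds for \emph{every} $A'\in\Acal(\Hcal,X)$. So one generic $A$ gives $\rho_\Gcal(Z,Y)\le\rho_\Hcal(Z,\pi(Y))$, the symmetric argument gives $\rho_\Hcal(Z,Y)\le\rho_\Gcal(Z,\pi'(Y))$, and iterating $\sigma=\pi'\pi$ up to its order forces equality.

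The genuine gap is step (ii) of sufficiency. It carries the entire content of the lemma, and it is false as stated. The set $W$ of matrices with $\rank M_{Z,Y}\le\rho_\Gcal(Z,Y)$ for all $Z\subseteq X$, $Y\subseteq V$ is invariant under arbitrary column scaling, whereas $\Acal(\Gcal,X)$ is not: for acyclic $\Gcal$, every $A\in\Acal(\Gcal,X)$ has $A_{x,x}=1$ for each observed $x$. So $\overline{\Acal(\Gcal,X)}\neq W$, and the image of $B\mapsto((I-B)^{-1})_{X,:}$ is not cut out by its determinantal constraints. Passing to scale closures does not rescue your component argument. $W$ is in general reducible, since every rank upper bound survives degenerations such as zeroing a whole column, and such loci can form separate components. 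Two irreducible subvarieties of $W$ of equal dimension therefore need not coincide. To conclude, you would have to prove three things, none of which you supply:
\begin{enumerate}
\item the dimension of the model depends only on the rank data (your ``transversal/linkage count'' is never specified);
\item each scale closure is an irreducible component of $W$;
\item only one component of $W$ attains all ranks $\rho$.
\end{enumerate}

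Step (iii) and the fallback do not close this gap. Equal Zariski closures do not yield the equal sets required by \cref{lem:equivalence_as_closure_columns}. The formula $B=I-DA^{-1}$ is unavailable, because $A_{X,:}$ is $|X|\times|V|$ and has no inverse when $L\neq\varnothing$; reconstructing suitable latent rows is precisely the hard part. OICA identifiability concerns representations of a fixed distribution, not membership of a given matrix in a model. The fallback's premise is also wrong: singleton ranks only encode ancestral relations, and equivalent graphs generally have different edge sets (\cref{fig:example_n5l2_6_digraphs}, \cref{lem:edge_flips}).

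The missing idea is to work on the parameter side $I-B$, where the model is linear in the weights, rather than with the variety of mixing matrices.
\begin{itemize}
\item For each $Z\subseteq X$, $\rho(Z,\cdot)$ is the rank function of a strict gammoid on $V$.
\item By the Ingleton--Piff fundamental lemma (equivalently, \cref{thm:duality_between_two_ranks}), its dual is the transversal matroid $U\mapsto r_\Gcal(U,V\setminus Z)$. This matroid is presented by the columns $V\setminus Z\supseteq L$ of $Q^{(\Gcal)}$ and represented by generic valuations of $I-B$.
\item Your hypothesis therefore says exactly that $\pi$ is an isomorphism between the two families of transversal matroids.
\end{itemize}
The paper then invokes Brylawski's result on general-position simplex-face representations of transversal matroids. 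It takes a generic valuation $R$ of $Q^{(\Gcal)}$ with $A=(SR)^{-1}_{X,:}$, row-permutes it by $\pi$, and normalizes it to $TPR=I-B'$ with $B'\in\mathcal{B}(\Hcal)$. Inverting gives $A'=AP^{-1}(S_PT^{-1})\in\Acal(\Hcal,X)$, an explicit column permutation and scaling of $A$. No dimension count and no knowledge of the components of a determinantal variety is needed.
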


From~\cref{lem:equivalence_as_closure_columns} to~\cref{thm:equivalence_as_path_ranks}, so far we have arrived at a first purely graphical view of equivalence.\looseness=-1

\subsection{The Complexity of Manipulating Path Ranks}\label{subsec:hardness_path_ranks}

In~\cref{subsec:mixing_matrix_to_path_ranks} we have arrived at~\cref{thm:equivalence_as_path_ranks}, a purely graphical characterization of equivalence, which, perhaps surprisingly, is expressed in terms of a familiar concept: path ranks. However, this is only a start and far from operational: verifying it requires searching over all vertex permutations and all $(Z, Y)$ pairs, which quickly becomes intractable due to their factorial and exponential growth, let alone the costly graph traversal required for each single path rank computation. As an analogy to the acyclic, causally sufficient case, \cref{thm:equivalence_as_path_ranks} is like saying ``having all the same d-separations,'' whereas what we seek is something simpler and more local, like ``same adjacencies and v-structures.''

\begin{figure}[t!]
\centering
\begin{minipage}[c]{0.25\textwidth}
\centering
\begin{tikzpicture}[->,>=stealth,shorten >=1pt,auto,semithick,inner sep=0pt,scale=0.8]
  \tikzstyle{Lstyle} = [draw, rectangle, minimum size=0.5cm, fill=gray!20]
  \tikzstyle{Xstyle} = [draw, circle, minimum size=0.6cm]
\def\xgap{0.85}
\def\cgap{1.1}
\def\ygap{1.3}
\draw[fill=orange!20!white, draw=orange!50!white, line width=0.8pt, rounded corners=6pt] (-0.5,-0.45) rectangle (3*\xgap+0.5,0+0.45);
\draw[fill=red!20!white, draw=red!50!white, line width=0.8pt, rounded corners=6pt] (1.5*\xgap-0.5*\cgap-0.5, -\ygap-0.45) rectangle (1.5*\xgap+0.5*\cgap+0.5, -\ygap+0.45);
\draw[fill=blue!20!white, draw=blue!50!white, line width=0.8pt, rounded corners=6pt] (-0.5*\xgap-0.5,-2*\ygap-0.45) rectangle (3.5*\xgap+0.5,-2*\ygap+0.45);
\node[Xstyle] (Y1) at (0,0) {$Y_1$};
\node[Xstyle] (Y2) at (\xgap,0)  {$Y_2$};
\node      (Ycdot) at (2*\xgap,0) {$\cdots$};
\node[Xstyle] (Ym) at (3*\xgap,0)  {$Y_m$};
\node[Xstyle] (C1) at (1.5*\xgap-0.5*\cgap, -\ygap) {$C_1$};
\node[Xstyle] (C2) at (1.5*\xgap+0.5*\cgap, -\ygap)  {$C_2$};
\node[Xstyle] (Z1) at (-0.5*\xgap,-2*\ygap) {$Z_1$};
\node[Xstyle] (Z2) at (0.5*\xgap,-2*\ygap)  {$Z_2$};
\node (Zcdot)      at (2*\xgap,-2*\ygap) {$\cdots$};
\node[Xstyle] (Zn) at (3.5*\xgap,-2*\ygap)  {$Z_n$};

\draw[->] (Y1) -- (C1);
\draw[->] (Y2) -- (C1);
\draw[->] (Ym) -- (C1);
\draw[->] (Ycdot) -- (C1);
\draw[->] (Y1) -- (C2);
\draw[->] (Y2) -- (C2);
\draw[->] (Ym) -- (C2);
\draw[->] (Ycdot) -- (C2);
\draw[->] (C1) -- (Z1);
\draw[->] (C1) -- (Z2);
\draw[->] (C1) -- (Zn);
\draw[->] (C1) -- (Zcdot);
\draw[->] (C2) -- (Z1);
\draw[->] (C2) -- (Z2);
\draw[->] (C2) -- (Zn);
\draw[->] (C2) -- (Zcdot);
\end{tikzpicture}
\end{minipage}
\hfill
\begin{minipage}[c]{0.03\textwidth}
\centering
\rule{0.5pt}{17ex}
\end{minipage}
\hfill
\begin{minipage}[c]{0.37\textwidth}
\centering
\begin{tikzpicture}[->,>=stealth,shorten >=1pt,auto,semithick,inner sep=0pt,scale=0.8]
  \tikzstyle{Lstyle} = [draw, rectangle, minimum size=0.5cm, fill=gray!20]
  \tikzstyle{Xstyle} = [draw, circle, minimum size=0.6cm]
\def\xgap{0.9}
\def\ygap{2}
\draw[fill=orange!20!white, draw=orange!50!white, line width=0.8pt, rounded corners=6pt] (-0.5,-0.45) rectangle (3*\xgap+0.5-0.1,0+0.45);
\draw[fill=red!20!white, draw=red!50!white, line width=0.8pt, rounded corners=6pt] (4*\xgap-0.5+0.1, -0.45) rectangle (5*\xgap+0.5, 0.45);
\draw[fill=blue!20!white, draw=blue!50!white, line width=0.8pt, rounded corners=6pt] (1.5*\xgap-0.5+0.1,-\ygap-0.45) rectangle (5.5*\xgap+0.5,-\ygap+0.45);
\draw[fill=red!20!white, draw=red!50!white, line width=0.8pt, rounded corners=6pt] (-0.5*\xgap-0.5,-\ygap-0.45) rectangle (0.5*\xgap+0.5-0.1,-\ygap+0.45);

\node[Xstyle] (Y1) at (0,0) {$Y_1$};
\node[Xstyle] (Y2) at (\xgap,0)  {$Y_2$};
\node      (Ycdot) at (2*\xgap,0) {$\cdots$};
\node[Xstyle] (Ym) at (3*\xgap,0)  {$Y_m$};
\node[Xstyle] (C1up) at (4*\xgap,0) {$C_1$};
\node[Xstyle] (C2up) at (5*\xgap,0)  {$C_2$};
\node[Xstyle] (C1down) at (-0.5*\xgap,-\ygap) {$C_1$};
\node[Xstyle] (C2down) at (0.5*\xgap,-\ygap)  {$C_2$};
\node[Xstyle] (Z1) at (1.5*\xgap,-\ygap) {$Z_1$};
\node[Xstyle] (Z2) at (2.5*\xgap,-\ygap)  {$Z_2$};
\node (Zcdot)      at (4*\xgap,-\ygap) {$\cdots$};
\node[Xstyle] (Zn) at (5.5*\xgap,-\ygap)  {$Z_n$};

\draw[->, color=red, very thick] (Y1) -- (C1down);
\draw[->, color=gray!70] (Y2) -- (C1down);
\draw[->, color=gray!70] (Ym) -- (C1down);
\draw[->, color=gray!70] (Ycdot) -- (C1down);
\draw[->, color=gray!70] (Y1) -- (C2down);
\draw[->, color=red, very thick] (Y2) -- (C2down);
\draw[->, color=gray!70] (Ym) -- (C2down);
\draw[->, color=gray!70] (Ycdot) -- (C2down);
\draw[->, color=red, very thick] (C1up) -- (Z1);
\draw[->, color=gray!70] (C1up) -- (Z2);
\draw[->, color=gray!70] (C1up) -- (Zn);
\draw[->, color=gray!70] (C1up) -- (Zcdot);
\draw[->, color=gray!70] (C2up) -- (Z1);
\draw[->, color=red, very thick] (C2up) -- (Z2);
\draw[->, color=gray!70] (C2up) -- (Zn);
\draw[->, color=gray!70] (C2up) -- (Zcdot);
\draw[->, color=gray!70] (C1up) -- (C1down);
\draw[->, color=gray!70] (C2up) -- (C2down);
\end{tikzpicture}
\end{minipage}
\hfill
\begin{minipage}[c]{0.32\textwidth}
\centering
\begin{tikzpicture}[
every left delimiter/.style={xshift=.4em,yshift=0.4em,scale=0.89},
    every right delimiter/.style={xshift=-.4em,yshift=0.4em,scale=0.89},
highlight/.style = {
      fill=red!20, rounded corners=1pt, inner sep=2pt
  }
]
\matrix (mtrx)  [matrix of math nodes,
                left delimiter={[},
                right delimiter={]},
                inner sep=1.5pt, column sep=5pt, row sep=0pt,]
{
\times & \times & \times & 0 & \times & \cdots & \times \\
\times & \times & 0 & \times & \times & \cdots & \times \\
0 & 0 & \times & \times & 0 & \cdots & 0 \\
0 & 0 & \times & \times & 0 & \cdots & 0 \\
\vdots & \vdots & \vdots & \vdots & \vdots & \ddots & \vdots \\
0 & 0 & \times & \times & 0 & \cdots & 0 \\
};
\begin{scope}[on background layer]
  \node[highlight, fit=(mtrx-1-1)] {};
  \node[highlight, fit=(mtrx-2-2)] {};
  \node[highlight, fit=(mtrx-3-3)] {};
  \node[highlight, fit=(mtrx-4-4)] {};
\end{scope}
\node[above=-2pt of mtrx-1-1] {\small \small $Y_1$};
\node[above=-2pt of mtrx-1-2] {\small $Y_2$};
\node[above=-2pt of mtrx-1-3] {\small $C_1$};
\node[above=-2pt of mtrx-1-4] {\small $C_2$};
\node[above=-2pt of mtrx-1-5] {\small $Y_3$};
\node[above=2pt of mtrx-1-6] {\small $\cdots$};
\node[above=-2pt of mtrx-1-7] {\small $Y_m$};
\node[left=0pt of mtrx-1-1] {\small $C_1$};
\node[left=0pt of mtrx-2-1] {\small $C_2$};
\node[left=1pt of mtrx-3-1] {\small $Z_1$};
\node[left=1pt of mtrx-4-1] {\small $Z_2$};
\node[left=6pt of mtrx-5-1] {\small $\vdots$};
\node[left=1pt of mtrx-6-1] {\small $Z_n$};
\end{tikzpicture}
\end{minipage}
\vspace{-0.5em}
\caption{An illustration of path ranks, edge ranks, and their duality. Left: a digraph $\Gcal$ with vertices $V$ partitioned to $Y$, $C$, and $Z$, shown by different colors. The path rank $\rho_\Gcal(Z,Y)=2$, with $C$ being a min-cut. Right: the dual edge rank $r_\Gcal(V\backslash Y, \ V\backslash Z)=4$, given by the maximum bipartite matching from $V\backslash Z$ to $V\backslash Y$, i.e., from $Y\cup C$ to $Z\cup C$, with four matched edges highlighted in red. Four corresponding nonzero entries placed on diagonal, also in red, confirm $\operatorname{mrank}(Q^{(\Gcal)}_{V\backslash Y, \ V\backslash Z}) = 4$. One may examine the duality in~\cref{thm:duality_between_two_ranks}: w.l.o.g. let $m \leq n$, there is $m - 2 = m + n + 2 - n - 4$.
\vspace{-1.65em}
}
\label{fig:example_path_edge_duality}
\end{figure}
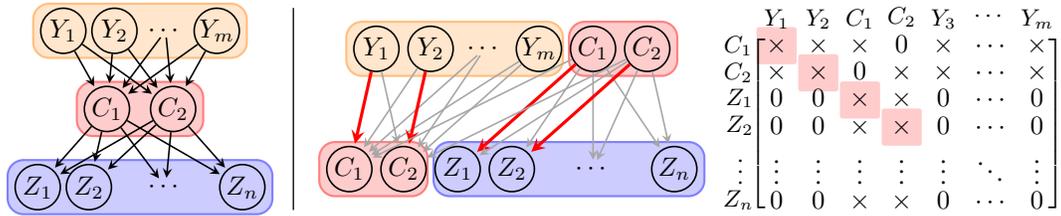

Then, does a simpler local condition naturally follow from~\cref{thm:equivalence_as_path_ranks}? Unfortunately, not quite. Path ranks are hard to work with due to their global nature: they summarize the size of ``bottlenecks'', but say nothing about which paths are involved or how they interact. Each single edge may lie on multiple bottlenecks, so even a small local alteration to a digraph may trigger unpredictable global changes in path ranks. Conversely, with latent variables, seemingly very different digraphs can still share the same path ranks. We illustrate such complexity with the following example.

\begin{example}[\textbf{Complexity of viewing equivalence via path ranks}]\label{eg:complexity_of_path_ranks}
Consider the digraph $\Gcal$ on the left of \cref{fig:example_path_edge_duality}, with vertices partitioned into $Y$, $C$, and $Z$. Obviously, the path rank $\rho_\Gcal(Z, Y) = 2$. Now, suppose vertices $\{C_1, C_2\}$ become latent and all others remain observed. What models are equivalent? This is not obvious anymore. It usually takes some thought to realize that adding edges or cycles within $C$, or removing one or two edges from $C$ to $Z$, still preserves path ranks as in~\cref{thm:equivalence_as_path_ranks}. What about the $Y$ to $C$ structure then? This is more subtle: when $n > 2$, it must remain fixed; but when $n = 2$, $C$ is no longer a unique bottleneck, and suddenly, $Y$ can point freely to both $C$ and $Z$.

Things become even less intuitive when other variables are latent. For example, with $m = n = 4$, if $\{C_1, C_2\}$ are latent, there are $17$ digraphs in the equivalence class (\href{https://equiv.cc/example1_C_latent}{view them online}). When $\{Y_1, Y_2\}$ or $\{Y_1, C_1\}$ are latent, this number comes to $872$ (\href{https://equiv.cc/example1_Y_latent}{view}) and $1,024$ (\href{https://equiv.cc/example1_YC_latent}{view}), respectively. Note that all this comes from a well structured digraph; arbitrary structures only lead to greater complexity.\looseness=-1
\end{example}

\cref{eg:complexity_of_path_ranks} illustrates the complexity of path ranks in inferring graph structures. In fact, this complexity is well recognized \update{in} literature: despite various techniques developed to estimate path ranks from data~\citep{dai2022independence,sturma2024unpaired}, and well-studied counterparts in the linear Gaussian~\citep{sullivant2010trek} and discrete settings~\citep{chen2024learningdiscrete} or even with selection bias~\citep{dai2025latent}, when it comes to structure learning from ranks, usually restrictive structural assumptions are required to ensure clean interpretation to where and how these paths can be.

All observations above motivate a question: is there a more local, graph-manipulable alternative to path ranks, not just for building equivalence in this work but also as a piece in the broader toolbox? Interestingly, the answer is yes, and we develop such a tool next: edge ranks.

\vspace{-0.4em}

\subsection{Edge Ranks: A New Tool in the Rank-Based Picture}\label{subsec:edge_ranks}

\vspace{-0.5em}

We now introduce a new tool: \textit{edge ranks}. As the name suggests, edge ranks directly operate on edges in digraphs, which is more local and accessible in contrast to the paths used in path ranks. For intuition, one may refer to~\cref{fig:example_path_edge_duality}, which illustrates all the concepts and results below.

Let us first define edge ranks, similar to how we define path ranks previously in~\cref{def:path_ranks}:

\begin{definition}[\textbf{Edge ranks}]\label{def:edge_ranks}
    In a digraph $\Gcal$, for two sets of vertices $Z, Y \subseteq V(\Gcal)$, the \emph{edge rank} $r_\Gcal(Z, Y)$ is defined as the size of the maximum bipartite matching from $Y$ to $Z$ via edges in $\Gcal$, where self-matches ($a$ to $a$ for $a \in Y \cap Z$) are allowed. Edge ranks also admit a min-cut version:
    \begin{equation}
        r_\Gcal(Z, Y) \coloneqq \min_{\bm{z}\subseteq Z, \ \bm{y}\subseteq Y, \  \bm{z} \cup \bm{y} \supseteq Z \cap Y }{\{|\bm{z}| + |\bm{y}|: \text{ there is no edge from } Y\backslash \bm{y} \text{ to } Z\backslash \bm{z} \text{ in } \Gcal  \}}.
    \end{equation}
\end{definition}

\vspace{-0.1em}

In parallel to how path ranks correspond to matrix ranks of mixing submatrices (cf.~\cref{lem:rank_constraints_in_mixing_matrices}), the pure graphical quantities of edge ranks also have their algebraic counterpart. This time, it is not the mixing matrices at play, but directly the adjacencies.

For clarity, let us introduce a new matrix notation, $Q$, in addition to the already familiar notations of $B$ and $A$, and a new notion of \textit{matching ranks}, in addition to the already familiar matrix ranks.

\begin{definition}[\textbf{Support matrix}]\label{def:binary_support_Q}
For a digraph $\Gcal$, its binary \emph{support matrix} in shape $|V(\Gcal)| \times |V(\Gcal)|$, denoted $Q^{(\Gcal)}$, is given by:
\begin{equation}\label{eq:binary_support_Q}
Q^{(\Gcal)}_{V_j,V_i} \;=\; \text{`}\times\text{'} \text{ if } V_i = V_j \text{ or } V_i \to V_j \in \Gcal, \text{ and } 0 \text{ otherwise}.
\end{equation}
\end{definition}

\begin{definition}[\textbf{Matching rank of a matrix}]\label{def:matching_rank_of_a_matrix}
The \textit{matching rank} of a matrix $M \in \mathbb{K}^{m \times n}$ is given by:
\begin{equation}
    \operatorname{mrank}(M) \;\coloneqq\; \max_{P\in \operatorname{Perm}(n)}{\textstyle\sum_{i=1,\cdots,\min(m,n)}{\mathbbm{1}((MP)_{i,i} \neq 0)}}.
\end{equation}
In simple terms, the matching rank of a matrix, denoted $\operatorname{mrank}$, is the maximum number of nonzero entries that can be positioned on the diagonal by permuting its columns (or rows).
\end{definition}

We can now give the edge rank constraints, as a counterpart to path rank constraints (cf. \cref{lem:rank_constraints_in_mixing_matrices}). Unlike the algebraic efforts required there, this result follows immediately from definition:

\begin{restatable}[\textbf{Edge rank constraints in support matrices}]{lemma}{LEMEDGERANKCONSTRAINTSINSUPPORTMATRICES}\label{lem:edge_rank_constraints_in_support_matrices}
    In a digraph $\Gcal$, for any two sets of vertices $Z, Y \subseteq V(\Gcal)$ that need not be disjoint, the following equality holds:
    \begin{equation}\label{eq:edge_rank_support}
    \operatorname{mrank}(Q^{(\Gcal)}_{Z, Y}) \;=\; r_\Gcal(Z, Y).
    \end{equation}
\end{restatable}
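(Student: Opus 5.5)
The plan is to unfold both sides into the same combinatorial object: a maximum matching in a bipartite graph. Fix $Z, Y \subseteq V(\Gcal)$. Define a bipartite graph $\mathcal{K}$ with left part a copy of $Y$ (the columns of $Q^{(\Gcal)}_{Z,Y}$) and right part a copy of $Z$ (the rows). The copies are kept formally distinct even for vertices in $Y \cap Z$. Put an edge between $y \in Y$ and $z \in Z$ exactly when $Q^{(\Gcal)}_{z,y} = \times$. By \cref{def:binary_support_Q}, this happens iff $y = z$ or $y \to z \in \Gcal$. So the edges of $\mathcal{K}$ are precisely the edges of $\Gcal$ from $Y$ to $Z$, plus the self-matches $a$--$a$ for $a \in Y \cap Z$. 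This is exactly the bipartite structure in which \cref{def:edge_ranks} takes its maximum matching. Hence, by definition, $r_\Gcal(Z,Y)$ is the maximum matching size $\nu(\mathcal{K})$.

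Next I would show $\operatorname{mrank}(Q^{(\Gcal)}_{Z,Y}) = \nu(\mathcal{K})$. This is the standard fact that the maximum number of nonzeros placeable on the diagonal by a column permutation equals the maximum number of nonzero entries with pairwise distinct rows and columns.
\begin{itemize}
\item ($\le$) Given a permutation $P$, the nonzero diagonal entries $(MP)_{i,i}$, $i \le \min(m,n)$, lie in distinct rows $i$ and distinct original columns $P^{-1}(i)$. They therefore form a matching of that size in $\mathcal{K}$.
\item ($\ge$) Given a matching $\{(z_k, y_k)\}_{k=1}^{t}$, we have $t \le \min(m,n)$. Reorder the rows so that $z_1,\dots,z_t$ come first; row reordering is harmless, as noted in \cref{def:matching_rank_of_a_matrix}. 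Then choose a column permutation sending $y_k$ to position $k$ and extend it arbitrarily on the remaining columns. This places at least $t$ nonzeros on the diagonal.
\end{itemize}

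Finally, I would note that the min-cut form in \cref{def:edge_ranks} agrees with the matching form by K\H{o}nig's theorem applied to $\mathcal{K}$. A vertex cover of $\mathcal{K}$ is a pair $(\bm{z}, \bm{y})$ with $\bm{z} \subseteq Z$ and $\bm{y} \subseteq Y$ covering every edge. Covering the self-match edges forces $\bm{z} \cup \bm{y} \supseteq Z \cap Y$. Covering the remaining edges means there is no $\Gcal$-edge from $Y \setminus \bm{y}$ to $Z \setminus \bm{z}$. So the minimum cover size is exactly the displayed minimum, which equals $\nu(\mathcal{K})$.

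The only real care point is the bookkeeping for $Y \cap Z$. One must treat row and column copies of a shared vertex as distinct nodes of $\mathcal{K}$. Then the diagonal $\times$ entries of $Q^{(\Gcal)}$ correspond exactly to the permitted self-matches and to the covering constraint $\bm{z} \cup \bm{y} \supseteq Z \cap Y$. Beyond that the argument is purely definitional, with no genericity needed, consistent with the remark that the lemma ``follows immediately from definition.''
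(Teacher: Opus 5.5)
Your proposal is correct and is the same definitional unfolding the paper relies on. The paper omits the proof as immediate from the definitions; you build the bipartite graph on column copies $Y$ and row copies $Z$ whose edges are exactly the nonzeros of $Q^{(\Gcal)}_{Z,Y}$, and you use K\H{o}nig's theorem to reconcile the min-cut form, including the correct treatment of $Y\cap Z$. Your explicit row reordering is the right reading of \cref{def:matching_rank_of_a_matrix}: the column-permutation-only formula, taken literally, could undercount for tall submatrices with $|Z|>|Y|$, so the ``(or rows)'' clause is genuinely needed there.
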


So far, we have defined both path ranks and edge ranks, which at first glance appear so different: graphically, one is global, focusing on paths, while the other is local, operating on edges; algebraically, one is tied to weighted mixing matrices, the other to binary support matrices. However, despite these apparent differences, a surprising and elegant duality exists between them:

\begin{restatable}[\textbf{Duality between path ranks and edge ranks}]{theorem}{THMDUALITYBETWEENTWORANKS}\label{thm:duality_between_two_ranks}
    In a digraph $\Gcal$ with vertices $V$, for any two sets of vertices $Z, Y \subseteq V$ that need not be disjoint, the following equality holds:
    \begin{equation}\label{eq:duality_two_ranks}
    \min(|Z|, |Y|) - \rho_\Gcal(Z, Y) \;=\; |V| - \max(|Z|, |Y|) - r_\Gcal(V\backslash Y, \ V\backslash Z).
    \end{equation}
\end{restatable}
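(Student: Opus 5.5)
We first rewrite \eqref{eq:duality_two_ranks} using $\min(|Z|,|Y|)+\max(|Z|,|Y|)=|Z|+|Y|$. This turns it into the equivalent identity
\begin{equation*}
r_\Gcal(V\backslash Y,\ V\backslash Z) \;=\; |V\backslash Z| - |Y| + \rho_\Gcal(Z,Y),
\end{equation*}
which no longer involves $\min$ or $\max$. The plan is to prove it combinatorially, in the spirit of the classical duality between gammoids and transversal matroids (Ingleton--Piff). We relate matchings in a bipartite graph to systems of vertex-disjoint paths and prove the two inequalities separately. Lemmas~\ref{lem:rank_constraints_in_mixing_matrices} and \ref{lem:edge_rank_constraints_in_support_matrices} are not needed; only Definitions~\ref{def:path_ranks} and \ref{def:edge_ranks} are used.

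\textbf{The bipartite graph.} Take left copies of $V\backslash Z$ and right copies of $V\backslash Y$. Join left $a$ to right $b$ if $a\to b\in\Gcal$, or if $a=b$ (a self-match, available for $a\notin Y\cup Z$). By Definition~\ref{def:edge_ranks}, $r_\Gcal(V\backslash Y, V\backslash Z)$ is the size of a maximum matching here. A matching $M$ can be read as a partial functional digraph on $V$:
\begin{itemize}[itemsep=0pt,topsep=1pt]
\item each vertex has at most one outgoing matched arc (as a left vertex) and at most one incoming one (as a right vertex);
\item vertices of $Z$ have no out-arc and vertices of $Y$ have no in-arc;
\item self-matches are fixed points.
\end{itemize}
Hence the non-identity pairs of $M$ decompose into vertex-disjoint directed paths and cycles of $\Gcal$.

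\textbf{Step 1 ($\geq$): from paths to a matching.} Take $\rho=\rho_\Gcal(Z,Y)$ vertex-disjoint $Y$-to-$Z$ paths. By shortening, we may assume each path meets $Y$ only at its start and $Z$ only at its end. For a vertex of $Y\cap Z$ this gives a trivial path, and we may assume all such trivial paths are included: a vertex of $Y\cap Z$ lying on another path would have already cut that path short, so adding them does not lower the count. Build $M$ from the arcs of these paths, together with self-matches on every vertex off the paths that lies outside $Y\cup Z$. Counting the unmatched left vertices then amounts to counting the $Y$-vertices that are not path starts, namely $|Y|-\rho$, all of which lie in $V\backslash Z$. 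This gives $|M| = |V\backslash Z|-(|Y|-\rho)$.

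\textbf{Step 2 ($\leq$): from a matching to paths.} Given a maximum matching $M$, first replace every cycle in its decomposition by self-matches on the cycle's vertices. This is legal because cycle vertices have both an in-arc and an out-arc, so they lie outside $Y\cup Z$, and it keeps the size unchanged. Now trace each unmatched left vertex forward along matched arcs. A left vertex of $V\backslash Z$ that is not a right endpoint must have no in-arc, so it lies in $Y$; hence the only vertices that can be unmatched on the left are vertices of $Y\backslash Z$ with no out-arc, and every matched path starts in $Y$. Following out-arcs, a path can only stop at a vertex with no out-arc. Such a vertex is not a left-matched member of $V\backslash Z$; if it were also outside $Z$, it would have been counted among the unmatched left vertices. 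Accounting this way, the maximal matched paths give at least $|Y|-(|V\backslash Z|-|M|)$ vertex-disjoint $Y$-to-$Z$ paths, where trivial paths for $Y\cap Z$ are counted. This yields $\rho\ge |Y|-|V\backslash Z|+|M|$.

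\textbf{Main obstacle.} The expected difficulty is the careful bookkeeping in Step 2, distinguishing which endpoints of the path components land in $Y\backslash Z$, $Z\backslash Y$, or $Y\cap Z$, and making sure trivial paths and cycles are handled consistently with the self-match convention. As a cross-check, I would also verify the identity through the min-cut forms of Definitions~\ref{def:path_ranks} and \ref{def:edge_ranks}. From a vertex cut $\bm c$ for $\rho$, one should be able to produce an edge cover with $\bm z = (V\backslash Y)\cap(\text{vertices reachable from } Y\backslash\bm c)$ and $\bm y$ its complement adjusted by $\bm c$, of the matching size, and conversely. This gives König's theorem on one side and Menger's on the other as an independent proof of the same equality.
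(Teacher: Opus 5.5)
Your argument is correct in substance, and it takes a different route from the paper. The paper gives no proof of this theorem: it lists it among ``results shown by existing work'' and points to the matroid literature (K\H{o}nig, Perfect, Ingleton--Piff). In that route the identity is the rank-function form of the Ingleton--Piff duality between strict gammoids and transversal matroids. Here $\rho_\Gcal(Z,\cdot)$ is the rank function of the strict gammoid on $V$ with targets $Z$, of total rank $|Z|$. Its dual is the transversal matroid presented by the sets $\{v\}\cup\ch_\Gcal(v)$ for $v\in V\setminus Z$, whose rank on $V\setminus Y$ is $r_\Gcal(V\setminus Y, V\setminus Z)$. The dual rank formula $r^\ast(V\setminus Y)=|V\setminus Y|-|Z|+\rho_\Gcal(Z,Y)$ is then exactly your rewritten identity. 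Your conversion between path systems and matchings is essentially a hands-on proof of that fundamental lemma, specialized to ranks. What it buys is self-containedness, and it makes explicit the paper's conventions (overlapping $Y,Z$, trivial paths on $Y\cap Z$, self-matches only off $Y\cup Z$, cycles), which the citation leaves to the reader. The citation route instead places the identity inside the gammoid/transversal-matroid framework that the paper reuses later, in the proof of equivalence via path ranks and in Phase~1 of glvLiNG. Step~1 is sound as written.

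The one thing to repair is Step~2. There, two intermediate claims are false for a general maximum matching: that the only vertices unmatched on the left lie in $Y\setminus Z$, and that every matched path starts in $Y$. The inference behind them (no right match, hence no in-arc, hence in $Y$) fails because a vertex outside $Y\cup Z$ can start a chain.
\begin{itemize}
\item Counterexample: take $V=\{y,v,s,t,z\}$, $Y=\{y\}$, $Z=\{z\}$, with edges $y\to v$, $s\to t$, $t\to z$. The matching $\{y\to v,\ s\to t,\ t\to z\}$ is maximum (size $3$), yet its chain $s\to t\to z$ starts outside $Y$, and $v\notin Y$ is unmatched on the left.
\item Your own later sentence, that a chain end outside $Z$ ``would have been counted among the unmatched left vertices,'' already contradicts the first claim.
\end{itemize}

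Neither claim is needed. The count goes through for an arbitrary matching $M$, with no maximality and no cycle surgery:
\begin{itemize}
\item Every vertex of $Y\setminus Z$ has no in-arc, so it is either unmatched on the left or the first vertex of a chain.
\item Every chain that starts in $Y$ but does not end in $Z$ ends at a vertex of $V\setminus(Y\cup Z)$ that is unmatched on the left. These end vertices are distinct from each other and from the unmatched vertices in $Y$.
\item Hence $|V\setminus Z|-|M|\ge |Y\setminus Z|-k$, where $k$ is the number of chains running from $Y$ to $Z$.
\item These $k$ chains, together with the trivial paths on $Y\cap Z$ (which lie on no chain, being on neither side), are vertex-disjoint. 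So $\rho_\Gcal(Z,Y)\ge |Y|-|V\setminus Z|+|M|$.
\end{itemize}
Alternatively, you can make both of your claims true by a further size-preserving normalization. Replace a chain that starts outside $Y$ by self-matches on all its vertices except the last. Replace a chain that ends outside $Z$ by self-matches on all its vertices except the first. With either fix, Steps~1 and~2 together give the identity.
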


This duality is powerful: it suggests that every statement phrased in terms of path ranks and its variants, including the familiar \textit{d-separation} and \textit{t-separation}, can be equivalently rephrased \update{in terms of edge ranks}. It reveals that, despite the very different graphical objects involved in the two ranks, they offer complementary perspectives on a same notion in the digraph, namely, \textit{bottleneck}, which captures how dependencies arise in observed data, and is thus central to causal discovery.

In fact, this duality has long been studied in the matroid community~\citep{konig1931graphs, perfect1968applications, ingleton1973gammoids}, while only the path rank side has been well known in causal discovery. We thus introduce edge ranks here, filling the other side to the rank-based toolbox. It is not that edge ranks are always better, but having both perspectives is beneficial. Within this work, edge ranks indeed lead to simpler derivations. For instance, let us rephrase~\cref{thm:equivalence_as_path_ranks} using edge ranks below:

\begin{restatable}[\textbf{Equivalence via edge ranks}]{lemma}{THMEQUIVALENCEASEDGERANKS}\label{thm:equivalence_as_edge_ranks}
Two irreducible models are distributionally equivalent, written $\Gcal \overset{X}{\sim} \Hcal$, if and only if there exists a permutation $\pi$ over the vertices $V(\Gcal)$, such that
\begin{equation}
    r_\Gcal(Z, Y) = r_\Hcal(\pi(Z), Y) \quad \text{for all } Z,Y \subseteq V(\Gcal) \text{ with } \ L \subseteq Y.
\end{equation}
\end{restatable}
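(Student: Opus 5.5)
The plan is to derive this lemma directly from \cref{thm:equivalence_as_path_ranks} by rewriting both sides of \cref{eqn:rhoGeqrhoHpi} with the duality of \cref{thm:duality_between_two_ranks}. Both lemmas share the same ``there exists a permutation $\pi$'' quantifier. So it suffices to show that, for a fixed bijection $\pi$ on $V=V(\Gcal)=V(\Hcal)$, the family of path-rank equalities is equivalent to the family of edge-rank equalities, once the index sets are reparametrized.

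First, I simplify the duality. Since $\min(|Z|,|Y|)+\max(|Z|,|Y|)=|Z|+|Y|$, \cref{eq:duality_two_ranks} becomes
\begin{equation*}
\rho_\Gcal(Z,Y) \;=\; |Z|+|Y|-|V| + r_\Gcal(V\backslash Y,\ V\backslash Z),
\end{equation*}
and the analogous identity holds in $\Hcal$, which has the same vertex set. Applying it in $\Hcal$ to the pair $(Z,\pi(Y))$ and using $|\pi(Y)|=|Y|$ and $V\backslash\pi(Y)=\pi(V\backslash Y)$ (valid because $\pi$ is a bijection of $V$) gives
\begin{equation*}
\rho_\Hcal(Z,\pi(Y)) \;=\; |Z|+|Y|-|V| + r_\Hcal(\pi(V\backslash Y),\ V\backslash Z).
\end{equation*}
The additive constants are identical on both sides. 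Hence, for each pair $(Z,Y)$,
\begin{equation*}
\rho_\Gcal(Z,Y)=\rho_\Hcal(Z,\pi(Y)) \iff r_\Gcal(V\backslash Y, V\backslash Z) = r_\Hcal(\pi(V\backslash Y), V\backslash Z).
\end{equation*}

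Next, I reparametrize by $Z' \coloneqq V\backslash Y$ and $Y'\coloneqq V\backslash Z$. As $Y$ ranges over all subsets of $V$, so does $Z'$. As $Z$ ranges over subsets of $X$, $Y'$ ranges exactly over the subsets of $V$ containing $V\backslash X = L$. Thus the quantifier ``for all $Z\subseteq X$, $Y\subseteq V$'' in \cref{thm:equivalence_as_path_ranks} translates bijectively into ``for all $Z',Y'\subseteq V$ with $L\subseteq Y'$''. The condition then reads $r_\Gcal(Z',Y')=r_\Hcal(\pi(Z'),Y')$, which is the claimed statement with the same $\pi$. Both directions of the ``if and only if'' follow at once.

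There is no substantial obstacle, since the whole argument is bookkeeping on top of \cref{thm:duality_between_two_ranks}. The one point that needs care is the role of the permutation. In the path-rank formulation $\pi$ acts on the source set $Y$, whereas after complementation it acts on the first (target) argument of the edge rank. This relies on $\pi$ commuting with complementation in $V$, which holds because both models have been relabeled to the common vertex set $X\cup L$, as set up in \cref{subsec:mixing_matrix_to_path_ranks}.
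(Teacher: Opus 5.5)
Your proposal is correct and takes the route the paper intends. The paper omits this proof as an immediate consequence of \cref{thm:equivalence_as_path_ranks} together with the duality \cref{thm:duality_between_two_ranks}, and that is exactly what you do: rewrite the duality as $\rho = |Z|+|Y|-|V|+r(V\backslash Y, V\backslash Z)$, the same form the paper uses in \cref{eq:query_ranks_from_data}, then reparametrize by $Z' = V\backslash Y$ and $Y' = V\backslash Z \supseteq L$ with the same $\pi$.
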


As we will see in the next section, this formulation paves the way to our final criterion for equivalence. To conclude this section, we provide a side-by-side comparison of two ranks (\cref{table:comparison_between_matrix_rank_and_matching_rank}; \cref{app:discussion_summary_side_by_side_path_and_edge_rank}). %

\section{The Graphical Characterization of Distributional Equivalence}
\label{sec:equiv_class}

In previous sections, through a step-by-step breakdown of equivalence, we have arrived at a key result, \cref{thm:equivalence_as_edge_ranks}, which, notably, is framed by a new tool we introduced: edge ranks. Building on this foundation, in this section, we provide our final graphical criterion for distributional equivalence, and present a transformational characterization that enables traversal of all digraphs in the equivalence class. \looseness=-1

\update{

We first study the task of deciding whether two given models are equivalent. For this purpose, although~\cref{thm:equivalence_as_edge_ranks} offers a more local condition for each rank check, it still requires a large number of total checks: one must go through all sets $Y \supseteq L$, which amounts to all subsets $\bm{x} \subseteq X$. As noted in our earlier analogy (\cref{subsec:hardness_path_ranks}), this remains akin to ``same d-separations,'' instead of a practical criterion like ``same adjacencies and v-structures.'' Then, does~\cref{thm:equivalence_as_edge_ranks} yield such a practical criterion?

Fortunately, this time, the answer is yes. Unlike the complexities encountered with path ranks in~\cref{subsec:hardness_path_ranks}, edge ranks allow~\cref{thm:equivalence_as_edge_ranks} to admit a nice local decomposition: instead of checking all subsets $\bm{x} \subseteq X$, it suffices to check each singleton $X_i \in X$ independently. This yields our final graphical criterion:\looseness=-1 %

\begin{restatable}[\textbf{Graphical criterion for distributional equivalence}]{theorem}{THMSTRUCTURALCRITERIA}\label{thm:structural_graphical_criteria}
In a digraph $\Gcal$, we define the ``children bases'' of a vertex set $Y \subseteq V(\Gcal)$ as vertex sets that admit perfect edge matchings from $Y$:
\begin{equation}
    \operatorname{bases}_\Gcal(Y) \; \coloneqq \; \{ Z \subseteq \ch_\Gcal(Y) \cup Y : \ r_\Gcal(Z,Y) = |Z|=|Y|\}.
\end{equation}
Then, two irreducible models $(\Gcal, X)$ and $(\Hcal, X)$ are distributionally equivalent, if and only if there exists a permutation $\pi$ over the vertices $V(\Gcal)$, such that the following conditions hold:
\begin{equation}
\begin{cases}
\operatorname{bases}_\Gcal(L) = \pi(\operatorname{bases}_\Hcal(L)), \quad \text{and} \\
\operatorname{bases}_\Gcal(L \cup \{X_i\}) = \pi(\operatorname{bases}_\Hcal(L\cup \{X_i\})) \quad \text{for each } X_i\in X.
\end{cases}
\end{equation}
\end{restatable}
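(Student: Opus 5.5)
The plan is to start from \cref{thm:equivalence_as_edge_ranks} and recast it in matroid language. For a fixed set $Y \subseteq V$, the map $Z \mapsto r_\Gcal(Z,Y)$ on subsets $Z \subseteq V$ is the rank function of a transversal matroid $M_\Gcal(Y)$ on ground set $V$. Its presentation is the family of closed out-neighbourhoods $N_\Gcal[y] \coloneqq \ch_\Gcal(y)\cup\{y\}$ for $y \in Y$. Because self-matches are allowed, $Y$ itself is matched perfectly, so $M_\Gcal(Y)$ has rank exactly $|Y|$. Its bases are therefore precisely the sets $Z$ with $r_\Gcal(Z,Y)=|Z|=|Y|$. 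Every such $Z$ lies in $\ch_\Gcal(Y)\cup Y$, and elements outside this set are loops. Hence $\operatorname{bases}_\Gcal(Y)$ is exactly the set of bases of $M_\Gcal(Y)$. Since a matroid is determined by its bases, the condition of \cref{thm:equivalence_as_edge_ranks} becomes: there is a $\pi$ with $M_\Gcal(Y) = \pi(M_\Hcal(Y))$ for every $Y = L\cup\bm{x}$, $\bm{x}\subseteq X$. Here $\pi$ acts on the ground set, i.e. on the $Z$-side, matching how $\pi$ appears in that lemma.

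The ``only if'' direction is then immediate, by specialising to $\bm{x}=\emptyset$ and $\bm{x}=\{X_i\}$. The real content is the ``if'' direction: the matroids for singletons must determine those for arbitrary $\bm{x}$. For this I would use the standard fact (Rado/Edmonds) that a transversal matroid is the matroid union of rank-one matroids, one per presenting set. Write $U_\Gcal(y)$ for the rank-one matroid on $V$ whose non-loops are $N_\Gcal[y]$. Then
\begin{equation*}
M_\Gcal(L\cup\bm{x}) \;=\; M_\Gcal(L)\vee \textstyle\bigvee_{x\in\bm{x}} U_\Gcal(x).
\end{equation*}
Matroid union is associative and commutative, and it commutes with relabelling: $\pi(M\vee N)=\pi(M)\vee\pi(N)$. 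These three properties are the only algebra needed.

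The main step is an induction on $|\bm{x}|$ that swaps the $\Gcal$-pieces for $\Hcal$-pieces one at a time. The hypothesis for $X_i$ reads $M_\Gcal(L)\vee U_\Gcal(X_i) = \pi(M_\Hcal(L))\vee\pi(U_\Hcal(X_i))$. Combined with $M_\Gcal(L)=\pi(M_\Hcal(L))$, this gives
\begin{equation*}
M_\Gcal(L)\vee U_\Gcal(X_i) \;=\; M_\Gcal(L)\vee \pi(U_\Hcal(X_i)).
\end{equation*}
So although $U_\Gcal(X_i)$ and $\pi(U_\Hcal(X_i))$ may differ, they are interchangeable once united with $M_\Gcal(L)$. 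They therefore remain interchangeable once united with anything of the form $M_\Gcal(L)\vee K$: by associativity and commutativity, $M_\Gcal(L)\vee K\vee U = K\vee(M_\Gcal(L)\vee U)$.

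Applying this swap successively to each $x\in\bm{x}$, we get
\begin{equation*}
M_\Gcal(L\cup\bm{x}) \;=\; M_\Gcal(L)\vee\textstyle\bigvee_{x}\pi(U_\Hcal(x)) \;=\; \pi\big(M_\Hcal(L)\vee\textstyle\bigvee_x U_\Hcal(x)\big) \;=\; \pi(M_\Hcal(L\cup\bm{x})),
\end{equation*}
which is exactly the condition of \cref{thm:equivalence_as_edge_ranks}.

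I expect the main obstacle to be rigour rather than ideas. First, one must justify the transversal-matroid/union identification, including the self-match convention and the nonemptiness of each $N[y]$. Second, one must check that the permutation $\pi$ in \cref{thm:equivalence_as_edge_ranks} acts only on the $Z$ side, so that it passes through the unions as above, with $Y$ left unpermuted. If that lemma's $\pi$ were meant to fix $X$ or interact with $Y$, I would re-derive the needed form from \cref{thm:equivalence_as_path_ranks} via \cref{thm:duality_between_two_ranks}.
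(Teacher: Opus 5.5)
Your proof is correct, but it takes a different route from the paper. The paper deduces the theorem from \cref{lem:all_union_reduce_to_single_L_Yis} and proves that lemma by contraposition in matching language. Given a failing $\bm{x}$, it picks a basis $B$ of $Q_{:,L\cup\bm{x}}$ that is not a basis of $H_{:,L\cup\bm{x}}$. It then picks a $Q$-transversal $\phi$ of $B$ minimising the number of $b$ with $H_{b,\phi(b)}=0$, and asserts that exactly one such $b$ remains. Finally, it restricts to the elements matched into $L\cup\{\phi(b)\}$ to exhibit a failing singleton.

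You instead write $M_\Gcal(L\cup\bm{x})=M_\Gcal(L)\vee\bigvee_{x} U_\Gcal(x)$ and swap the rank-one pieces one at a time, with $M_\Gcal(L)$ as a catalyst that is never consumed. The rigour concerns you flag are already settled in the paper. Your union decomposition is just the iterated form of \cref{lem:column_stack_independent_sets_union}, applied with single-column blocks whose supports include the diagonal entry. Associativity, commutativity and compatibility with relabelling hold trivially for the operation $\mathcal{I}\vee\mathcal{J}=\{I\cup J : I\in\mathcal{I},\ J\in\mathcal{J}\}$ on families of independent sets. So you do not even need the theorem that a union of matroids is a matroid. Your reading of $\pi$ as acting only on the $Z$ side is also exactly how it appears in \cref{thm:equivalence_as_edge_ranks}; your $\pi$ is the inverse of the lemma's, which is harmless.

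Your route is shorter and purely algebraic. It makes transparent that the hypothesis $L\subseteq Y$ is precisely what supplies the common factor, and it would apply verbatim to any family of matroids combined by union. It also avoids the paper's ``exactly one mismatch by minimality'' step, which the paper supports only with a parenthetical appeal to basis exchange. The paper's route, in return, yields a localized certificate: from any failing $\bm{x}$ it names a specific $X_i\in\bm{x}$ and a specific basis witnessing the failure at $L\cup\{X_i\}$.
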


To interpret this criterion, let us consider the causally sufficient case where $L=\varnothing$. In this case, each $\operatorname{bases}_\Gcal(\{X_i\})$ is just $X_i$ with its children. Then,~\cref{thm:structural_graphical_criteria} immediately reduces to the classical result of exact digraph identification up to permutation~\citep{lacerda2008discovering}. Interestingly, that result has recently been revisited also from a bipartite matching view used here~\citep{sharifian2025near}.\looseness=-1 %

Having established~\cref{thm:structural_graphical_criteria} as an efficient criterion for determining equivalence, we now turn to another task of traversing all digraphs in an equivalence class. For this purpose, however, a determining criterion alone offers little guidance. Again, we recall the analogy with Markov equivalence. Note that except for the criterion of ``same adjacencies and v-structures,'' there is an alternative characterization: ``two acyclic digraphs are equivalent if and only if one can reach the other via a sequence of \textit{covered edge reversals},'' known as ``Meek conjecture''~\citep{meek1997graphical}. Such a transformational characterization offers a natural way for equivalence class traversal. In light of it, we next develop such a transformational characterization, analogous to ``Meek conjecture'' for our setting.

}

We start with the permutation part in~\cref{thm:structural_graphical_criteria}, which corresponds to row permutations to the support matrix $Q^{(\Gcal)}$. Such permutations must result in valid support matrices, i.e., ones with nonzero diagonals. By cycle decomposition of permutations, this leads to an observation: disjoint cycles in the digraph can be freely reversed without affecting equivalence. Formally:

\begin{restatable}[\textbf{Admissible cycle reversals}]{lemma}{LEMADMISSIBLECYCLEREVERSALS}\label{lem:cycle_reversals}
For a digraph $\Gcal$, let $\mathcal{C}$ be any collection of vertex-disjoint simple cycles in $\Gcal$. Define a new digraph $\Hcal$ where for each edge $V_i\to V_j \in \Gcal$:
\begin{enumerate}[itemsep=0pt,parsep=2pt, topsep=0pt, leftmargin=22pt]
    \item If $V_i\rightarrow V_j$ is on a cycle in $\mathcal{C}$, then include $V_j\rightarrow V_i $ in $\Hcal$;
    \item Otherwise, if $V_j$ is on a cycle in $\mathcal{C}$ with the predecessor $V_k\rightarrow V_j$, then include $V_i\rightarrow V_k $ in $\Hcal$;
    \item Otherwise, simply include $V_i\rightarrow V_j $ in $\Hcal$.
\end{enumerate}
Then, with this new $\Hcal$, the equivalence $\Gcal \overset{X}{\sim} \Hcal$ still holds, for every $X \subseteq V(\Gcal)$.
\end{restatable}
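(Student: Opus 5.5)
The proof works directly at the level of support matrices, with no graph-search argument. I plan to show that $Q^{(\Hcal)}$ is exactly a row permutation of $Q^{(\Gcal)}$, and then invoke \cref{thm:equivalence_as_edge_ranks}, with one caveat about irreducibility addressed at the end. A row permutation of the support matrix preserves every matching rank with the rows relabelled. So by \cref{lem:edge_rank_constraints_in_support_matrices} we would get $r_\Gcal(Z,Y)=r_\Hcal(\pi(Z),Y)$ for all $Z,Y$, and in particular for all $Y\supseteq L$.

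\textbf{The permutation.} Let $\pi$ be the permutation of $V$ that sends each vertex on a cycle in $\mathcal{C}$ to its predecessor on that cycle, so $\pi(V_j)=V_k$ when $V_k\to V_j$ lies on the cycle. All other vertices are fixed by $\pi$. I claim that $Q^{(\Hcal)}_{\pi(V_j),V_i}=Q^{(\Gcal)}_{V_j,V_i}$ for all $i,j$, i.e.\ the row of $V_j$ in $Q^{(\Gcal)}$ becomes the row of $\pi(V_j)$ in $Q^{(\Hcal)}$. The entries are checked in three cases.
\begin{itemize}
\item \emph{Diagonal entry of $V_j$ on a cycle.} The entry $(V_j,V_j)$ of $Q^{(\Gcal)}$ is sent to $(V_k,V_j)$. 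This entry is nonzero in $Q^{(\Hcal)}$ because rule 1 reverses the cycle edge $V_k\to V_j$ into $V_j\to V_k$.
\item \emph{Cycle edge.} The entry $(V_j,V_k)$, coming from the cycle edge $V_k\to V_j$, is sent to the diagonal $(V_k,V_k)$, which is always nonzero.
\item \emph{Non-cycle edge $V_i\to V_j$ with $V_j$ on a cycle.} This edge becomes $V_i\to V_k$ by rule 2, which is exactly the moved entry $(V_k,V_i)$.
\end{itemize}
Rows of vertices not on any cycle are untouched by rule 3. Because the cycles are vertex-disjoint, the predecessor $V_k$ is uniquely defined, so $\pi$ is well defined.

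\textbf{Checking the construction is consistent.} Three points need verifying.
\begin{itemize}
\item Every nonzero entry of $Q^{(\Hcal)}$ arises from the correspondence above, so the map is a bijection of supports. This requires checking that rules 1 and 2 never collide into the same entry of $\Hcal$.
\item $\Hcal$ has no self-loops. A rule-2 edge $V_i\to V_k$ with $V_i=V_k$ would come from an edge $V_k\to V_j$ of $\Gcal$ that is not on the cycle. But $V_k\to V_j$ is the cycle edge, so this is impossible in a simple digraph.
\item The case of 2-cycles, where reversal maps the cycle onto itself, is handled separately.
\end{itemize}

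\textbf{Main obstacle: irreducibility.} \Cref{thm:equivalence_as_edge_ranks} is stated only for irreducible models, while the lemma claims equivalence for every $X$. I would avoid this by going back to the algebra. For generic $B\in\mathcal{B}(\Gcal)$, set $I-B'\coloneqq P(I-B)D$, where $P$ is the row permutation above and $D$ is a scaling chosen to make the diagonal equal to one. Then $B'\in\mathcal{B}(\Hcal)$ has the support just computed, and $(I-B')^{-1}=D^{-1}(I-B)^{-1}P^{\top}$. Here $D$ rescales rows, and these are matched with the noise relabelling, while $P^\top$ permutes columns, i.e.\ the noises. We need this to keep the observed rows $X$ fixed. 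To arrange that, I would instead take $I-B'=D\,P(I-B)$ with $P$ acting on rows and check that the rows indexed by $X$ are preserved up to scaling.

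The cleanest form is $(I-B')^{-1}=(I-B)^{-1}P^{-1}D^{-1}$, so the rows of $A$ are unchanged and only the columns are permuted and scaled. This gives $\Acal(\Gcal,X)\subseteq\fillover{\Acal(\Hcal,X)}$, and \cref{eq:all_distributions_observed} then yields $\Pcal(\Gcal,X)\subseteq\Pcal(\Hcal,X)$ directly, with no irreducibility needed. The reverse inclusion follows by symmetry, since reversing the reversed cycles in $\Hcal$ recovers $\Gcal$. The delicate points are that the diagonal of $P(I-B)$ is nonzero, which holds because the cycle edge weights are nonzero (true generically; I expect a closure or density argument to handle the nongeneric remainder), and that invertibility is preserved.
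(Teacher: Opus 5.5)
Your core construction is correct and is essentially the argument the paper relies on. The paper gives no proof of its own: it cites Lacerda et al.\ and motivates the lemma as a row permutation of $Q^{(\Gcal)}$ that keeps the diagonal nonzero. Your support check $Q^{(\Hcal)}=PQ^{(\Gcal)}$ goes through, and 2-cycles need no separate treatment. For $I-B'=DP(I-B)$ you get $(I-B')^{-1}=(I-B)^{-1}P^{-1}D^{-1}$, which yields $p(X)\in\Pcal(\Hcal,X)$ for every $X$ directly. Here $D$ must scale rows; the column-scaled variant you first wrote would rescale the observed variables. This route needs neither irreducibility nor ICA identifiability, so it is better than going through \cref{thm:equivalence_as_edge_ranks}.

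The gap is the last step: the promised ``closure or density argument for the nongeneric remainder'' cannot work. With the paper's definitions, $\mathcal{B}(\Gcal)$ allows zero weights, and on that locus the literal equality $\Pcal(\Gcal,X)=\Pcal(\Hcal,X)$ is false.
\begin{itemize}
\item Take $X=V$, let $\Gcal$ be the 3-cycle $V_1\to V_2\to V_3\to V_1$, and let $\mathcal{C}$ consist of this cycle. Then $\Hcal$ is $V_1\to V_3\to V_2\to V_1$.
\item Setting the weight of $V_3\to V_1$ to zero puts the chain $V_1\to V_2\to V_3$, with nonzero weights, in $\Pcal(\Gcal,V)$.
\item If this distribution lay in $\Pcal(\Hcal,V)$, identifiability of square non-Gaussian ICA would force $I-B'$ to be a row-scaled row permutation of $I-B$. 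The rows of $I-B$ have supports $\{V_1\}$, $\{V_1,V_2\}$ and $\{V_2,V_3\}$.
\item Row $V_m$ of $I-B'$ must contain $V_m$ and lie in $\{V_1,V_2\}$, $\{V_2,V_3\}$, $\{V_3,V_1\}$ for $m=1,2,3$ respectively. So both $\{V_1\}$ and $\{V_1,V_2\}$ could only go to row $V_1$, a contradiction.
\end{itemize}
A density argument can therefore give at most equality of closures, never of the sets themselves.

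What you can, and should, prove is the generic version, which is how the paper and Lacerda et al.\ actually use the lemma. Your map $B\mapsto B'$ is a bijection between two parameter sets:
\begin{itemize}
\item parameters of $\Gcal$ with nonzero weights on the edges of the cycles in $\mathcal{C}$;
\item parameters of $\Hcal$ with nonzero weights on the reversed cycles.
\end{itemize}
This holds because the reversed edge $V_j\to V_k$ receives weight $1/b$, where $b\neq 0$ is the weight of $V_k\to V_j$. Hence the distribution sets generated by these generic parameters coincide, and so do their closures. The same genericity is implicit elsewhere in the paper: its proof of the ``$\Leftarrow$'' direction of \cref{thm:equivalence_as_path_ranks} only treats randomly chosen weights.
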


This result was also shown by~\citep{lacerda2008discovering}. It highlights that in the linear non-Gaussian setting, cycles do not introduce substantial complexity. One may illustrate it using examples in~\cref{fig:example_n5l2_6_digraphs}.\looseness=-1

We then examine a more subtle part in~\cref{thm:structural_graphical_criteria}, concerning edge rank equivalence, that is, when all the involved perfect bipartite matchings via edges are unchanged. Intuitively, it is about how edges are structurally ``crucial'' for maintaining matchings. This leads to the following criterion about edge additions or deletions, corresponding to flipping entries in the support matrix:  %

\begin{restatable}[\textbf{Admissible edge additions/deletions}]{lemma}{LEMADMISSIBLEEDGEADDITIONDELETIONS}\label{lem:edge_flips}
Let $(\Gcal,X)$ be an irreducible model. For any edge $V_i \to V_j$ not currently in $\Gcal$, adding it to $\Gcal$ preserves equivalence on $X$ if and only if:
\begin{equation}\label{eq:can_add_an_edge}
r_\Gcal(V_i\text{'s nonchildren} \backslash \{V_j\}, \ L \backslash \{V_i\}) \;<\; r_\Gcal(V_i\text{'s nonchildren}, \ L \backslash \{V_i\}),
\end{equation}
where $V_i\text{'s nonchildren}$ denotes $V(\Gcal) \backslash \ch_\Gcal(V_i) \backslash \{V_i\}$, i.e., zero entries in support column $Q^{(\Gcal)}_{:,V_i}$. Conversely, an edge can be deleted if and only if it can be re-added by this criterion.
\end{restatable}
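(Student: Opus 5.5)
The plan is to reduce the lemma to a comparison of basis families, using \cref{thm:structural_graphical_criteria}, and then to analyse how one new nonzero entry in the support matrix changes them. Write $e = V_i \to V_j$ and $\Gcal' = \Gcal + e$. Then $Q^{(\Gcal')}$ differs from $Q^{(\Gcal)}$ only at entry $(V_j, V_i)$, which turns from $0$ into `$\times$'. Two consequences follow.
\begin{itemize}[itemsep=0pt,topsep=2pt]
\item For every $Y$, we have $\bases_\Gcal(Y) \subseteq \bases_{\Gcal'}(Y)$, since adding support can only create matchings.
\item A set can newly become a basis only if $V_i \in Y$, $V_j \in Z$, and every perfect matching of $Y$ onto $Z$ in $\Gcal'$ uses the new edge.
\end{itemize}
Let $N$ denote $V_i$'s nonchildren.

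\textbf{Removing the permutation.} Suppose some $\pi$ witnesses $\Gcal \overset{X}{\sim} \Gcal'$ in \cref{thm:structural_graphical_criteria}. Then for each relevant $Y \in \{L\} \cup \{L \cup \{X_k\}\}$ we have $|\bases_\Gcal(Y)| = |\bases_{\Gcal'}(Y)|$, since $\pi$ is a bijection on families of sets. Combined with the inclusion above, the families must coincide. So equivalence holds if and only if no basis family indexed by $L$ or $L \cup \{X_k\}$ gains a new member, and we may take $\pi$ to be the identity. This turns the lemma into a purely combinatorial statement about bipartite matchings in the support matrix.

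\textbf{Sufficiency.} Assume \cref{eq:can_add_an_edge}, i.e.\ $V_j$ is covered by every maximum matching of $L \setminus \{V_i\}$ into $N$. Take a perfect matching $M'$ in $\Gcal'$ from $Y \supseteq L \cup \{V_i\}$ onto $Z$ that uses $V_i \to V_j$. I would build an alternating path that starts at $V_j$ and re-matches $V_i$ to some other row of its own support, i.e.\ its diagonal or one of its children, so that no row of $Z$ is lost.
\begin{itemize}[itemsep=0pt,topsep=2pt]
\item Restrict $M'$ to $L \setminus \{V_i\}$, and compare it with a maximum matching $M^\ast$ of $L \setminus \{V_i\}$ into $N$.
\item By the hypothesis, $M^\ast$ necessarily covers $V_j$. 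The symmetric difference $M' \triangle M^\ast$ then contains a path from $V_j$ that either ends in a row of $Z$ outside $N$, which is a row $V_i$ can take, or can be swapped without changing the covered part of $Z$.
\end{itemize}
The clean way to organise this is the min-cut form of \cref{def:edge_ranks}: the strict drop in rank gives a minimum cover of $(N, L \setminus \{V_i\})$ that is forced to use $V_j$. Transporting that cover to $(Z \setminus \{V_j\}, Y \setminus \{V_i\})$ shows that $Y \setminus \{V_i\}$ cannot exhaust $Z$'s rows outside $V_i$'s support. This is the step I expect to be the main obstacle: getting the alternating-path or König bookkeeping right when $Y$ additionally contains an observed $X_k$, and when $V_i$ itself is latent rather than observed.

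\textbf{Necessity.} If \cref{eq:can_add_an_edge} fails, take a maximum matching $M^\ast$ of $L \setminus \{V_i\}$ into $N \setminus \{V_j\}$ of full size $r_\Gcal(N, L \setminus \{V_i\})$. Extend $M^\ast \cup \{V_i \to V_j\}$ to a perfect matching of $Y$, where $Y = L$ if $V_i \in L$ and $Y = L \cup \{V_i\}$ otherwise. The extension would use augmenting steps through rows outside $N$; the diagonal guarantees that a perfect matching of $Y$ exists at all. Call the resulting row set $Z$. I would then argue, via the maximality of $M^\ast$ within $N$, that any perfect matching of $Y$ onto $Z$ in $\Gcal$ would give $V_i$ a row in its support and push $L \setminus \{V_i\}$ onto more rows of $N$ than $r_\Gcal(N, L \setminus \{V_i\})$ allows. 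Hence $Z \in \bases_{\Gcal'}(Y) \setminus \bases_\Gcal(Y)$, and equivalence fails.

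\textbf{Deletion.} The deletion statement follows by applying the addition criterion to $\Gcal - e$. Irreducibility is preserved in both directions, because equivalence preserves the number of latent variables by \cref{def:irreducibility}.
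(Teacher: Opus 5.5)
\textbf{Sufficiency has a gap.} Your reduction to basis families and your removal of $\pi$ (inclusion plus equal cardinalities) are sound, and so is your necessity direction. The sufficiency direction, which you flag yourself, is not yet a proof. The alternative ``can be swapped without changing the covered part of $Z$'' does not give a matching of $Y$ onto $Z$ in $\Gcal$, and the min-cut ``transport'' is not justified. The actual second alternative is that the alternating path ends at a row of $N$ left free by $M^\ast$. This case must be shown impossible, and that is exactly where \cref{eq:can_add_an_edge} enters; your sketch never uses the hypothesis there.

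Here is the argument. Let $M'$ be a perfect matching of $Y$ onto $Z$ in $\Gcal'$ that uses $V_i\to V_j$. Let $M^\ast$ be a maximum matching of $L\setminus\{V_i\}$ into $N$ in $\Gcal$; by hypothesis it covers $V_j$.
\begin{itemize}
\item Let $c_1$ be the $M^\ast$-partner of $V_j$. Let $z_1$ be the $M'$-partner of $c_1$, which exists since $c_1\in L\setminus\{V_i\}\subseteq Y$. Let $c_2$ be the $M^\ast$-partner of $z_1$, and so on.
\item This walks along the component of $V_j$ in $M^\ast\cup M'$ away from $V_i$, so all vertices are distinct. Stop at the first $z_k$ that is outside $N$ or not covered by $M^\ast$.
\item Suppose $z_k\in N$ is not covered by $M^\ast$. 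Replace the pairs $(c_1,V_j),(c_2,z_1),\dots,(c_k,z_{k-1})$ of $M^\ast$ by $(c_1,z_1),\dots,(c_k,z_k)$; these are edges of $\Gcal$ because $c_t\neq V_i$. The result is a maximum matching of $L\setminus\{V_i\}$ into $N$ that misses $V_j$, contradicting \cref{eq:can_add_an_edge}.
\item Hence $z_k\in\ch_\Gcal(V_i)\cup\{V_i\}$. Match $V_i$ to $z_k$, $c_1$ to $V_j$, and $c_{t+1}$ to $z_t$ for $t<k$, and let every other column keep its $M'$-partner. This is a perfect matching of $Y$ onto $Z$ in $\Gcal$.
\end{itemize}
An observed column $X_k\in Y$ and its partner are never touched, and nothing depends on whether $V_i$ is latent. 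So the bookkeeping you were worried about does not arise.

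\textbf{Comparison with the paper.} The paper reduces via \cref{lem:all_union_reduce_to_single_L_Yis} to the single matroid on columns $L\cup\{V_i\}$ and proves \cref{lem:when_an_edge_in_bipartite_can_be_altered_helper}. It first restricts the ground set to $R=N$. There $\mrank(H_{R,:})=\max\{\nu,1+\nu'\}$, with $\nu,\nu'$ the matching ranks of $L\setminus\{V_i\}$ into $N$ and into $N\setminus\{V_j\}$. This shows the coloop condition is exactly rank preservation on $R$, and the paper then lifts from $R$ to all row sets. You instead argue directly with perfect matchings onto all of $Z$.

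Your necessity step is an explicit-witness version of the paper's $(1)\Rightarrow(4)$. It needs no claim about augmenting through rows outside $N$. Let $A\subseteq N\setminus\{V_j\}$ be the $\nu$ rows covered by your necessity-side matching; then any basis $Z$ of $\Gcal'$ containing $A\cup\{V_j\}$ works. In $\Gcal$ the column $V_i$ must take a row outside $N$, so $L\setminus\{V_i\}$ would have to cover $Z\cap N\supseteq A\cup\{V_j\}$, which has size $\nu+1$.

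On the sufficiency side, the alternating path is what controls the interaction between rows inside and outside $N$. The paper's lifting step swaps in a new matching on $Z_R$ and keeps the $Z_{\mathrm{out}}$ part. It does not address that the new matching on $Z_R$ may reuse columns already used by the $Z_{\mathrm{out}}$ part.

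\textbf{Irreducibility.} Do not deduce irreducibility of the modified graph from equivalence, since equivalence is what you are proving.
\begin{itemize}
\item For $\Gcal'$, irreducibility follows from \cref{thm:criteria_irreducibility}, because adding an edge never shrinks $\ch_\Gcal(\bm l)\setminus\bm l$.
\item For $\Gcal-e$ under the criterion, your matching argument shows that the basis families for $L$ and for each $L\cup\{X_k\}$ coincide. By \cref{lem:all_union_reduce_to_single_L_Yis} and \cref{thm:duality_between_two_ranks}, all path ranks $\rho(Z,Y)$ with $Z\subseteq X$ then coincide. The irreducibility condition in the proof of \cref{thm:criteria_irreducibility} is stated in terms of exactly these path ranks.
\end{itemize}
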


In layman's term, \cref{lem:edge_flips} says that an edge $V_i\to V_j$ can be added, only when in the bipartite graph from latents to all vertices currently not $V_i$'s children (including $V_j$), $V_j$ stands as a ``pillar'' across the maximum matchings; in matroid terms, it is a \textit{coloop}. Then, since $V_j$ is already a ``pillar'', adding this edge will not be noticed by any $Y$ containing latent variables. Note that both $V_i$ and $V_j$ may be in $X$ or $L$: edges can be added within each or in either direction. Let us examine an example. %

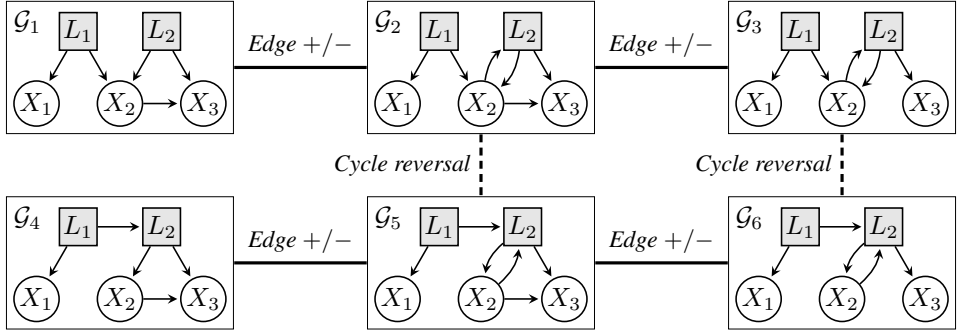
\begin{figure}[t!]
\centering

\begin{tikzpicture}[>=stealth, auto]
\def\nodex{1.386}
\def\nodey{1.2}
\def\panelx{4.8}
\def\panely{2.6}
\def\recpad{0.5}
\def\legendx{-0.5*\nodex - 0.15}
\def\legendy{\nodey + 0.15}
\def\recxl{-0.5*\nodex-\recpad}
\def\recxr{1.5*\nodex+\recpad}
\def\recyb{0-\recpad}
\def\recyt{\nodey+\recpad}

\node (pic1) [inner sep=0pt] at (0,0) {
  \DrawCyclicGraphExampleTwo{\draw (L1) -- (X1); \draw (L1) -- (X2); \draw (L2) -- (X2); \draw (L2) to (X3); \draw (X2) to (X3); \node at (\legendx,\legendy) {$\Gcal_1$}; \draw[very thin] (\recxl,\recyb) rectangle (\recxr,\recyt);}
};
\node (pic2) [inner sep=0pt] at (\panelx,0) {
  \DrawCyclicGraphExampleTwo{\draw (L1) -- (X1); \draw (L1) -- (X2); \draw[bend left=20] (L2) to (X2); \draw[bend left=20] (X2) to (L2); \draw (L2) to (X3); \draw (X2) to (X3); \node at (\legendx,\legendy) {$\Gcal_2$}; \draw[very thin] (\recxl,\recyb) rectangle (\recxr,\recyt);}
};
\node (pic3) [inner sep=0pt] at (2*\panelx,0) {
  \DrawCyclicGraphExampleTwo{\draw (L1) -- (X1); \draw (L1) -- (X2); \draw[bend left=20] (L2) to (X2); \draw[bend left=20] (X2) to (L2); \draw (L2) to (X3); \node at (\legendx,\legendy) {$\Gcal_3$}; \draw[very thin] (\recxl,\recyb) rectangle (\recxr,\recyt);}
};
\node (pic4) [inner sep=0pt] at (0,-\panely) {
  \DrawCyclicGraphExampleTwo{\draw (L1) -- (X1); \draw (L1) -- (L2); \draw (L2) -- (X2); \draw (L2) to (X3); \draw (X2) to (X3); \node at (\legendx,\legendy) {$\Gcal_4$}; \draw[very thin] (\recxl,\recyb) rectangle (\recxr,\recyt);}
};
\node (pic5) [inner sep=0pt] at (\panelx,-\panely) {
  \DrawCyclicGraphExampleTwo{\draw (L1) -- (X1); \draw (L1) -- (L2); \draw[bend right=20] (L2) to (X2); \draw[bend right=20] (X2) to (L2); \draw (L2) to (X3); \draw (X2) to (X3); \node at (\legendx,\legendy) {$\Gcal_5$}; \draw[very thin] (\recxl,\recyb) rectangle (\recxr,\recyt);}
};
\node (pic6) [inner sep=0pt] at (2*\panelx,-\panely) {
  \DrawCyclicGraphExampleTwo{\draw (L1) -- (X1); \draw (L1) -- (L2); \draw[bend right=20] (L2) to (X2); \draw[bend right=20] (X2) to (L2); \draw (L2) to (X3); \node at (\legendx,\legendy) {$\Gcal_6$}; \draw[very thin] (\recxl,\recyb) rectangle (\recxr,\recyt);}
};
\draw[very thick] (pic1) to node[midway, above] {\footnotesize \textit{Edge $+/-$}} (pic2);
\draw[very thick] (pic2) to node[midway, above] {\footnotesize \textit{Edge $+/-$}} (pic3);
\draw[very thick] (pic4) to node[midway, above] {\footnotesize \textit{Edge $+/-$}} (pic5);
\draw[very thick] (pic5) to node[midway, above] {\footnotesize \textit{Edge $+/-$}} (pic6);
\draw[very thick,  densely dashed] (pic3) to node[midway, left] {\footnotesize \textit{Cycle reversal}} (pic6);
\draw[very thick,  densely dashed] (pic2) to node[midway, left] {\footnotesize \textit{Cycle reversal}} (pic5);

\end{tikzpicture}

\caption{An example distributional equivalence class consisting of $6$ digraphs up to $L$-relabeling.
\vspace{-0.5em}
}
\label{fig:example_n5l2_6_digraphs}
\end{figure}

\begin{example}[\textbf{Illustrating edge additions via~\cref{lem:edge_flips}}]\label{eg:edge_addition_deletion}
We consider the digraph $\Gcal_1$ in~\cref{fig:example_n5l2_6_digraphs}, and check why the edge $X_2 \to L_2$ can be added. From $L \backslash \{X_2\} = \{L_1, L_2\}$ to $X_2$'s nonchildren $\{L_1, L_2, X_1\}$, there is a full matching of size $2$, with $(L_1, L_2)$ matched to either $(L_1, L_2)$ or $(X_1, L_2)$. Since $L_2$ appears in both as a ``pillar'', adding $X_2 \to L_2$ preserves edge ranks. In contrast, $X_2 \to L_1$ cannot be added, which, for instance, will change $r_{\Gcal_1}(\{L_1, L_2, X_1\}, \{L_1, L_2, X_2\})$ from $2$ to $3$.
\end{example}

We have introduced two graphical operations that preserve equivalence, namely, cycle reversals and edge additions/deletions. Remarkably, these two operations are not only sufficient but also necessary: together, they fully characterize equivalence. This brings us to our transformational characterization:

\begin{restatable}[\update{\textbf{Transformational characterization of the equivalence class}}]{theorem}{THMFINALGRAPHICALCRITERIA}\label{thm:final_graphical_criteria}
Two irreducible models $(\Gcal,X)$ and $(\Hcal, X)$ are equivalent if and only if $\Gcal$ can be transformed into $\Hcal$, up to $L$-relabeling, via a sequence of admissible cycle reversals and edge additions/deletions, as defined in~\cref{lem:cycle_reversals,lem:edge_flips}.\looseness=-1

Here, ``up to $L$-relabeling'' means there exists a relabeling of $L$ in $\Hcal$ yielding a digraph $\Hcal'$ such that $\Gcal$ reaches $\Hcal'$ via the sequence. Moreover, at most one cycle reversal is needed in this sequence.
\end{restatable}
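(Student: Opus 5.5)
Sufficiency is immediate. Each admissible cycle reversal preserves equivalence by \cref{lem:cycle_reversals}. Each admissible edge addition or deletion preserves equivalence by \cref{lem:edge_flips}. $L$-relabeling trivially preserves $\Pcal(\cdot,X)$. Irreducibility is preserved along the way, since every intermediate graph is equivalent to an irreducible model and has the same number of vertices. Hence \cref{lem:edge_flips} can be applied again at each step. So the whole content is necessity, together with the claim that one cycle reversal suffices.

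For necessity, assume $\Gcal \overset{X}{\sim} \Hcal$. By \cref{thm:equivalence_as_edge_ranks} there is a permutation $\pi$ with $r_\Gcal(Z,Y)=r_\Hcal(\pi(Z),Y)$ for all $Y\supseteq L$. I read this in terms of support matrices: the row-permuted matrix $P_\pi Q^{(\Hcal)}$ has the same matching ranks as $Q^{(\Gcal)}$ on every column set containing $L$.

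The first step is to absorb $\pi$. Since $Q^{(\Hcal)}$ has a nonzero diagonal, and taking $Z=V$, $Y=V$ gives full matching rank, $Q^{(\Gcal)}$ has a perfect matching $\sigma$ aligned with the diagonal of $\Hcal$. Then $\pi$ composed with the identity matching in $\Gcal$ is a permutation whose non-fixed points move along edges of $\Gcal$. Its cycle decomposition is a collection of vertex-disjoint simple cycles of $\Gcal$, up to $L$-relabeling. The $L$-relabeling absorbs the part of $\pi$ that merely permutes latents symmetrically; this freedom is exactly the stated ``up to $L$-relabeling''.

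Applying one admissible cycle reversal (\cref{lem:cycle_reversals}) along these cycles yields $\Gcal'$ with $\Gcal'\overset{X}{\sim}\Gcal$. Moreover, $\Gcal'$ and $\Hcal'$ now satisfy the edge-rank condition with $\pi=\mathrm{id}$. I need to check carefully that the reversal rule in \cref{lem:cycle_reversals} realizes exactly the row permutation of $Q$. Rows indexed by cycle vertices get shifted, which matches rules 1–2 there. This accounts for the ``at most one cycle reversal''.

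The remaining step is the main obstacle. Let $\Gcal'$ and $\Hcal'$ be two digraphs on the same vertex set with $r_{\Gcal'}(Z,Y)=r_{\Hcal'}(Z,Y)$ for all $Y\supseteq L$. I must show that one can pass between them by admissible single-edge additions and deletions. I would work column by column (fix the source $V_i$). By \cref{thm:structural_graphical_criteria}, the condition amounts to equality of the bases families of $L$ and $L\cup\{X_i\}$. These are matroid-type families: transversal matroids on the rows, presented by the columns $L\cup\{X_i\}$. The question then becomes when two presentations of the same transversal matroid are connected by single-entry flips.

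I would try to show that the ``union'' graph $\Gcal'\cup\Hcal'$ is also equivalent, so that one can add the edges of $\Hcal'\setminus\Gcal'$ one at a time and then delete those of $\Gcal'\setminus\Hcal'$. To do this, I would use the fact that transversal presentations of a fixed matroid form a lattice with a unique maximal presentation, following Brualdi–Mason/Bondy. Adding an edge $V_i\to V_j$ keeps the matroid unchanged exactly when $V_j$ is a coloop of the relevant restriction, which is condition \eqref{eq:can_add_an_edge}. Then I would argue that every edge of the maximal presentation can be added in some order. Each added edge keeps every edge rank fixed, so the coloop condition for the remaining edges persists; this is monotone. The same argument applies starting from $\Hcal'$. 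Both graphs then reach the common maximal graph, and deletions are reversed additions by \cref{lem:edge_flips}. Verifying that the maximal presentation is simultaneously maximal for all columns $V_i$, given that edges in one column affect edge ranks involving other columns only through $Y\supseteq L$, is the delicate point I expect to consume most of the work.
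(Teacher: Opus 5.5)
\paragraph{Where the proposal breaks.} The gap is in your first step, absorbing $\pi$ by a cycle reversal of $\Gcal$ itself. Equality of edge ranks between $\Gcal$ and the row-permuted $\Hcal$ carries no entrywise information. Taking $Z=Y=V$ only says that both support matrices have \emph{some} perfect matching, which is trivial. So the permuted diagonal of $Q^{(\Hcal)}$ need not lie in the support of $Q^{(\Gcal)}$, and the cycles of $\pi$ need not be cycles of $\Gcal$.

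The paper's own \cref{fig:example_n5l2_6_digraphs} refutes this step. $\Gcal_1$ and $\Gcal_4$ are both acyclic and equivalent. Since $\Gcal_1$ is acyclic, your step would force $\pi=\mathrm{id}$ up to $L$-relabeling. But the identity fails under either labeling of $L$: $r_{\Gcal_1}(\{L_2,X_3\},L)=1$, while the corresponding edge rank in $\Gcal_4$ is $2$, also after swapping $L_1,L_2$. A valid $\pi$ is the transposition of $L_2$ and $X_2$. The $2$-cycle $L_2\leftrightarrows X_2$ that would carry it lies in neither endpoint; it appears only after the admissible addition $X_2\to L_2$, as in $\Gcal_2$. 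So the single reversal can be placed neither first (your plan) nor last, and your derivation of ``at most one cycle reversal'' does not go through.

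\paragraph{The missing idea.} Perform the reversal at the top of an up-then-down sequence. This is how the paper orders it: the row permutation becomes the ``at most one step'' inside the sequence rather than at the end.

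First align the $L$-columns of $\Hcal$ with those of $\Gcal$. This is where $L$-relabeling genuinely enters: the maximal presentation of the $L$-column transversal matroid is unique only up to the order of its sets. For the same reason, your union $\Gcal'\cup\Hcal'$ is equivalent only after this alignment.

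After alignment, both $Q^{(\Gcal)}$ and $P_\pi Q^{(\Hcal)}$ lie entrywise below one matrix $M$, built column by column:
\begin{itemize}
\item on the $L$-columns, $M$ is the maximal presentation;
\item in each $X_i$-column, $M$ is the maximal solution $D$ of \cref{lem:construct_singleton_augmentation}, which depends on the $L$-columns only through their matroid by \cref{lem:column_stack_independent_sets_union} (see \cref{coro:determine_items_must_not_in_any_solution,lem:hasse_diagram_among_colaug_traverse}).
\end{itemize}
By \cref{lem:all_union_reduce_to_single_L_Yis}, $M$ has the same matroid tower.

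Independence is monotone in the support, so every matrix sandwiched between a presentation and $M$ has the same tower. Hence edges can be added in any order. This also disposes of the ``delicate point'' you anticipate, with no need to track coloop conditions.

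The corrected sequence then runs as follows:
\begin{itemize}
\item Adding edges to $\Gcal$ reaches the digraph with support $M$.
\item Since $M$ also contains the permuted diagonal of $P_\pi Q^{(\Hcal)}$, the cycles of $\pi$ are vertex-disjoint cycles of this maximal digraph.
\item One reversal there produces a digraph whose support is $P_\pi^{-1}M$, which contains the support of $Q^{(\Hcal)}$. Your check that \cref{lem:cycle_reversals} realizes the row shift is correct.
\item Admissible deletions then reach $\Hcal$ up to $L$-relabeling.
\end{itemize}
Apart from this reordering, your second half is essentially the paper's route.
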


Thanks to this transformational characterization,~\cref{thm:final_graphical_criteria} offers a natural way to traverse an equivalence class by e.g., running BFS or DFS over the space of digraphs connected via admissible operations. Such equivalence class structures are illustrated by~\cref{fig:example_n5l2_6_digraphs}, ~\cref{fig:example_n4l1_10_digraphs} (\cref{app:discussion_another_example}), and more in our \href{https://equiv.cc}{online demo}. \update{Note that this traversal can be further accelerated in implementation, by traversing each vertex's children independently in parallel (\cref{lem:all_union_reduce_to_single_L_Yis,lem:hasse_diagram_among_colaug_traverse};~\cref{app:proofs}).}

\update{
Finally, let us return once more to the analogy with Markov equivalence. We have now established counterparts of both ``same adjacencies and v-structures'' and ``Meek conjecture''. A natural question is then whether a counterpart of the CPDAG, an informative presentation of the equivalence class, can also be developed. The answer is again yes. We show that within each cycle-reversal configuration, there exists a unique maximal equivalent digraph of which all others are subgraphs. We further provide efficient criteria to construct this maximal digraph, and to determine edges invariant across the equivalence class (similar to arrows in a CPDAG). Due to space limit, this result is presented in~\cref{thm:presentation_to_equiv_class} (\cref{app:discussion_presentation_equiv_class}). To conclude this section, we provide a side-by-side overview that places our results with their analogues across various classical settings (\cref{tab:equiv_comparison}; \cref{app:discussion_related_work}).  %
}

\section{Algorithm and Evaluation}
\label{sec:algorithm}

\vspace{-0.3em}

In this section, we develop a structural-assumption-free algorithm to recover the underlying causal models from observed data up to distributional equivalence. We name this algorithm as \underline{g}eneral \underline{l}atent-\underline{v}ariable \underline{Li}near \underline{N}on-\underline{G}aussian causal discovery (glvLiNG). Evaluation results are also provided.

\textbf{Algorithm.} \ The glvLiNG pipeline consists of three main steps: it first runs OICA on data to estimate a mixing matrix $\Tilde{A}$, then constructs a digraph $\Tilde{\Gcal}$ to realize rank patterns in $\Tilde{A}$, and finally, starting from $\Tilde{\Gcal}$, traverses the equivalence class using the procedure introduced in~\cref{thm:final_graphical_criteria}. Under the assumptions of access to an oracle OICA and faithfulness \update{(no coincidental low ranks in the mixing matrix beyond those structurally entailed; formally stated in~\cref{assump:faithfulness} at~\cref{app:algorithm}),} glvLiNG is guaranteed to recover the entire class of irreducible models equivalent to the ground-truth model. %

Proofs and detailed formulations of the glvLiNG algorithm are deferred to~\cref{app:algorithm} for page limit. Here, we briefly highlight the core second step: constructing a digraph to realize the observed ranks.

The main challenge lies in this second step, a rank realization task. While the satisfiability nature of this task may suggest brute-force solutions like integer programming, glvLiNG instead offers a more efficient constraint-based approach. Specifically, it proceeds in two phases. Phase 1 recovers edges from latent variables $L$ to all variables $V$, which reduces to a bipartite realization problem known in matroid theory. Phase 2 is more delicate: recover edges from observed variables $X$ to $V$. This may seem combinatorially complex at first glance, since \textit{all} ranks induced by \textit{all} subsets of $X$ must be jointly satisfied (\cref{thm:equivalence_as_path_ranks}). Fortunately, as we have shown in~\cref{thm:structural_graphical_criteria}, these global constraints admit a local decomposition, allowing each single $X_i$'s outgoing edges to be recovered independently. \update{To recover these edges, we give an explicit construction (\cref{lem:construct_singleton_augmentation} in~\cref{app:algorithm}) based directly on querying ranks in the OICA mixing matrix, with no need for solving complex constraint systems.} %

\textbf{Evaluation.} \ We evaluate our approach from five aspects: 1) quantifying the sizes of equivalence classes, 2) assessing glvLiNG's runtime, 3) benchmarking existing methods under oracle inputs, \update{4) evaluating glvLiNG's performance in simulations, and 5) applying glvLiNG to a real-world dataset.}

For 1), we quantify the sizes of equivalence classes, in order to provide an illustrative sense of the uncertainty in latent-variable models. We exhaustively partition digraphs with up to $6$ vertices under various latent configurations. For example, there are $1,027,080$ weakly connected digraphs with $5$ vertices, of which $26,430$ are acyclic. When the first $2$ vertices are latent, $480,640$ of these digraphs yield irreducible models, which finally form $783$ equivalence classes. Full statistics are shown in~\cref{table:size_exhaustive}.\looseness=-1

For 2), we assess the efficiency gain enabled by glvLiNG's constraint-based design. We compare the execution time against a linear programming baseline for constructing digraphs to satisfy ranks of oracle OICA mixing matrices. Results confirm substantial speedup: glvLiNG solves cases with $n=10$ vertices in under $5$s, while the baseline takes hours beyond $n=5$. \textcolor{black}{Full results in~\cref{table:running_time}.}

For 3), we examine how existing methods behave under structural misspecification by applying them to arbitrary latent-variable models possibly beyond their assumptions. We evaluate LaHiCaSl~\citep{xie2024generalized} and PO-LiNGAM~\citep{jin2023structural}, given oracle access to their required tests. Both methods tend to produce overly \update{sparse} graphs and misidentify over half of the edges. \textcolor{black}{Full results in~\cref{table:polingam_fastgin}.} \looseness=-1

\update{
For 4), we evaluate glvLiNG with existing methods under finite samples. We simulate data from random irreducible models, varying numbers of observed and latent variables, graph density, and sample size. We observe that glvLiNG performs particularly better than baselines on denser graphs and stays more robust to latent dimensionality, likely due to avoiding model misspecification, while baselines perform better on sparser graphs. Full setup and results are provided in~\cref{app:experiment_simulation}. %

For 5), we apply glvLiNG to a real-world dataset of daily stock returns (Jan 2000-Jun 2005) from 14 major Hong Kong companies spanning banking, real estate, utilities, and commerce. glvLiNG recovers meaningful patterns, such as major banks acting as central causal sources. The two latent variables recovered seem also to admit plausible interpretations. Full results are in~\cref{app:experiment_realdata}.
} %

\textbf{Final remarks.} \ We conclude with a reflection on the use of OICA in glvLiNG. While one may be concerned about OICA's known inefficiency in practice, we would like to note that the main focus of this work is to characterize distributional equivalence. The glvLiNG algorithm serves more as a proof of concept, showing that such equivalence is indeed recoverable without any structural assumption.

That said, we do see two promising directions for future improvement. 1) For estimation, several existing methods allow partial access to rank information in the mixing matrix without explicitly running OICA. They could be integrated into glvLiNG. 2) For algorithmic efficiency, while glvLiNG already scales well, further pruning is possible. For instance,~\cref{thm:final_graphical_criteria} implies that ancestral relations among observed variables are identifiable, which may help reduce the search space.

\vspace{-0.7em}
\section{Conclusion and Limitations}
\vspace{-0.3em}
\label{sec:conclusions}

\vspace{-0.6em}

In this work, we provide a graphical characterization of distributional equivalence for linear non-Gaussian latent-variable models. Based on it, we develop a constraint-based algorithm, glvLiNG, that recovers the underlying model up to equivalence from data without any structural assumptions. Central to our approach is the introduction of edge rank constraints, a new tool in the rank-based picture. One limitation is the use of OICA in glvLiNG, as discussed above. Future directions include developing OICA-free algorithms, and extending new tools to broader settings like linear Gaussian systems.\looseness=-1

\newpage

\section*{Acknowledgment}
We would like to acknowledge the support from NSF Award No.~2229881, AI Institute for Societal Decision Making (AI-SDM), the National Institutes of Health (NIH) under Contract R01HL159805, and grants from Quris AI, Florin Court Capital, MBZUAI-WIS Joint Program, and the Al Deira Causal Education project. We also thank the anonymous reviewers for their helpful suggestions.

\paragraph{Large Language Models Usage:} \ We used large language models only to aid or polish writing, at the sentence level.

\paragraph{Ethics Statement:} \ This paper presents work whose goal is to advance the field of causal discovery. We do not see any ethical or societal concerns that need to be disclosed.

\paragraph{Reproducibility Statement:} \ We provide code for our algorithm, glvLiNG, along with an interactive demo for traversing equivalence classes, available at \url{https://equiv.cc}.

\bibliographystyle{references}
{\small\bibliography{references}}

\newpage
\appendix

\addcontentsline{toc}{section}{Appendix}
\part{Appendix}
\setcounter{tocdepth}{3}
{\hypersetup{linkcolor=black}
\parttoc
}

\numberwithin{equation}{section}

\normalsize

\section{Details of the glvLiNG algorithm}\label{app:algorithm}

\subsection{Basics of Transversal Matroids}

Before proceeding, let us introduce some basic concepts from matroid theory that will be used later. Throughout, we define matroids in terms of binary matrices, interpreted as adjacency matrices of bipartite graphs where columns point to rows. The matroid is defined over row indices, corresponding to what is known as a \textit{transversal matroid}. For more, one may refer to~\citep{oxley2006matroid}.

\begin{definition}[\textbf{Basics of transversal matroid}]\label{def:basics_of_matroid}
    Let $Q\in\{0,1\}^{m\times n}$ be a binary matrix, interpreted as the adjacency matrix of a bipartite graph where columns $[n]$ point to rows $E\coloneqq [m]$, where $E$ is called the \textit{ground set}. For simplicity, for each row set $Z\subseteq E$ we denote its rank as:
    \begin{equation}
        r(Z) \;\coloneqq\; \mrank(Q_{Z,:}),
    \end{equation}
    though with a slight notation abuse to the letter $r$ we used previously for edge ranks (\cref{def:edge_ranks}). Here, $\mrank$ is the matching rank we defined in~\cref{def:matching_rank_of_a_matrix}. This rank function $r$ turns $E$ into a transversal matroid presented by $Q$. We record the following basic concepts of this matroid, together with some useful properties, written directly in terms of $Q$ and $r$:

    \smallskip
    \textbf{Independent/dependent sets.}
    \begin{equation}
        \begin{aligned}
            \operatorname{Ind}(Q) \;&\coloneqq\; \{ Z\subseteq E:\ r(Z)=|Z| \}, \\
            \operatorname{Dep}(Q) \;&\coloneqq\;2^E\setminus\operatorname{Ind}(Q).
        \end{aligned}
    \end{equation}
    \emph{Note.} $Z$ is independent if and only if the rows $Z$ admit a matching into the columns; dependent sets are those with a matching deficiency.

    \smallskip
    \textbf{Bases.}
    \begin{equation}
        \bases(Q) \;\coloneqq\; \{ B\subseteq E:\ B\in\operatorname{Ind}(Q),\ |B|=r(E) \}.
    \end{equation}
    \emph{Note.} Bases are the maximal independent sets: all its subsets are independent, and all its proper supersets are dependent. Bases are the maximum-cardinality independent sets (all have size $r(E)$). Bases uniquely determine the matroid.

    \smallskip
    \textbf{Circuits.}
    \begin{equation}
        \operatorname{circuits}(Q) \;\coloneqq\; \{ C\subseteq E:\ C\in\operatorname{Dep}(Q)\ \text{ and } \ \forall \, C'\subsetneq C,\ C'\in\operatorname{Ind}(Q) \}.
    \end{equation}
    \emph{Note.} Circuits are the minimal dependent sets: $r(C)=|C|-1$ and every proper subset is independent. Every dependent set contains a circuit as subset. Circuits do not necessarily have the same cardinalities. Circuits uniquely determine the matroid.

    \smallskip
    \textbf{Cocircuits.}
    \begin{equation}
        \operatorname{cocircuits}(Q) \;\coloneqq\; \{  D\subseteq E: \ r(E\setminus D)=r(E)-1 \ \text{ and } \ \forall \, D'\subsetneq D, \ r(E\setminus D')=r(E)  \}.
    \end{equation}
    \emph{Note.} Cocircuits are the minimal rank-dropping blockers: removing $D$ lowers the full rank (by exactly one), while removing any proper subset does not. Cocircuits meet every basis:
    \begin{equation}
        |D\cap B| \geq 1, \quad \forall \, D \in \operatorname{cocircuits}(Q), \ B \in \operatorname{bases}(Q).
    \end{equation}
    Equivalently, $D$ is a minimal set intersecting all bases. By minimal we mean, for any cocircuit $D$, no proper subset of $D$ can be a cocircuit.
    
    When cocircuits meet circuits, the intersection size is never $1$. In particular,
    \begin{equation}
        |D\cap C| \in \{0,2,3,\cdots\}, \quad \forall \, D \in \operatorname{cocircuits}(Q), \ C \in \operatorname{circuits}(Q).
    \end{equation}
    Equivalently, $D$ is a minimal set not intersecting any circuit with size $1$.
    
    Cocircuits uniquely determine the matroid.

    \smallskip
    \textbf{Coloops.}
    \begin{equation}
        \operatorname{coloops}(Q) \;\coloneqq\; \{  e\in E:\ r(E\setminus\{e\})=r(E)-1 \}.
    \end{equation}
    \emph{Note.} A coloop is an element whose presence always increases rank by $1$. A coloop is an element that is in every basis. For each element $e \in E$, the following are equivalent:
    \begin{equation}
        e \in \operatorname{coloops}(Q) \Longleftrightarrow \{e\} \in \operatorname{cocircuits}(Q) \Longleftrightarrow e \text{ is in every basis } \Longleftrightarrow e \text{ is in no circuit}.
    \end{equation}
    Coloops \textit{do not} determine the matroid.

    \smallskip
    \textbf{Flats.}
    \begin{equation}
        \operatorname{flats}(Q) \;\coloneqq\; \{ F\subseteq E:\  \forall \, x\in E \backslash F,\ r(F\cup\{x\}) = r(F) + 1 \}.
    \end{equation}
    \emph{Note.} Flats are the $\subseteq$-maximal sets that have a given rank $r(F)$. The family of flats uniquely determines a matroid.

    \smallskip
    \textbf{Fundamental circuit with respect to a basis.} \quad For any basis $B$ and any $e\in E\setminus B$, there is a unique circuit $C_B(e)$ such that
    \begin{equation}
    e \;\in\; C_B(e) \;\subseteq\; B\cup\{e\},
    \end{equation}
    called the fundamental circuit of $e$ w.r.t.\ $B$. Moreover, for every $f\in C_B(e)\setminus\{e\}$,
    the set $B \backslash \{f\} \cup \{e\}$ is a basis. Every circuit is a fundamental circuit to some $B$ and $e$.

    \smallskip
    \textbf{Fundamental cocircuit with respect to a basis.} \quad For any basis $B$ and any $e\in B$, there is a unique cocircuit $D_B(e)$ such that
    \begin{equation}
        e \;\in\; D_B(e) \;\subseteq\; (E\setminus B)\cup\{e\},
    \end{equation}
    called the fundamental cocircuit of $e$ w.r.t.\ $B$. Moreover, for every $f\in D_B(e)\setminus\{e\}$, the set $B \backslash \{e\} \cup \{f\}$ is a basis. Every cocircuit is a fundamental cocircuit to some $B$ and $e$.

\end{definition}

\medskip\medskip\medskip

Having introduced these basics of transvesal matroids, below we explain our algorithm in detail.

\subsection{Algorithm Overview}

\update{
Let us first formally define the faithfulness assumption required by the glvLiNG algorithm:

\begin{assumption}[\textbf{Faithfulness}]\label{assump:faithfulness}
    Let $d_X \coloneqq |X|$ be the number of observed variables. We assume that in the true mixing matrix $A_{X,:}$ that generates data (as defined in~\cref{eq:all_mixings_observed}), all $d_X \times d_X$ and $(d_X-1) \times (d_X-1)$ minors exhibit matrix ranks consistent with the corresponding path ranks entailed by the true causal graph (as characterized in~\cref{lem:rank_constraints_in_mixing_matrices}).
\end{assumption}

In other words, there is no coincidental parameter cancellation in the data generating process that would lead to matrix ranks lower than those structurally entailed by the graph. Note that such faithfulness assumption, often also referred to as the genericity assumption, is standard in the literature~\citep{adams2021identification}. It holds almost everywhere in the parameter space except for a Lebesgue measure zero set where coincidental lower ranks occur.}

We next elaborate on our glvLiNG algorithm. The core to the glvLiNG algorithm is to query matrix ranks from the mixing matrix estimated from data using overcomplete ICA (OICA), and then construct a binary support matrix (corresponding to a digraph) that satisfies these matrix ranks.

Let $p(X) $ be a data distribution generically generated by an unknown latent-variable model $(\Gcal, X)$, that is, $p(X) \in \Pcal(\Gcal, X)$. Without loss of generality we assume that $(\Gcal, X)$ is irreducible. Let $\tilde{A} \in \mathbb{R}^{|X|\times |V|}$ be a mixing matrix estimated on $p(X)$ by OICA, and for now, we index the rows and columns of $\tilde{A}$ by $X$ and $V$, respectively. By the identifiability of OICA, $\tilde{A}$ is the true mixing matrix up to column permutation and scaling. Further, with the duality between path ranks and edge ranks (\cref{thm:duality_between_two_ranks}), we have that, there exists a permutation $\pi$ of $V$, such that for all $Z\subseteq X$ and $Y\subseteq V$, the following equality holds:
\begin{equation}\label{eq:query_ranks_from_data}
    \rank(\tilde{A}_{Z,Y}) \;=\; \rho_\Gcal(Z, \pi(Y)) \;=\; |Z|+|Y|-|V|+r_\Gcal(V\backslash \pi(Y), V\backslash Z).
\end{equation}
In other words, there exists an unknown binary matrix $Q \in \{0,1\}^{|V|\times |V|}$, whose matching ranks $\mrank(Q_{Z, Y})$ can be queried for any $Z, Y\subseteq V$ with $L\subseteq Y$, despite its exact entry values being unknown. Such a matrix must exist, with one specific matrix, $Q^{(\Gcal)}$ with rows permuted by $\pi$, being an example. As long as one can recover this matrix $Q$, one can then permute its rows to place nonzero entries on the diagonal, and the resulting matrix must exactly be some support matrix for a digraph $\Hcal$ with $\Gcal \overset{X}{\sim} \Hcal$, by~\cref{thm:equivalence_as_edge_ranks}. Such a row permutation to have nonzero diagonals must also exists, as $\mrank(Q)= |V|$, by setting $Z,Y$ both to $\varnothing$ in~\cref{eq:query_ranks_from_data}.

\medskip

Our problem then reduces to: how can one recover such a $Q$ matrix from rank queries? We express this problem in a more general formulation, as follows:

\begin{tcolorbox}[colback=gray!10!white,colframe=black,title={\textbf{Key Problem: Matrix Recovery to Satisfy Matching Rank Queries}}]
Let $Q \in \{0,1\}^{m \times n}$ be a binary matrix whose entries are unknown. Let the columns be partitioned as $[n] = L \cup X$, where $L$ is a fixed subset. For any $Z \subseteq [m]$ and any $Y \subseteq [n]$ satisfying $L \subseteq Y$, one may query the matching rank of the submatrix $Q_{Z, Y}$, which, for simplicity, is denoted as an oracle function:
\begin{equation}
    r(Z, Y) \coloneqq \mrank(Q_{Z, Y}).
\end{equation}

\smallskip

\textbf{The task is:} using only access to this oracle, construct a binary matrix $H \in \{0,1\}^{m \times n}$ such that for all $Z \subseteq [m]$ and $Y \subseteq [n]$ with $L \subseteq Y$, the following condition holds:
\begin{equation}
    \mrank(H_{Z, Y}) = r(Z, Y).
\end{equation}

\smallskip

That is, $H$ is required to satisfy the matching rank oracle on all valid $(Z, Y)$ pairs.
\end{tcolorbox}

\medskip

Clearly, the key problem posed above is essentially a satisfiability problem, and can be solved via brute-force methods such as linear programming. However, in what follows, we present a significantly more efficient and structured procedure. Note that the matching rank queries in the problem are equivalent to providing the transversal matroids on each submatrix $Q_{:, Y}$. Thus, for convenience, throughout the rest of this appendix, we may freely use the matroid language introduced in~\cref{def:basics_of_matroid}.

\textbf{Overview of our procedure:}  \ Our reconstruction procedure consists of two phases:
\begin{enumerate}[leftmargin=50pt]
    \item[Phase 1:] Impute the columns in $H$ indexed by $L$ to satisfy the matroid $\bases(Q_{:, L})$. Equivalently, this is to construct a bipartite graph that realizes a given transversal matroid.
    \item[Phase 2:] Impute the remaining columns indexed by $X$ such that all matroids induced by $Q_{:, L\cup\{\bm{x}\}}$ for $\bm{x} \subseteq X$ are satisfied. Although this may appear combinatorially complex at first glance, we show that each singleton column in $X$ can in fact be imputed independently.
\end{enumerate}

\subsection{Phase 1: Bipartite Graph Realization}

Let us first formulate the problem of Phase 1:

\begin{tcolorbox}[colback=gray!10!white,colframe=black,title={\textbf{Problem of Phase 1: Bipartite Graph Realization of Transversal Matroids}}]
Let $Q \in \{0,1\}^{m \times l}$ be a binary matrix with unknown entries, but known matroid $\bases(Q)$.

\smallskip

\textbf{The task is:} Construct a binary matrix $H \in \{0,1\}^{m \times l}$ such that:
\begin{equation}
    \bases(H) = \bases(Q).
\end{equation}

\smallskip

That is, construct an example bipartite graph, represented by $H$, to realize a transversal matroid, given by $\bases(Q)$.
\end{tcolorbox}

\medskip

By duality, this problem is equivalent to reconstructing the digraph representation of the strict gammoid that is the dual of the transversal matroid based on the seminal paper by \citep{Mason72} and dualizing the result using the Fundamental Lemma by \citep{ingleton1973gammoids}.

The bases of the dual matroid $Q^\ast$ are given by $\bases(Q^\ast)=\left\{E\backslash B \mid B\in \bases(Q)\right\}$.
The $\alpha$-system for $Q^\ast$ is defined as the bipartite graph with the following incidence relation

\begin{equation}
I_{Q^\ast} = \{ (e, (F,i))\in E\times(\operatorname{flats}(Q^\ast),\mathbb{N}) \mid F\in \operatorname{flats}(Q^\ast),\,e\in F,\, i\in\mathbb{N},\,1 \leq i \leq \alpha_{Q^\ast}(F)  \},    
\end{equation}

where
\begin{equation}
\alpha_{Q^\ast}(F) = |F| - r_{Q^\ast}(F) - \bigcup_{G\in \operatorname{flats}(Q^\ast),\,G\subsetneq F} \alpha_{Q^\ast}(G).
\end{equation}

Since $Q$ is a transversal matroid, $Q^\ast$ is a strict gammoid, and therefore all $\alpha_{Q^\ast}(F)\geq 0$, $\sum_{F\in\operatorname{flats}(Q^\ast)}\alpha_{Q^\ast}(F) = |E| - r_{Q^\ast}(E) = r_Q(E)$, and the $\alpha$-system for $Q^\ast$ has a maximal matching that covers all $(F,i)$, and each such matching has the property that the set of unmatched elements from $E$ forms a basis $T$ of $Q^\ast$.

Now fix such a maximal matching, let $T\subseteq$ be the unmatched basis, and let $(F_e, i_e)$ be the vertex that is matched to $e$ for all $e\in E\backslash T$. Define the digraph $D=(V,A)$ with $V=E$ and $A=\{(u,v) \in E\times E \mid u\notin T,\, v\in F_u,\, v\not= u \}$. The digraph $D$ represents the strict gammoid $Q^\ast$ \citep{Mason72}. Using the fundamental lemma \citep{ingleton1973gammoids}, we obtain that

\begin{equation}
    H = \{ (e,t)\in E\times (E\backslash T) \mid  e\in F_t\}
\end{equation}

represents the transversal matroid $Q$.

\medskip

\subsection{Phase 2: Augmenting a Bipartite Graph for Matroid Extensions}

Having imputed the values in $H_{:, L}$, we then impute the remaining $X$ columns:

\begin{tcolorbox}[colback=gray!10!white,colframe=black,title={\textbf{Problem of Phase 2: Augmenting a Bipartite Graph to Realize Matroid Extensions}}]

Let $Q \in \{0,1\}^{m \times n}$ be a binary matrix with unknown entries, and columns partitioned as $[n] = L \cup X$ for a fixed $L$. For every subset $\bm{x} \subseteq X$, the matroid $\bases(Q_{:, L \cup \bm{x}})$ is known.

\medskip

Let $H \in \{0,1\}^{m \times n}$ be a partially imputed matrix such that $\bases(H_{:, L}) = \bases(Q_{:, L})$ is already satisfied, while the columns indexed by $X$ remain unassigned.

\medskip

\textbf{The task is:} Fill in the remaining columns of $H$ such that for all $\bm{x} \subseteq X$,
\begin{equation}
    \bases(H_{:, L \cup \bm{x}}) = \bases(Q_{:, L \cup \bm{x}}).
\end{equation}

\smallskip

That is, augment the bipartite graph, represented by $H_{:,L}$, by adding more ``sources'' $X$, so that matroids are realized for all subsets of $X$ extended with $L$.

\end{tcolorbox}

\medskip

One may first question whether such an imputation is possible, since overall the already assigned $H_{:, L}$ only realizes the matroid equality $\bases(H_{:, L}) = \bases(Q_{:, L})$, but the exact entry values recovery $H_{:, L} = Q_{:, L}$ is not guaranteed (and also impossible).

We show that such an imputation is indeed possible, via the following result:

\medskip

\begin{restatable}[\textbf{How the transversal matroid changes when augmenting more sources}]{lemma}{LEMCOLUMNSTACKINDEPENDENTSETSUNION}\label{lem:column_stack_independent_sets_union}
    For two binary matrices $Q_1 \in \{0,1\}^{m\times n_1}$ and $Q_2 \in \{0,1\}^{m\times n_2}$, we denote by $[Q_1|Q_2] \in \{0,1\}^{m\times (n_1+n_2)} $ the matrix obtained by horizontally concatenating $Q_1$ with $Q_2$. Then, we have:
    \begin{equation}
        \operatorname{Ind}([Q_1|Q_2]) \;=\; \{Z_1 \cup Z_2 : Z_1 \in \operatorname{Ind}(Q_1), \ Z_2 \in \operatorname{Ind}(Q_2)\}.
    \end{equation}
    In other words, two matrices' matroids sufficiently determine the matroid of their augmentation.
\end{restatable}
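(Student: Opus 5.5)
We argue directly from the definition of independence in a transversal matroid: a row set $Z\subseteq[m]$ is in $\operatorname{Ind}(Q)$ exactly when there is an injective map $\sigma:Z\to[n]$ with $Q_{z,\sigma(z)}\neq 0$ for every $z\in Z$. This is a matching of the rows $Z$ into the columns that saturates $Z$, and it is equivalent to $\operatorname{mrank}(Q_{Z,:})=|Z|$ by \cref{def:matching_rank_of_a_matrix}. With this reformulation both inclusions become simple manipulations of matchings. The union on the right-hand side need not be disjoint, so we will handle overlaps explicitly.

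For ``$\subseteq$'', fix $Z\in\operatorname{Ind}([Q_1|Q_2])$ and a saturating matching $\sigma$ of $Z$ into the columns of $[Q_1|Q_2]$. These columns split into the first $n_1$ (coming from $Q_1$) and the last $n_2$ (coming from $Q_2$). Set $Z_1\coloneqq\sigma^{-1}(\text{first } n_1 \text{ columns})$ and $Z_2\coloneqq Z\setminus Z_1$. The restriction of $\sigma$ to $Z_1$ is a saturating matching into $Q_1$, and the restriction to $Z_2$, with columns reindexed by subtracting $n_1$, is a saturating matching into $Q_2$. Hence $Z_1\in\operatorname{Ind}(Q_1)$, $Z_2\in\operatorname{Ind}(Q_2)$, and $Z=Z_1\cup Z_2$.

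For ``$\supseteq$'', take $Z_1\in\operatorname{Ind}(Q_1)$ and $Z_2\in\operatorname{Ind}(Q_2)$ with matchings $\sigma_1,\sigma_2$. The two sets may overlap, so first replace $Z_2$ by $Z_2'\coloneqq Z_2\setminus Z_1$. This set is still independent in $Q_2$, because independent sets are closed under subsets: we simply restrict $\sigma_2$. Now define $\sigma$ on $Z_1\cup Z_2'$ by using $\sigma_1$ on $Z_1$ and $\sigma_2$ shifted by $n_1$ on $Z_2'$. The two images lie in disjoint column blocks, and each piece is injective, so $\sigma$ is injective. Every matched entry is nonzero in $[Q_1|Q_2]$. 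Hence $Z_1\cup Z_2=Z_1\cup Z_2'\in\operatorname{Ind}([Q_1|Q_2])$.

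There is no real obstacle here; this is the standard fact that a transversal matroid presented by a union of families is the matroid union of the pieces. The only point needing care is the overlap of $Z_1$ and $Z_2$ in the second inclusion, which the subset-closure step handles. To match the wording of the lemma, we finally note that $\operatorname{Ind}$ determines $\bases$: the bases are the maximal independent sets. So the matroids of $Q_1$ and $Q_2$ determine that of $[Q_1|Q_2]$.
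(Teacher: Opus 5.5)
Your proof is correct and takes the same route as the paper. For ``$\subseteq$'' you split a saturating matching of $Z$ along the two column blocks, and for ``$\supseteq$'' you match $Z_1$ into the first block and $Z_2\setminus Z_1$ into the second, using that the blocks are disjoint; your explicit treatment of the overlap and of how $\operatorname{Ind}$ determines the bases only spells out what the paper leaves implicit.
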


With~\cref{lem:column_stack_independent_sets_union}, every $\bases(H_{:, L \cup \bm{x}})$ equals $\bases([Q_{:, L} | H_{:, \bm{x}}])$, and thus the imputation is possible.

\medskip

Then, how to solve for this imputation? At the first glance, one may have concern on the complexity: in contrast to solving Phase 1's realization problem for only one matroid induced by $L$, now we need to realize combinatorially many matroids induced by $L \cup \bm{x}$ for all $\bm{x} \subseteq X$. When trying to impute a single column $H_{:, X_i}$, this column can appear in many subsets $\bm{x} \ni X_i$.

Interestingly, all these subsets can be disentangled: one do not need to explicitly realize each $\bm{x} \subseteq X$. Instead, it suffices to just realize each singleton augmentation for $X_i \in X$ independently.

We show this by the following result:

\begin{restatable}[\textbf{Reducing all union equivalence checks to singleton checks}]{lemma}{LEMDISENTANGLEUNIONTOSINGLES}\label{lem:all_union_reduce_to_single_L_Yis}
Let $Q,H\in \{0,1\}^{m\times n}$ be two binary matrices with columns partitioned as $[n] = L\cup X$ for a fixed $L$. Then, the condition

\begin{equation}\label{eq:lemma9-lhs}
    \bases(Q_{:,L\cup\bm{x}})=\bases(H_{:,L\cup\bm{x}}), \quad \forall \bm{x}\subseteq X,
\end{equation}

holds, if and only if the condition

\begin{equation}\label{eq:lemma9-rhs}
\begin{cases}
    \bases(Q_{:,L})=\bases(H_{:,L}), \ \text{ and} \\
    \bases(Q_{:,L\cup\{X_i\}})=\bases(H_{:,L\cup\{X_i\}}), \quad \forall X_i\in X,
\end{cases}
\end{equation}

holds.
\end{restatable}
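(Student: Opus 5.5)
The plan is to reduce everything to a rank formula for matroid union and then show that the only information about each column $X_i$ that ever enters this formula is recorded in $\bases(Q_{:,L\cup\{X_i\}})$ together with $\bases(Q_{:,L})$. The direction from \eqref{eq:lemma9-lhs} to \eqref{eq:lemma9-rhs} is trivial: take $\bm{x}=\varnothing$ and the singletons $\bm{x}=\{X_i\}$. For the converse, I fix $\bm{x}\subseteq X$. By \cref{lem:column_stack_independent_sets_union} applied repeatedly, the matroid of $Q_{:,L\cup\bm{x}}$ is the matroid union of $M_L\coloneqq$ the matroid of $Q_{:,L}$ with the single-column matroids $M_{X_i}$, $X_i\in\bm{x}$. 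Let $S_i$ be the support of column $X_i$. The matroid $M_{X_i}$ has rank function $r_{X_i}(T)=\mathbbm{1}(T\cap S_i\neq\varnothing)$. By the matroid union theorem (equivalently, Hall's theorem for the bipartite matching), a row set $Z$ is independent in $Q_{:,L\cup\bm{x}}$ if and only if
\begin{equation*}
|T| \;\le\; r_L(T) + \#\{X_i\in\bm{x} : T\cap S_i\neq\varnothing\} \quad \text{for all } T\subseteq Z .
\end{equation*}
The same holds for $H$ with supports $S'_i$ and rank function $r'_L$. We are given $r'_L=r_L$ from the first line of \eqref{eq:lemma9-rhs}.

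The first key step is to show that the inequality only needs to be checked on sets $T$ that are \emph{cyclic} in $M_L$, meaning unions of $M_L$-circuits. Suppose $e\in T$ lies in no circuit of $M_L|_T$. Then $e$ is a coloop of $M_L|_T$, so $r_L(T\setminus\{e\})=r_L(T)-1$. Passing from $T$ to $T\setminus\{e\}$ lowers the left side by exactly one. It lowers the right side by at least one from $r_L$, while the count term can only drop. So any violation at $T$ gives a violation at $T\setminus\{e\}$, and iterating reaches a cyclic set. Hence $Z$ is independent if and only if the inequality holds for every cyclic $T\subseteq Z$.

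The second key step is to show that, for a cyclic $T$, the indicator $\mathbbm{1}(T\cap S_i\neq\varnothing)$ is determined by the given singleton data. Since $T$ is a union of $M_L$-circuits, $T$ meets $S_i$ if and only if some $M_L$-circuit $C\subseteq T$ meets $S_i$. For an $M_L$-circuit $C$, every proper subset is $M_L$-independent and $r_L(C)=|C|-1$. Applying the displayed criterion with $\bm{x}=\{X_i\}$, the only possible violation is at $T=C$. So $C$ is independent in $Q_{:,L\cup\{X_i\}}$ if and only if $C\cap S_i\neq\varnothing$. The analogous statement holds for $H$ and $S'_i$, and the circuits of $M_L$ coincide for $Q$ and $H$ because their $L$-matroids agree. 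Since $\bases(Q_{:,L\cup\{X_i\}})=\bases(H_{:,L\cup\{X_i\}})$ fixes the independent sets, we get $C\cap S_i\neq\varnothing \iff C\cap S'_i\neq\varnothing$ for every circuit $C$. Hence $T\cap S_i\neq\varnothing\iff T\cap S'_i\neq\varnothing$ for every cyclic $T$.

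Combining the two steps, the criteria for $Q$ and $H$ agree on every cyclic $T$ and every $\bm{x}$. So $\operatorname{Ind}(Q_{:,L\cup\bm{x}})=\operatorname{Ind}(H_{:,L\cup\bm{x}})$, and the bases agree. The step I expect to need the most care is the first reduction to cyclic $T$: one must check that the coloop-removal argument is valid when the count term drops by more than one, which it is because that only makes the violation stronger. One must also confirm that the matroid-union rank formula correctly models matching rank with the allowed overlapping of rows. If the union-theorem route causes trouble, I would instead argue directly with alternating paths in the bipartite graph, using that a deficiency of a maximum matching is always witnessed by a Hall-violating set whose $L$-part is cyclic.
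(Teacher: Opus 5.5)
Your argument is correct, and it takes a genuinely different route from the paper's.

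\textbf{The paper's route.} It proves the converse by contraposition, using a local exchange on transversals. It takes $B\in\bases(Q_{:,L\cup\bm{x}})\setminus\bases(H_{:,L\cup\bm{x}})$ and chooses a $Q$-transversal $\phi$ of $B$ minimizing the number $\delta$ of rows matched along non-edges of $H$. It then asserts $\delta=1$. Finally, it glues an $H$-transversal of $\phi^{-1}(L\cup\{\phi(b)\})$ to $\phi$ on the remaining rows, to show that the discrepancy already appears at the single column $\phi(b)$.

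\textbf{Your route.} You argue globally, in three steps: the matroid-union independence criterion; the reduction of violating sets to $M_L$-cyclic sets by deleting coloops; and the observation that, given $M_L$, the matroid of $Q_{:,L\cup\{X_i\}}$ records exactly which $M_L$-circuits the support $S_i$ meets.

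\textbf{What your route buys.} First, completeness. The paper's claim $\delta=1$ is only asserted (``via basis exchange'') and is false as stated. Take $L=\varnothing$ and $m=4$, with $Q$ nonzero only at $(1,x_1),(2,x_2)$ and $H$ nonzero only at $(3,x_1),(4,x_2)$. Each side then has a unique basis with a unique transversal, and $\delta=2$ in either direction. No step of yours depends on such a choice. Second, you get an explicit description of the singleton solution sets. For a matrix with columns $L\cup\{x\}$, a set lies in $\operatorname{colaug}(Q,x)$ if and only if it meets exactly the same $M_L$-circuits as the support of column $x$. This reproduces all three solution sets in \cref{eg:illustrate_column_augmentation_definitions}.

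\textbf{What the paper's route buys.} Once the $\delta=1$ step is repaired, it uses only matchings and names a concrete singleton witness.

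\textbf{A shorter route.} Given \cref{lem:column_stack_independent_sets_union}, the converse is nearly immediate, because matroid union is associative and commutative. Write $M^{Q}_{X_i},M^{H}_{X_i}$ for the single-column matroids and $N$ for the union of the remaining ones. Then $M_L\vee N\vee M^{Q}_{X_i}=N\vee(M_L\vee M^{Q}_{X_i})=N\vee(M_L\vee M^{H}_{X_i})$, so the columns of $Q$ can be swapped for those of $H$ one at a time.
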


With~\cref{lem:all_union_reduce_to_single_L_Yis}, the remaining problem of Phase 2 can be reduced to solving for each singleton augmentation, formulated as follows:

\begin{tcolorbox}[colback=gray!10!white,colframe=black,title={\textbf{Problem of Phase 2 (Reduced): Augmenting a Bipartite Graph with a Singleton Source}}]

Let $H \in \{0,1\}^{m \times (l+1)}$ be a partially known binary matrix whose columns are partitioned as $[l+1] = L \cup \{x\}$ for a fixed $L$. The two matroids $\bases(H)$ and $\bases(H_{:,L})$ are known, while the exact values at the column $H_{:,x}$ remain unknown.

\medskip

\textbf{The task is:} Fill in the column $H_{:,x}$ to satisfy the given matroid $\bases(H)$.

\medskip

That is, augment the bipartite graph, represented by $H_{:,L}$, by adding one extra ``source'' $x$, so that it extends the current transversal matroid correctly.

\end{tcolorbox}

\medskip

To solve this singleton augmentation problem, one could at worst exhaustively try all $2^m$ possible fillings. In what follows, however, we present a more efficient, deterministic construction:

\begin{restatable}[\textbf{Constructing a particular solution for singleton augmentation}]{lemma}{LEMCONSTRUCTSINGLETONAUGMENTATION}\label{lem:construct_singleton_augmentation}
Let $H \in \{0,1\}^{m \times (l+1)}$ be a binary matrix with columns partitioned as $[l+1] = L \cup \{x\}$. Define

\begin{equation}
D \coloneqq \{ i\in \{1,\ldots,m\} \mid \forall C\in \operatorname{circuits}(H):\, i\in C \Rightarrow C\backslash\{i\} \notin \operatorname{Ind}(H_{:,L})   \} .    
\end{equation}

Define a new matrix $H'$ where $H'_{:, L} = H_{:, L}$ and the column $x$ is replaced by $H_{i,x}' = 1$ if $i \in D$ and $0$ otherwise. Then, the whole matroid remains unchanged after this column $x$ replacement:
\begin{equation}
    \bases(H') = \bases(H).
\end{equation}
\end{restatable}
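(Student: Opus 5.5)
The plan is to compare independent sets rather than bases. A matroid is determined by its independent sets, and $H$ and $H'$ have the same number of rows, so showing $\operatorname{Ind}(H')=\operatorname{Ind}(H)$ gives $\bases(H')=\bases(H)$. The tool is \cref{lem:column_stack_independent_sets_union}, applied to the split $[H_{:,L}\,|\,H_{:,x}]$.

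For a single column, the independent sets of its transversal matroid are $\varnothing$ and the singletons $\{i\}$ whose entry is nonzero. Let $S\coloneqq\{i: H_{i,x}=1\}$ be the support of column $x$ and $\mathcal{I}_L\coloneqq\operatorname{Ind}(H_{:,L})$. Then \cref{lem:column_stack_independent_sets_union} gives
\begin{equation*}
\operatorname{Ind}(H)=\mathcal{I}_L\cup\{Z\cup\{i\}: Z\in\mathcal{I}_L,\ i\in S\}.
\end{equation*}
Since $H'_{:,L}=H_{:,L}$, the same formula holds for $H'$ with $S$ replaced by $D$. So it suffices to prove two things: (a) $S\subseteq D$, which gives $\operatorname{Ind}(H)\subseteq\operatorname{Ind}(H')$; and (b) every $i\in D$ behaves like an element of $S$, i.e.\ for every $Z\in\mathcal{I}_L$ the set $Z\cup\{i\}$ lies in $\operatorname{Ind}(H)$. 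Part (b) gives the reverse inclusion.

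For (a), take $i\in S$ and any circuit $C$ of $H$ with $i\in C$. Suppose, for contradiction, that $C\setminus\{i\}\in\mathcal{I}_L$. Then by the displayed formula $C=(C\setminus\{i\})\cup\{i\}\in\operatorname{Ind}(H)$, which contradicts $C$ being a circuit. Hence $C\setminus\{i\}\notin\mathcal{I}_L$ for every such $C$, so $i\in D$.

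For (b), take $i\in D$ and $Z\in\mathcal{I}_L$. If $i\in Z$, then $Z\cup\{i\}=Z\in\mathcal{I}_L\subseteq\operatorname{Ind}(H)$ and we are done. Otherwise, suppose $Z\cup\{i\}\notin\operatorname{Ind}(H)$. Then it contains a circuit $C$ of $H$. Because $Z\in\mathcal{I}_L\subseteq\operatorname{Ind}(H)$, the circuit $C$ cannot lie inside $Z$, so $i\in C$ and $C\setminus\{i\}\subseteq Z$. Subsets of independent sets are independent, so $C\setminus\{i\}\in\mathcal{I}_L$, which contradicts $i\in D$.

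Combining (a) and (b) gives $\operatorname{Ind}(H')=\operatorname{Ind}(H)$, and hence $\bases(H')=\bases(H)$. Most of this is routine. The step needing the most care is setting up the union description correctly, in particular checking that single-column independent sets are exactly $\varnothing$ and singletons of the support, so that membership in $S$ or $D$ is the only thing that matters. The one genuinely matroidal input is in (b): any dependent set contains a circuit, and that circuit must contain the new element $i$.
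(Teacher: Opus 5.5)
Your proof is correct and uses the same two circuit arguments as the paper's proof. First, the support of column $x$ lies in $D$, since otherwise some circuit of $H$ would become independent; second, a set independent in $H'$ that uses $x$ at some $b\in D$ cannot contain an $H$-circuit through $b$. The only difference is packaging: the union formula from \cref{lem:column_stack_independent_sets_union} lets you treat all independent sets uniformly, so you avoid the paper's case split on whether $\operatorname{Ind}(H_{:,L})=\operatorname{Ind}(H)$ and its step of splitting each basis into a basis of $H_{:,L}$ plus one extra element.
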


\smallskip

With this result, we complete the final step of the Key Problem defined above and can obtain a binary matrix $H$ that satisfies all rank constraints.

Proofs of~\cref{lem:column_stack_independent_sets_union,lem:all_union_reduce_to_single_L_Yis,lem:construct_singleton_augmentation} are all given in~\cref{app:proofs}.

To interpret $H$ as a digraph representation, we perform a final row permutation to place nonzeros along the diagonal. Standard algorithms such as the $n$-rooks method may be used for the row permutation. We keep the column indices of $L \cup X$ fixed and reindex the rows to match the same ordered list $L \cup X$. The resulting matrix encodes a model distributionally equivalent to the underlying model that general the data. One may then run BFS/DFS using~\cref{thm:final_graphical_criteria} to obtain the whole equivalence class.

With all above, we conclude the algorithm part.

\section{Proofs of Main Results}\label{app:proofs}

Note that we present the proofs in an order that differs slightly from their appearance, arranged instead according to their logical dependencies.

\subsection{Proofs of Irreducibility Results}

The irreducibility results rely on the identifiability of (overcomplete) independent component analysis (ICA). So we first restate them here.

A linear irreducible ICA model can be described by the equation

\begin{equation}\label{eq:linear_ICA_definition}
    X=AE,
\end{equation}

where $E = (E_1, \cdots, E_m)^\top $ are unknown mutually independent random variables, namely \textit{sources}, and $X = (X_1, \cdots, X_p)^\top $ are observed random variables, namely \textit{mixtures}. $A \in \mathbb{R}^{p\times m}$, namely the \textit{mixing matrix}, is constrained to have no pairwise proportional columns (including zero columns). The tuple $(A, E)$ is called an \textit{irreducible ICA representation} of $X$.

\begin{lemma}[Identifiability of ICA; \citep{eriksson2004identifiability}]\label{lem:identifiability_ica}
    Let $(A, E)$ and $(B, S)$ be two irreducible ICA representations of a $p$-dim random vector $X$, where $A \in \mathbb{R}^{p\times m}$ and $B \in \mathbb{R}^{p\times n}$. If every component of $E$ follows a non-Gaussian distribution, then the following properties hold:
    \begin{itemize}  %
    \item[1.] $m=n$.
    \item[2.] Every column of $A$ is proportional to some column of $B$, and vice versa.
    \item[3.] Every component of $S$ follows a non-Gaussian distribution\footnote{But note that unlike 2., $S$ may still be unreachable from $E$ via only permutation and scaling. In other words, the model is \textit{identifiable}, but not \textit{unique}. See Example 2 of ~\citet{eriksson2004identifiability}.}.
\end{itemize}
\end{lemma}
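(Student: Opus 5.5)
This lemma is the classical identifiability result for overcomplete ICA of \citet{eriksson2004identifiability}, and I would reprove it with the characteristic-function and ridge-function technique of Kagan--Linnik--Rao.

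\textbf{Setup.} Write $a_j$ for the columns of $A$ and $b_k$ for the columns of $B$. Equality in law of $AE$ and $BS$ gives, for all $t\in\mathbb{R}^p$,
\begin{equation*}
\prod_{j=1}^m \varphi_{E_j}(a_j^\top t)\;=\;\prod_{k=1}^n \varphi_{S_k}(b_k^\top t).
\end{equation*}
On a neighbourhood of the origin every factor is nonzero, so we may take logarithms. With $f_j=\log\varphi_{E_j}$ and $g_k=\log\varphi_{S_k}$ this becomes the functional equation
\begin{equation*}
\sum_j f_j(a_j^\top t)-\sum_k g_k(b_k^\top t)=0 .
\end{equation*}
Group together terms whose directions are proportional; a term $g_k(b_k^\top t)$ with $b_k=c\,a_j$ is rewritten as a function of $a_j^\top t$. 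The result is an identity $\sum_i h_i(c_i^\top t)=0$ with pairwise non-proportional directions $c_i$. Irreducibility (no proportional columns within $A$ or within $B$) ensures each direction carries at most one $f$ and at most one $g$.

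\textbf{Key analytic step.} I would prove the Skitovich--Darmois type lemma: if $\sum_{i=1}^N h_i(c_i^\top t)=0$ near $0$ with pairwise non-proportional $c_i$ and continuous $h_i$, then each $h_i$ is a polynomial near $0$ of degree at most $N-2$. The proof applies finite-difference operators $\Delta_{\delta}$ along vectors $\delta$ with $c_{i'}^\top\delta=0$ for one other index $i'$ at a time. Each such operator annihilates the $i'$-th term. Because of non-proportionality, $c_i^\top\delta$ can be chosen nonzero, so the composition of $N-1$ such operators leaves $\Delta^{N-1}h_i\equiv 0$ on an interval. Continuity then gives that $h_i$ is polynomial. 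Next I invoke the classical fact (Marcinkiewicz-type, in the local version of Kagan--Linnik--Rao) that if $\log\varphi$ is a polynomial near the origin, the law is Gaussian or degenerate.

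\textbf{Consequences.} Suppose a column $a_j$ is proportional to no $b_k$. Then $h$ for its direction is exactly $f_j$, which is therefore polynomial, so $E_j$ is Gaussian or constant. This contradicts the hypothesis that $E_j$ is non-Gaussian, and proves item 2 in one direction. Since the $a_j$ are pairwise non-proportional, the induced map $j\mapsto k$ is injective, so $m\le n$.

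For the matched directions, the differences $f_j(u)-g_{k}(c u)$ form the remaining ridge terms, together with $g_k$ for any unmatched $b_k$. Applying the lemma again shows that every unmatched $S_k$ is Gaussian and every matched difference is a polynomial. Item 3 then follows for matched $S_k$: if $S_k$ were Gaussian, $f_j$ would be polynomial.

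\textbf{Main obstacle.} The hard part is excluding unmatched Gaussian columns of $B$, which gives $m=n$, all $b_k$ matched, and item 3 for every component. By the degree bound and Cramér's decomposition theorem, the leftover identity reduces to an equality of quadratic forms $\sum_j \sigma_j (a_j^\top t)^2=\sum_k \tau_k (b_k^\top t)^2$. Such quadratic identities can hold with non-proportional directions, so a purely algebraic argument fails; this is exactly the non-uniqueness flagged in the footnote.

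To close the gap I would first try the following. Strip the maximal Gaussian factor from each non-Gaussian $E_j$ and $S_k$ using Cramér's theorem. Then compare the Gaussian parts, which are unconstrained covariance matrices, separately from the purely non-Gaussian parts, which match one-to-one by the argument above. I would then use irreducibility of $(B,S)$, together with the requirement that components be non-constant, to show that an unmatched Gaussian column $b_k$ could be absorbed. That would contradict minimality of the representation, and hence force $n=m$.
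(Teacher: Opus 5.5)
\textbf{The final step cannot be completed: under the stated hypothesis the claims are false.} Your last step tries to exclude unmatched Gaussian columns of $B$ using irreducibility and non-constancy. No argument can do this, because with only $E$ assumed non-Gaussian, items 1 and 3 and the ``vice versa'' half of item 2 do not hold. Irreducibility here only forbids zero and pairwise proportional columns. It is not a minimality condition on $n$, so an extra Gaussian direction that ``could be absorbed'' contradicts nothing. A concrete counterexample: take $p=2$, $A=I_2$, and $E_i=U_i+G_i$ with $U_i$ uniform and $G_i\sim N(0,1)$, all independent; each $E_i$ is non-Gaussian by Cram\'er's theorem. Let $B=[e_1,\,e_2,\,(1,1)^\top,\,(1,-1)^\top]$ and $S=(U_1+G_1',\,U_2+G_2',\,G_3,\,G_4)$, with independent $G_1',G_2'\sim N(0,\tfrac12)$ and $G_3,G_4\sim N(0,\tfrac14)$. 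The Gaussian part of $BS$ has covariance $\tfrac12 I_2+\tfrac14\cdot 2I_2=I_2$, so $BS\overset{d}{=}AE$. Moreover $B$ has no zero or pairwise proportional columns. Yet $n=4\neq 2=m$, $S_3$ and $S_4$ are Gaussian, and $(1,\pm1)^\top$ is proportional to no column of $A$. This is exactly the quadratic identity with new directions that you flagged. It is not the footnote's phenomenon, which concerns redistributing Gaussian mass among the \emph{same} directions while $m=n$.

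\textbf{What your argument does prove, and the hypothesis that closes the gap.} Your ridge-function lemma (the finite-difference argument) and the local Marcinkiewicz step are sound. They establish the Kagan--Linnik--Rao statement: every column of $A$ is proportional to some column of $B$ (so $m\le n$), matched components of $S$ are non-Gaussian, and unmatched components of $S$ are Gaussian. The missing ingredient is an additional hypothesis, and the natural one is that every component of $S$ is non-Gaussian as well. Then the unmatched set must be empty. Rerunning your argument with $(A,E)$ and $(B,S)$ swapped gives $m=n$ and the two-sided matching, and item 3 becomes part of the hypothesis. This two-sided version is all the paper's applications need; the paper itself gives no proof and only cites Eriksson--Koivunen. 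In the irreducibility and closure arguments, both representations come from LiNG models with noise in $\operatorname{NG}$. Merging proportional columns (and dropping zero ones) keeps the merged sources non-Gaussian and non-constant, again by Cram\'er's theorem.
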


\medskip\medskip

\THMCRITERIAIRREDUCIBILITY*

\begin{proof}[Proof of~\cref{thm:criteria_irreducibility}]

Due to the identifiability of OICA, irreducibility is equivalent to that there are no proportional columns in the mixing matrix, which, with~\cref{lem:rank_constraints_in_mixing_matrices}, is that

\begin{equation}\label{eq:path_rank_condition_for_irreducibility}
    \rho_\Gcal(X, \bm{v}) \geq 2, \quad \forall \bm{v} \subseteq V \text{ with } |\bm v| \geq 2.
\end{equation}

When $\bm{v}$ contains $2$ or more vertices from observed $X$, this condition is naturally satisfied. So we only need to consider $\bm{v}$ that contains at most one vertex from $X$ and at least one vertex from $L$.

When $\bm v$ contains only $L$ vertices, the violation of~\cref{eq:path_rank_condition_for_irreducibility} leads to the graphical criterion. When $\bm v$ contains one $X$ vertex, say, $X_i$, and~\cref{eq:path_rank_condition_for_irreducibility} is violated, it means the min-cut from $\bm v$ to $X$ is simply $\{X_i\}$, which implies that the min-cut from the remaining latent vertices $\bm v \setminus \{X_i\}$ to $X$ is either $\varnothing$ or $\{X_i\}$. This also leads to the graphical criterion.

\end{proof}

\medskip\medskip\medskip

\THMOPERATIONREDUCTION*

\begin{proof}[Proof of~\cref{thm:operation_reduction}]

This graphical operation directly translates the operation to merge all maximally proportional columns in the mixing matrix into single columns. This ensures the irreducible $\Hcal$.

Note that by removing maximally redundant latents, the added edges in step 4 will not be removed later, i.e., for each $\bm l$ operated in step 4,
\begin{equation}
    |\ch_\Hcal(\bm{l}) \setminus \bm{l}| = 1  \text{, and }  (\ch_\Hcal(\bm{l}) \cup \pa_\Hcal(\bm{l}) \setminus \bm{l}) \cap (\cup  \operatorname{mrl}) = \varnothing.
\end{equation}

This ensures the well-defined graphical operation.

\end{proof}

\medskip\medskip\medskip

\subsection{Proof from Rank Equivalence to Distributional Equivalence}

\THMEQUIVALENCEASPATHRANKS*

\begin{proof}[Proof of~\cref{thm:equivalence_as_path_ranks}]

We first show the ``$\Rightarrow$'' direction.

By Lemma~\ref{lem:rank_constraints_in_mixing_matrices} there is a generic choice $A\in\Acal(\Gcal)$ that realizes the rank structure for all $Z,Y\subseteq V(\Gcal)$: $\rank(A_{Z,Y})=\rho_\Gcal(Z,Y)$. With Lemma~\ref{lem:equivalence_as_closure_columns} we obtain from $\Gcal \overset{X}{\sim} \Hcal$ with $A\in\overline{\Acal(\Gcal,X)}=\overline{\Acal(\Hcal,X)}$ that there is a matrix $B\in\Acal(\Hcal,X)$, a permutation matrix $P$, and a scaling matrix $D$ such that $A_{X,:}=BPD$. Let $\pi$ be the column-permutation that corresponds to the matrix $P$.

Now, let $Z\subseteq X$ and $Y\subseteq V(\Gcal)=V(\Hcal)$, then we have the desired equation
\begin{align}
    \rho_\Gcal(Z,Y) &= \rank(A_{Z,Y})=\rank((BPD)_{Z,Y})
    =\rank((BP)_{Z,Y})\\&=\rank(B_{Z,\pi(Y)})=\rho_\Hcal(Z,\pi(Y)).
\end{align}

We then show the ``$\Leftarrow$'' direction.

For any $Z\subseteq X$, the partial application $\rho_\Gcal(Z,\_)$ is the rank function of a strict gammoid $M_{\Gcal,Z}=\Gamma(\Gcal,Z,V(\Gcal))$ defined by the digraph $\Gcal$ on the ground set $V(\Gcal)$ with the set of terminals $Z$. Analogously, $\rho_\Hcal(Z,\_)$ is the rank function a strict gammoid $M_{\Hcal,Z}$ defined by the digraph $\Hcal$ on the same ground set and the same set of terminals. 

\cref{eqn:rhoGeqrhoHpi} implies that $\pi$ is an isomorphism between $M_{\Gcal,Z}$ and $M_{\Hcal,Z}$, for all $Z\subseteq X$ simultaneously. Clearly, $\pi$ is also an isomorphism between the dual matroids $M_{\Gcal,Z}^\ast$ and $M_{\Hcal,Z}^\ast$.

It follows from the Fundamental Lemma in \citep{ingleton1973gammoids} that the support matrices $Q^{(\Gcal)}$ and $Q^{(\Hcal)}$ define isomorphic families of transversal matroids that are represented by the corresponding row-sub-matrices of all $(I-B_\Gcal)$ with sufficiently general weights and $B_\Gcal\in\mathcal{B}(\Gcal)$; and $(I-B_\Hcal)$ with sufficiently general weights and $B_\Hcal\in\mathcal{B}(\Hcal)$, respectively.

In \citep{Brylawski1975} it is shown that every transversal matroid may be represented by vectors that lie in the faces of a simplex such that for every minimal non-trivial combination of the zero vector by a set of representing vectors, this set lies on a common simplex face with rank one less than the cardinality of the set of vectors. Over $\mathbb{R}$ such vectors can be found almost surely by taking the incidence matrix of the transversal system and choosing a random value for each nonzero entry. All matrices over $\mathbb{R}$ that represent the transversal matroid can be produced by this procedure.

Choosing random values for the nonzero entries of $Q^{(\Gcal)}$ (and $Q^{(\Hcal)}$) gives almost surely a matrix that represents the respective family of transversal matroids in such a general simplex position, with nonzero entries on the diagonal. By row scaling, this matrix can be brought into the desired form $(I-B)$ where all diagonal entries are equal to $1$. Scaling the columns and rows of a matrix by nonzero factors does not alter the family of matroids represented by a matrix, and does not change whether a matrix is in general simplex position.

The matrices in $A\in \Acal(\Gcal,X)$ (and $\Acal(\Hcal,X)$) are row-restrictions of inverses of diagonal-$1$-scaled versions of matrices in general simplex position; 

\begin{equation}
    A=(SR)^{-1}_{X,:}=R^{-1}_{X,:}S^{-1},
\end{equation}

where $R$ is a randomized valuation of $Q^{(\Gcal)}$, and $S$ is a diagonal matrix consisting of the multiplicative inverses of the diagonal entries of $R$.

Let $P$ be the permutation matrix for $\pi$, and let $T$ be the diagonal matrix consisting of the multiplicative inverses of the diagonal entries of $PR$. Then there is

\begin{equation}
    A'=(TPR)^{-1}_{X,:} \in \Acal(\Hcal,X).
\end{equation}

Since $S$ and $T$ are invertible diagonal matrices, we have 

\begin{equation}
    A'\;=\;R^{-1}_{X,:}S^{-1}SP^{-1}T^{-1}\;=\;ASP^{-1}T^{-1}\;=\;AP^{-1}(S_P T^{-1}),
\end{equation}
where $S_P$ is a diagonal matrix with $(S_P)_{i,i}=S_{\pi(i),\pi(i)}$.

So $A'$ arises from $A$ by permuting and scaling columns, thus $\fillover{\Acal(\Gcal, X)} = \fillover{\Acal(\Hcal, X)}$. Finally, with Lemma~\ref{lem:equivalence_as_closure_columns} we have $\Gcal \overset{X}{\sim} \Hcal$.

\end{proof}

\medskip\medskip\medskip

\subsection{Proofs of Matroid-Preserving Column Augmentation: a Core Component}\label{subsec:matroid_preserving_proofs}

\subsubsection{Constructing Particular Solutions to a Column Augmentation}

We begin with the two auxiliary lemmas introduced in~\cref{app:algorithm}, namely,~\cref{lem:column_stack_independent_sets_union,lem:construct_singleton_augmentation}.

\LEMCOLUMNSTACKINDEPENDENTSETSUNION*

\begin{proof}[Proof of~\cref{lem:column_stack_independent_sets_union}]

Straightforward. Let $N_1=[n_1]$ and $N_2=[n_2]$ be the column indices. For the ``$\subseteq$'' direction, we just consider for each $Z \in \operatorname{Ind}([Q_1|Q_2])$, its matched sources in $N_1 \cup N_2$ can be split back into $N_1$ and $N_2$. For the ``$\supseteq$'' direction, $Z_1$ can be matched into $N_1$ and $Z_2\backslash Z_1$ can be matched into $N_2$, which, when put together, is still independent, since $N_1$ and $N_2$ are disjoint.

\end{proof}

\medskip\medskip\medskip

\LEMCONSTRUCTSINGLETONAUGMENTATION*

\begin{proof}[Proof of~\cref{lem:construct_singleton_augmentation}]

We show that $H$ and $H'$ represent the same transversal matroid by comparing their independent families. 

\textbf{In case that $\operatorname{Ind}(H_{:,L})=\operatorname{Ind}(H)$,} \quad then $D$ is the set of coloops of $H_{:,L}$. Since every maximal partial transversal of $H_{:,L}$ already contains each $d\in D$, the bases of $H'$ are precisely the bases of $H_{:,L}$, so the matroids for $H$, $H_{:,L}$, and $H'$ are the same. 

\textbf{Otherwise, if $\operatorname{Ind}(H_{:,L})\subsetneq \operatorname{Ind}(H)$,} \quad then $H_{:,L}$ has a maximal partial transversal that can be extended by some element $e$ with $H_{e,x}=1$. Because the cardinality of the bases of $H$ is one more than the cardinality of the bases of $H_{:,L}$, we have that every basis $B$ with respect to $H$ can be partitioned into a basis $B_0$ of $H_{:,L}$ and an extra element $b\in B\backslash B_0$ where $H_{b,x}=1$. Clearly $B_0\in\operatorname{Ind}(H')$ and if $b\in D$, then $B$ is also a basis for $H'$. 

Assume that $b\notin D$, then there is a circuit $C$ of $H$ with $b\in C$ such that $C\backslash\{b\}\in \operatorname{Ind}(H_{:,L})$. The corresponding partial transversal of $H_{:,L}$ can be extended by sending  $\phi(b)=x$, but then this extended partial transversal proves that $C$ is independent, contradicting that $C$ is a circuit of $H$, so $b\in D$ must be the case. Thus $\operatorname{Ind}(H) \subseteq \operatorname{Ind}(H')$.

Now let $B$ be a basis for $H'$ and assume that $B\notin \operatorname{Ind}(H)$. By set inclusion, $B\notin \operatorname{Ind}(H_{:,L})$. So there is maximal partial transversal $\phi$ of $H'$ and some $b\in B$ such that 

\begin{equation}
    \phi(b)=x \text{ and } \phi[B\backslash\{b\}]\subseteq L.
\end{equation}

Hence, 

\begin{equation}
    B\backslash\{b\} \;\in\; \operatorname{Ind}(H_{:,L}) \;\subseteq\; \operatorname{Ind}(H).
\end{equation}

Since $B\notin \operatorname{Ind}(H)$, there is a circuit $C\subseteq B$ with $b\in C$. But then $C\backslash\{b\} \subseteq B\backslash\{b\}$ is independent in $H_{:,L}$, so $b\notin D$. This is a contradiction to $H'_{b,\phi(b)} = H'_{b,x} = 1$, because $\phi$ is a partial transversal of $H'$. Therefore $\operatorname{Ind}(H') \subseteq \operatorname{Ind}(H)$ establishing the equality  $\operatorname{Ind}(H') = \operatorname{Ind}(H)$ which implies
$\operatorname{bases}(H')=\operatorname{bases}(H)$ since bases are precisely the maximal independent sets.
\end{proof}

\medskip\medskip\medskip

\subsubsection{Traversing All Solutions to a Column Augmentation}

Now, we have proved the two auxiliary lemmas introduced in~\cref{app:algorithm}.

We observe that these two lemmas are centering around one problem, as formulated in the text box titled \textbf{``Problem of Phase 2 (Reduced)''}: suppose we already know both a transversal matroid itself and the transversal matroid of it after augmented with an unknown singleton column, how can we recover this unknown singleton column to satisfy these two matroids? 

In~\cref{lem:construct_singleton_augmentation}, we have shown a particular solution. Then, what are all the possible solutions, and how can we find them? Among all solutions, is there anything special about the particular solution given in~\cref{lem:construct_singleton_augmentation}? Are there any other particular solutions that might enjoy other properties?

\smallskip

Let us first define all these solutions:

\begin{definition}[\textbf{Solution set of matroid-preserving column augmentations}]\label{def:all_vector_column_augmentations_possible_fillings}

Let $Q\in \{0,1\}^{m\times n}$ be a binary matrix. For each $x\in [n]$, we denote all column vectors that can be used to replace $Q$'s column $x$ while preserving $Q$'s matroid by:

\begin{equation}
    \operatorname{colaug}(Q, x) \;\coloneqq\; \{ \ D \subseteq 2^{[m]} : \ \bases([ \ Q_{:, [n]\backslash\{x\}} \ | \ \mathbbm{1}_D \ ])  = \bases(Q)   \  \},
\end{equation}

where the name $\operatorname{colaug}$ stands for ``column augmentation'', $\mathbbm{1}_D$ denotes a column vector with ones at entries in $D$ and zeros elsewhere, and the notation $[\ \cdot \ | \ \cdot \ ]$ denotes matrices' horizontal concatenation. Apparently, $\operatorname{colaug}(Q, x)$ is non-empty, since at least the current column $Q_{:,x}$ satisfies the condition, as well as the particular column $\mathbbm{1}_D$ defined in \cref{lem:construct_singleton_augmentation}, which may be different from $Q_{:,x}$.

\end{definition}

\medskip

\update{

Before we study how we may traverse all solutions in $\operatorname{colaug}(Q, x)$, let us pay more attention to the particular solutions, since 1) they offer an efficient way to get a solution directly from the matroids, without the need to solving alpha systems, which leads to algorithm speedups, and 2) as we will show below, they have meaningful implications to characterize the whole solution class.

We already have a particular solution from~\cref{lem:construct_singleton_augmentation}, which checks when a single-element deletion from each new circuit still leads to dependent sets before augmentation. In other words, these items are those who contribute to the newly introduced circuits. Following the proof to~\cref{lem:construct_singleton_augmentation}, we have that these items forms not only a solution, but also a unique maximal inclusive solution:

\begin{corollary}[\textbf{Determining items that \textit{must not} appear in any solution}]\label{coro:determine_items_must_not_in_any_solution}
    Let $Q \in \{0,1\}^{m \times (l+1)}$ be a binary matrix with columns partitioned as $[l+1] = L \cup \{x\}$. Let $D \subseteq [m]$ be the particular solution constructed in~\cref{lem:construct_singleton_augmentation}, and let $\operatorname{colaug}(Q, x) \subseteq 2^{[m]}$ be all the solutions of $x$-column augmentation defined in~\cref{def:all_vector_column_augmentations_possible_fillings}. Then, the following condition holds:
    \begin{equation}
        D \; = \; \bigcup \operatorname{colaug}(Q, x).
    \end{equation}
\end{corollary}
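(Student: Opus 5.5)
Two inclusions are needed. First, $D \subseteq \bigcup \operatorname{colaug}(Q, x)$: by \cref{lem:construct_singleton_augmentation} applied with $H = Q$, the column $\mathbbm{1}_D$ preserves the matroid, so $D \in \operatorname{colaug}(Q, x)$ and this direction is immediate. The substance lies in the reverse inclusion. We must show that any solution $D' \in \operatorname{colaug}(Q,x)$ satisfies $D' \subseteq D$. Equivalently, if $i \notin D$, then no matroid-preserving column can have a one in row $i$.

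So fix $D' \in \operatorname{colaug}(Q,x)$ and let $H' \coloneqq [\,Q_{:,L} \mid \mathbbm{1}_{D'}\,]$, so that $\bases(H') = \bases(Q)$. Equivalently, $\operatorname{Ind}(H') = \operatorname{Ind}(Q)$ and $\operatorname{circuits}(H') = \operatorname{circuits}(Q)$. Take $i \in D'$ and suppose, for contradiction, that $i \notin D$. By the definition of $D$, there is a circuit $C$ of $Q$ with $i \in C$ and $C \setminus \{i\} \in \operatorname{Ind}(Q_{:,L})$. This means there is a partial transversal $\phi$ matching $C\setminus\{i\}$ injectively into the columns $L$. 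In $H'$ the entry $H'_{i,x} = 1$, and column $x$ is unused by $\phi$. Extending $\phi$ by $\phi(i) = x$ therefore matches all of $C$ into $L \cup \{x\}$, so $C \in \operatorname{Ind}(H')$. This is exactly the argument used in the proof of \cref{lem:construct_singleton_augmentation}. But $C$ is a circuit of $Q$, hence dependent in $Q$, and so dependent in $H'$ since the two matroids coincide. This is a contradiction, so $i \in D$, and hence $D' \subseteq D$. Taking the union over all $D'$ gives $\bigcup \operatorname{colaug}(Q,x) \subseteq D$.

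The main point to handle carefully is the identification of $\operatorname{Ind}(Q_{:,L})$ with $\operatorname{Ind}(H'_{:,L})$. This holds trivially because the two matrices share the $L$ columns verbatim, so the witness $\phi$ transfers directly from $Q$ to $H'$. With that observation, no case split is required. In particular, the case $\operatorname{Ind}(Q_{:,L}) = \operatorname{Ind}(Q)$ treated separately in \cref{lem:construct_singleton_augmentation} needs no special treatment here, because the circuit argument above is uniform.
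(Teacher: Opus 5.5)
Your proof is correct and matches the paper's argument, which appears in the first part of the proof of \cref{lem:hasse_diagram_among_colaug_traverse}: $D\in\operatorname{colaug}(Q,x)$ holds by \cref{lem:construct_singleton_augmentation}, and any $d'\in D'\setminus D$ yields a circuit $C\ni d'$ whose remainder matches into $L$, so routing $d'$ to column $x$ would make $C$ independent under $\mathbbm{1}_{D'}$, which is impossible. The only cosmetic difference is that the paper extends $C$ to a basis and notes that no basis of $Q$ contains a circuit, whereas you compare independent sets directly.
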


In other words, by giving a unique maximal solution,~\cref{lem:construct_singleton_augmentation} characterizes which items in $[m]$ \textit{can} be included in some solution(s), or equivalently, which items \textit{must not} appear in any solution.

This then naturally leads to another question: which are the items that \textit{must} be included in all solutions, and more importantly, is there an efficient way to determine them, just like~\cref{lem:construct_singleton_augmentation}, without having to traverse the whole solution set? The answer is yes, by taking complements to~\cref{lem:construct_singleton_augmentation}:

\medskip

\begin{corollary}[\textbf{Determining items that \textit{must} appear in all solutions}]\label{coro:determine_items_must_in_all_solution}
    Let $Q \in \{0,1\}^{m \times (l+1)}$ be a binary matrix with columns partitioned as $L \cup \{x\}$. We define the ``difference in cocircuits'' as
    
    \begin{equation}
        \operatorname{diffcc}(Q, x) \;\coloneqq\; \operatorname{cocircuits}(Q) \setminus \operatorname{cocircuits}(Q_{:, L } ).
    \end{equation}

    Further, for any $A \subseteq 2^V$, a set of subsets of some universe $V$, we define the minimum-sized elements of $A$ (with a slightly special treat on $\varnothing$) as:

    \begin{equation}
        \operatorname{minimum}(A) \;\coloneqq\; \begin{cases}
        \{a\in A: \ |a| = \min_{a' \in a}|a'|\}, &\text{ if } A \neq \varnothing,\\
        \{ \varnothing \}, &\text{ otherwise. }
        \end{cases}
    \end{equation}
    
    The minimal-inclusion elements of $A$ (with a slightly special treat on $\varnothing$) is then:
    
    \begin{equation}
        \operatorname{minimal}(A) \;\coloneqq\; \begin{cases}
        \{a\in A: \ \nexists \ a'\in A \text{ with } a' \subsetneq a\}, &\text{ if } A \neq \varnothing,\\
        \{ \varnothing \}, &\text{ otherwise. }
        \end{cases}
    \end{equation}

    Then, the following conditions always hold:

    \begin{enumerate}
    \item The minimum-sized new cocircuits are all particular solutions. Moreover, they are exactly those minimal-inclusion ones among all solutions, which have a same (minimum) size:
    \begin{equation}
        \operatorname{minimal}(\operatorname{colaug}(Q,x)) \;=\; \operatorname{minimum}(\operatorname{colaug}(Q,x)) \;=\; 
        \operatorname{minimum}(\operatorname{diffcc}(Q,x)).
    \end{equation}

    \item Non-minimum-sized new cocircuits are not solutions, that is,
    \begin{equation}
       ( \operatorname{diffcc}(Q,x) \setminus \operatorname{minimum}(\operatorname{diffcc}(Q,x))) \;\cap\; \operatorname{colaug}(Q,x) \;=\; \varnothing.
    \end{equation}

    \item The intersection of minimum-sized new cocircuits may not be a solution itself, but it characterizes exactly items that must appear in all solutions:
    \begin{equation}
       \bigcap\operatorname{minimum}(\operatorname{diffcc}(Q,x)) \; = \; \bigcap \operatorname{colaug}(Q, x).
    \end{equation}
\end{enumerate}

\end{corollary}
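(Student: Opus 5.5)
The plan is to reduce everything to the union-matroid description given by \cref{lem:column_stack_independent_sets_union}. Write $M$ for the matroid of $Q_{:,L}$ (rank $k$) and $N$ for the matroid of $Q$. For any $D\subseteq[m]$, \cref{lem:column_stack_independent_sets_union} says that the matroid of $[Q_{:,L}\,|\,\mathbbm{1}_D]$ is the union $M\vee U_D$, where $U_D$ is the rank-one matroid whose loops are exactly $[m]\setminus D$. So $D\in\operatorname{colaug}(Q,x)$ iff $M\vee U_D=N$.

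\textbf{The degenerate case $r(N)=k$.} Here $N=M$. By \cref{coro:determine_items_must_not_in_any_solution}, the solutions are exactly the subsets of the particular solution $D$ of \cref{lem:construct_singleton_augmentation}, which here is the set of coloops of $M$; this includes $\varnothing$. We also have $\operatorname{cocircuits}(N)=\operatorname{cocircuits}(M)$, so $\operatorname{diffcc}=\varnothing$ and $\operatorname{minimum}(\operatorname{diffcc})=\{\varnothing\}$. Items 1--3 then hold trivially: the unique minimal and minimum solution is $\varnothing$, and both intersections are $\varnothing$.

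\textbf{The main case $r(N)=k+1$.} Denote by $D_{\max}$ the particular solution of \cref{lem:construct_singleton_augmentation}. First I would establish an exact criterion: $D$ is a solution iff (i) $D\subseteq D_{\max}$, and (ii) $D$ meets every basis of $N$.
\begin{itemize}
\item Necessity of (i) is \cref{coro:determine_items_must_not_in_any_solution}.
\item For (ii), every basis of $M\vee U_D$ has the form $B_0\cup\{e\}$ with $B_0\in\bases(M)$ and $e\in D$, so it meets $D$.
\item For sufficiency, reuse the proof of \cref{lem:construct_singleton_augmentation}. Each $e\in B\cap D_{\max}$ has $B\setminus\{e\}\in\operatorname{Ind}(M)$, since otherwise $e$ would lie in a new circuit violating membership in $D_{\max}$. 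Hence each $B\in\bases(N)$ meeting $D$ is still a basis of $M\vee U_D$. Conversely, $\operatorname{Ind}(M\vee U_D)\subseteq\operatorname{Ind}(M\vee U_{D_{\max}})=\operatorname{Ind}(N)$ by monotonicity.
\end{itemize}
Since minimal sets meeting every basis of $N$ are exactly the cocircuits of $N$, every minimal solution is a cocircuit $C^\ast$ of $N$ with $C^\ast\subseteq D_{\max}$.

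Next I would show that cocircuits of $N$ contained in $D_{\max}$ are precisely the new cocircuits, i.e.\ those not in $\operatorname{cocircuits}(M)$. Complement to hyperplanes: $C^\ast$ is a cocircuit of $N$ iff $H=E\setminus C^\ast$ is a hyperplane of $N$, so $r_N(H)=k$. Since $r_M(H)\le r_N(H)\le r_M(H)+1$, either $r_M(H)=k$, so that $H$ spans $M$ and $C^\ast$ is not an $M$-cocircuit, or $r_M(H)=k-1$. In the latter case $H$ is also a hyperplane of $M$ and $C^\ast$ is an old cocircuit. Such a $C^\ast$ must contain an element outside $D_{\max}$ (a non-coloop element carrying only old circuits), so it cannot be a solution. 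Comparing with the criterion above, the solutions are the sets that are contained in $D_{\max}$ and contain some new cocircuit that is itself contained in $D_{\max}$.

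\textbf{The main obstacle} is the equal-size claim in item 1, together with item 2: a new cocircuit is a solution iff it has the minimum size, equivalently iff it lies inside $D_{\max}$. To get this, I would use fundamental cocircuits of $N$. Fix a basis $B=B_0\cup\{b\}$ of $N$ with $b\in D_{\max}$. Then argue that every new cocircuit equals $D^N_B(f)$ for some $f\in B$, and that it lies inside $D_{\max}$ exactly when its complement $H$ satisfies $D_{\max}\setminus H=D_{\max}\cap C^\ast$ and $|C^\ast|=|D_{\max}|-r_N(\overline{H}\cap D_{\max})$. The hoped-for outcome is that the new cocircuits contained in $D_{\max}$ are exactly the complements within $D_{\max}$ of the maximal $N$-hyperplanes spanning $M$. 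All of these should have a common size, namely $|D_{\max}|$ minus the largest number of elements of $D_{\max}$ that can be avoided while still spanning. If this direct count fails, I would fall back on a basis-exchange argument: given two minimal solutions $C_1,C_2$ with $|C_1|<|C_2|$, use strong cocircuit elimination in $N$ to produce a solution strictly inside $C_2$, a contradiction.

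Item 3 then follows at once. Every solution contains a minimal one, so $\bigcap\operatorname{colaug}(Q,x)=\bigcap\operatorname{minimal}(\operatorname{colaug}(Q,x))$, and by item 1 this equals $\bigcap\operatorname{minimum}(\operatorname{diffcc}(Q,x))$.
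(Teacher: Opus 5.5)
\textbf{Your ``exact criterion'' in the main case is false.} Take rows $1,2,3$ and columns $\ell_1,\ell_2,x$ with $Q_{1,:}=(1,0,0)$, $Q_{2,:}=(0,1,0)$, $Q_{3,:}=(0,1,1)$. Then $M$ has the coloop $1$ and the parallel pair $\{2,3\}$, $N$ is free on $\{1,2,3\}$, and $D_{\max}=\{1,2,3\}$. The set $\{1\}$ lies in $D_{\max}$ and meets the only basis of $N$. Yet $[Q_{:,L}\,|\,\mathbbm{1}_{\{1\}}]$ has matching rank $2$, so $\{1\}\notin\operatorname{colaug}(Q,x)$.

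The broken step is your claim that every $e\in B\cap D_{\max}$ has $B\setminus\{e\}\in\operatorname{Ind}(M)$. Here $B\setminus\{1\}=\{2,3\}$ is $M$-dependent, and no $N$-circuit arises. The same example refutes your claim that an old cocircuit must leave $D_{\max}$, since $\{1\}$ is a cocircuit of both $M$ and $N$. Your final description of the solution set happens to be right, but only through these two compensating errors. So ``every minimal solution is a cocircuit of $N$'' is unproved.

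The correct tool is the rank identity that follows from \cref{lem:column_stack_independent_sets_union}. Let $D_0$ be the support of $Q_{:,x}$. Then $r_N(X)=r_M(X)+1$ if $D_0$ meets an $M$-circuit contained in $X$, and $r_N(X)=r_M(X)$ otherwise. Hence $D\in\operatorname{colaug}(Q,x)$ iff $D$ meets exactly the same $M$-circuits as $D_0$. It also follows that $D_{\max}=E\setminus W$, where $W$ is the union of the $M$-circuits avoiding $D_0$.

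A minimal solution $D$ then gives each $d\in D$ a private $M$-circuit $C_d$ with $C_d\cap D=\{d\}$. Circuit--cocircuit orthogonality forbids $D$ from containing an $M$-cocircuit, so $E\setminus D$ spans $M$. It is moreover an $N$-hyperplane: it has $N$-rank $k$, while adding any $d$ captures $C_d$. Thus $D$ is a new cocircuit.

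\textbf{The equal-size claim is not proved.} This claim is the real content of items 1--2, and you describe only a hoped-for outcome and an unexecuted cocircuit-elimination fallback. A direct count closes it. If $H$ is an $N$-hyperplane spanning $M$, then no $M$-circuit inside $H$ meets $D_0$, hence none meets $D_{\max}$. So $H\cap D_{\max}$ consists of coloops of $M|H$, which gives $r_M(H\cap W)+|H\cap D_{\max}|=k$ and therefore $|E\setminus H|=|E|-k-(|H\cap W|-r_M(H\cap W))$. Since $W$ is a union of circuits, $M|W$ has no coloops. So the nullity of $H\cap W$ is strictly below that of $W$ unless $W\subseteq H$.

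Hence the new cocircuits inside $D_{\max}$ all share one size, and every other new cocircuit is strictly larger. Moreover, a new cocircuit $E\setminus H$ is a solution exactly when $W\subseteq H$. Together with the previous paragraph, this yields items 1--3.
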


Roughly speaking,~\cref{coro:determine_items_must_in_all_solution} takes a complement to~\cref{coro:determine_items_must_not_in_any_solution}: for any valid solution, it must complete new bases (witnessed by minimal new cocircuits), so any item that appears in every such minimum new cocircuit must appear in all valid solutions.

}

\medskip

We use an example to illustrate these definitions above.

\begin{example}[\textbf{Illustration of~\cref{def:all_vector_column_augmentations_possible_fillings}}]\label{eg:illustrate_column_augmentation_definitions}
Suppose $Q$ is a matrix with row indices $\{1,2,3,4\}$ and column indices $\{\alpha, \beta, \gamma\}$, as follows.

\begin{equation}
Q = \begin{bNiceMatrix}[first-row,first-col]
& \alpha & \beta & \gamma \\
1 & 0 & \times & 0 \\
2 & \times & 0 & 0 \\
3 & \times & \times & \times \\
4 & \times & 0 & 0
\end{bNiceMatrix}, \quad\quad \text{then } \begin{cases}\bases(Q) = \{\{1,2,3\}, \{1,3,4\}\},\\
\operatorname{circuits}(Q) = \{\{2,4\}\},\\
\operatorname{cocircuits}(Q) = \{\{1\}, \{3\}, \{2,4\}\}.
\end{cases}
\end{equation}

\paragraph{Consider the case with $x=\alpha$.} \ \ The remaining columns are:

\begin{equation}
Q_{:, \{\beta,\gamma\}} = \begin{bNiceMatrix}[first-row,first-col]
& \beta & \gamma \\
1 &  \times & 0 \\
2 &  0 & 0 \\
3 &  \times & \times \\
4 &  0 & 0
\end{bNiceMatrix}, \quad\quad \text{with } \begin{cases}\bases(Q_{:, \{\beta,\gamma\}}) = \{\{1,3\}\},\\
\operatorname{circuits}(Q_{:, \{\beta,\gamma\}}) = \{\{2\},\{4\}\},\\
\operatorname{cocircuits}(Q_{:, \{\beta,\gamma\}}) = \{\{1\}, \{3\}\}.
\end{cases}
\end{equation}

The particular solution (also the maximal unique solution) given by~\cref{lem:construct_singleton_augmentation,coro:determine_items_must_not_in_any_solution} is:

\begin{equation}
D \; = \; \{1,2,3,4\},
\end{equation}

where $1$ and $3$ are coloops, and $2$ and $4$ lead to the new circuits in $Q$.

The particular solutions given by~\cref{coro:determine_items_must_in_all_solution} are:

\begin{equation}
\begin{aligned}
\operatorname{minimum}(\operatorname{diffcc}(Q,\alpha)) &\coloneqq \operatorname{minimum}(\operatorname{cocircuits}(Q) \setminus \operatorname{cocircuits}(Q_{:, \{\beta,\gamma\}})) \\
& = \operatorname{minimum}(\{\{1\}, \{3\}, \{2,4\}\} \setminus \{\{1\}, \{3\}\}) \\
& = \operatorname{minimum}(\{\{2,4\}\}), \\
& = \{\{2,4\}\}.
\end{aligned}
\end{equation}

In total, there are four possible columns that can replace $Q_{:, \alpha}$ without changing the matroid:

\begin{equation}
    \operatorname{colaug}(Q, \alpha) \;=\; \left\{  
    \{2,4\}, \{1, 2,4\}, \{2,3,4\}, \{1,2,3,4\}
    \right\}.
\end{equation}

For example, choose $D = \{1, 2, 4\} \in \operatorname{colaug}(Q, \alpha)$, one may verify that

\begin{equation}
\text{Let } Q' = \begin{bNiceMatrix}[first-row,first-col]
& \alpha & \beta & \gamma \\
1 & \textcolor{red}{\times} & \times & 0 \\
2 & \textcolor{red}{\times} & 0 & 0 \\
3 & \textcolor{red}{0} & \times & \times \\
4 & \textcolor{red}{\times} & 0 & 0
\end{bNiceMatrix}, \quad\quad \text{still we have } \bases(Q') = \bases(Q) = \{\{1,2,3\}, \{1,3,4\}\}.
\end{equation}

For now, let us not consider how these whole solutions $\operatorname{colaug}(Q, \alpha)$ are obtained; one may just think of them as obtained by exhaustively searching over all $2^4$ possible columns.

\smallskip

\paragraph{Consider the case with $x=\beta$.} \ \ The remaining columns are:

\begin{equation}
Q_{:, \{\alpha,\gamma\}} = \begin{bNiceMatrix}[first-row,first-col]
& \alpha & \gamma \\
1 &  0 & 0 \\
2 &  \times & 0 \\
3 &  \times & \times \\
4 &  \times & 0
\end{bNiceMatrix}, \quad\quad \text{with } \begin{cases}\bases(Q_{:, \{\alpha,\gamma\}}) = \{\{2,3\},\{3,4\}\},\\
\operatorname{circuits}(Q_{:, \{\alpha,\gamma\}}) = \{\{1\},\{2,4\}\},\\
\operatorname{cocircuits}(Q_{:, \{\alpha,\gamma\}}) = \{\{3\}, \{2,4\}\}.
\end{cases}
\end{equation}

So, the maximal solution (\cref{lem:construct_singleton_augmentation,coro:determine_items_must_not_in_any_solution}) is: $D \;=\;  \{1,3\}$.

The minimal solutions (\cref{coro:determine_items_must_in_all_solution}) are: $\operatorname{minimum}(\operatorname{diffcc}(Q,\beta)) \;=\;  \{\{1\}\}$.

And all the possible solutions are: $\operatorname{colaug}(Q, \beta) \;=\; \left\{ \{ 1 \}, \{ 1,3 \} \right\}$.

\smallskip

\paragraph{Consider the case with $x=\gamma$.} \ \ The remaining columns are:

\begin{equation}
Q_{:, \{\alpha,\beta\}} = \begin{bNiceMatrix}[first-row,first-col]
& \alpha & \beta \\
1 &  0 & \times \\
2 &  \times & 0 \\
3 &  \times & \times \\
4 &  \times & 0
\end{bNiceMatrix}, \quad\quad \text{with } \begin{cases}\bases(Q_{:, \{\alpha,\beta\}}) = \{\{1,2\},\{1,3\},\{1,4\},\{2,3\},\{3,4\}\},\\
\operatorname{circuits}(Q_{:, \{\alpha,\beta\}}) = \{\{2,4\}, \{1,2,3\}, \{1,3,4\}\},\\
\operatorname{cocircuits}(Q_{:, \{\alpha,\beta\}}) = \{\{1,3\}, \{1,2,4\}, \{2,3,4\}\}.
\end{cases}
\end{equation}

So, the maximal solution (\cref{lem:construct_singleton_augmentation,coro:determine_items_must_not_in_any_solution}) is: $D \;=\;  \{1,3\}$.

The minimal solutions (\cref{coro:determine_items_must_in_all_solution}) are: $\operatorname{minimum}(\operatorname{diffcc}(Q,\gamma)) \;=\;  \{\{1\}, \{3\}\}$.

And all the possible solutions are: $\operatorname{colaug}(Q, \beta) \;=\; \{\{1\}, \{3\}, \{1,3\}\}$.

\end{example}

\medskip

So far, we have introduced the definitions about ``matroid-preserving column augmentations'', and have provided particular ways to construct both the unique maximal solution and the set of minimal solutions. Then, starting from these particular solutions, or starting from any solution, how can we span to the whole solution set? This is answered by the following result.

We now present the structure among all column augmentations, which describes how how the whole solutions can be traversed, and is thus important to our result about equivalence class traversal. In particular, any two solutions can reach each other by a sequence of ``edge additions/deletions''.

\begin{restatable}[\textbf{The whole column augmentations can be traversed by edge additions/deletions}]{lemma}{LEMWHOLEAUGMENTATIONCANBETRAVERSED}\label{lem:hasse_diagram_among_colaug_traverse}

For any matrix $Q\in\{0,1\}^{m\times n}$ and a column index $x \in [n]$, we define a digraph termed $\Gcal^{\operatorname{aug}}_{Q,x}$, which is a Hasse diagram with vertices being elements of $\operatorname{colaug}(Q,x)$, and edges being:

\begin{equation}
    D_i \to D_j \in \Gcal^{\operatorname{aug}}_{Q,x} \iff D_i\subsetneq D_j \text{ with } |D_j \setminus D_i| = 1, \quad \forall D_i, D_j \in \operatorname{colaug}(Q,x).
\end{equation}

Then, this digraph is weakly connected.

\end{restatable}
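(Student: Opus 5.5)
Our plan is to show that every $D\in\operatorname{colaug}(Q,x)$ is joined in $\Gcal^{\operatorname{aug}}_{Q,x}$ to the unique maximal solution $D^\ast\coloneqq\bigcup\operatorname{colaug}(Q,x)$, which equals the particular column of \cref{lem:construct_singleton_augmentation} by \cref{coro:determine_items_must_not_in_any_solution}. Weak connectivity then follows at once, since all vertices lie in one weak component through $D^\ast$. The key claim is an \emph{upward closure} property: if $D\in\operatorname{colaug}(Q,x)$ and $D\subseteq D'\subseteq D^\ast$, then $D'\in\operatorname{colaug}(Q,x)$. Granting this, we add the elements of $D^\ast\setminus D$ one at a time. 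Every intermediate set is a solution, and consecutive sets differ in exactly one element. This gives a directed path $D\to\cdots\to D^\ast$ in the Hasse diagram.

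To prove upward closure, write $L=[n]\setminus\{x\}$, $M\coloneqq[Q_{:,L}\,|\,\mathbbm{1}_D]$ and $M'\coloneqq[Q_{:,L}\,|\,\mathbbm{1}_{D'}]$. By \cref{lem:column_stack_independent_sets_union}, $\operatorname{Ind}(M)=\{Z_1\cup Z_2: Z_1\in\operatorname{Ind}(Q_{:,L}),\ Z_2\in\operatorname{Ind}(\mathbbm{1}_D)\}$. The independent sets of the single column $\mathbbm{1}_D$ are $\varnothing$ and the singletons in $D$. It follows that the independent sets are monotone in $D$, so $\operatorname{Ind}(M)\subseteq\operatorname{Ind}(M')$. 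Applying the same reasoning with $D^\ast$ in place of $D'$ gives $\operatorname{Ind}(M')\subseteq\operatorname{Ind}([Q_{:,L}\,|\,\mathbbm{1}_{D^\ast}])$. Both ends of this chain equal $\operatorname{Ind}(Q)$, because $D$ and $D^\ast$ are solutions. Hence $\operatorname{Ind}(M')=\operatorname{Ind}(Q)$, and therefore $\bases(M')=\bases(Q)$, i.e.\ $D'\in\operatorname{colaug}(Q,x)$. We compare independent sets rather than bases, since bases determine them and vice versa.

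The only step that needs care is the identification of $D^\ast$ as a genuine solution. It is not enough for $D^\ast$ to be merely the union of solutions: the sandwich argument requires a top element that itself lies in $\operatorname{colaug}(Q,x)$. Here we rely on \cref{lem:construct_singleton_augmentation}, applied to $Q$ with column partition $L\cup\{x\}$, which shows that $D^\ast$ preserves the matroid. We rely on \cref{coro:determine_items_must_not_in_any_solution} to show that $D^\ast$ contains every solution. If one prefers not to rely on that corollary, the same monotonicity argument shows directly that the union of two solutions is again a solution: sandwich the union between one of them and the maximal constructed set $D$. So the family is closed under unions and has a top element. The remainder is routine bookkeeping of the path construction.
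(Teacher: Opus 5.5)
Your main argument is correct and is essentially the paper's proof. The paper also takes the set $D$ constructed in \cref{lem:construct_singleton_augmentation} as the top element, shows every solution $D'$ satisfies $D'\subseteq D$, and then shows $D'\cup\{d\}$ is still a solution for any $d\in D\setminus D'$ by the same monotonicity sandwich you use, which gives directed paths into $D$. The only difference is how the containment is obtained. The paper proves it inline: an element $d'\in D'\setminus D$ lies in a circuit $C$ of $Q$ with $C\setminus\{d'\}$ independent in $Q_{:,[n]\setminus\{x\}}$, so routing $d'$ to column $x$ would make $C$ independent. You instead cite \cref{coro:determine_items_must_not_in_any_solution}, whose only justification in the paper is essentially this argument.

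Your fallback remark, however, is circular as written. Sandwiching $D_1\cup D_2$ below ``the maximal constructed set'' presupposes $D_1\cup D_2\subseteq D$, which is exactly the containment you wanted to avoid citing. The sandwich is also unnecessary here. With $L=[n]\setminus\{x\}$ as in your notation, the one-column matrix $\mathbbm{1}_{D}$ has independent sets $\varnothing$ and the singletons of $D$. So \cref{lem:column_stack_independent_sets_union} gives $\operatorname{Ind}([Q_{:,L}\,|\,\mathbbm{1}_{D_1\cup D_2}])=\operatorname{Ind}([Q_{:,L}\,|\,\mathbbm{1}_{D_1}])\cup\operatorname{Ind}([Q_{:,L}\,|\,\mathbbm{1}_{D_2}])$, and unions of solutions are solutions outright. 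Hence $D^\ast=\bigcup\operatorname{colaug}(Q,x)$ is itself a solution; the family is nonempty because it contains the support of $Q_{:,x}$. Combined with your interval-closure sandwich, this gives a self-contained proof from \cref{lem:column_stack_independent_sets_union} alone, with no appeal to \cref{lem:construct_singleton_augmentation} or to the corollary.
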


We illustrate~\cref{lem:hasse_diagram_among_colaug_traverse} by recalling~\cref{eg:illustrate_column_augmentation_definitions}:

\begin{figure}[H]
\centering

\begin{minipage}[b]{0.4\textwidth}
\centering
\begin{tikzpicture}[->,>=stealth,shorten >=0pt,auto,thick,scale=0.8]
\def\yhei{3.5}
\node[circle, fill=red, inner sep=2pt] (L1) at (0,0+\yhei) {};
\node[circle, fill=black, inner sep=2pt] (L2) at (-0.8,-1.2+\yhei) {};
\node[circle, fill=black, inner sep=2pt] (L3) at (0.8,-1.2+\yhei) {};
\node[circle, fill=blue, inner sep=2pt] (L4) at (0,-2.4+\yhei) {};
\node[] at (0.8, 0+\yhei) {\textcolor{red}{$\{2,4\}$}};
\node[] at (-1.8,-1.2+\yhei) {$\{1,2,4\}$};
\node[] at (1.8,-1.2+\yhei) {$\{2,3,4\}$};
\node[] at (1.2,-2.4+\yhei) {\textcolor{blue}{$\{1,2,3,4\}$}};
\node[] at (0, 0) {(a) $\Gcal^{\operatorname{aug}}_{Q,\alpha}$};
\draw[->] (L1) -- (L2);
\draw[->] (L1) -- (L3);
\draw[->] (L2) -- (L4);
\draw[->] (L3) -- (L4);
\end{tikzpicture}
\end{minipage}
\hfill
\begin{minipage}[b]{0.2\textwidth}
\centering
\begin{tikzpicture}[->,>=stealth,shorten >=0pt,auto,thick,scale=0.8]
\def\yhei{3.05}
\node[circle, fill=red, inner sep=2pt] (L1) at (0,0+\yhei) {};
\node[circle, fill=blue, inner sep=2pt] (L2) at (0,-1.5+\yhei) {};
\node[] at (0.6, 0+\yhei) {\textcolor{red}{$\{1\}$}};
\node[] at (0.8,-1.5+\yhei) {\textcolor{blue}{$\{1,3\}$}};
\node[] at (0, 0) {(b) $\Gcal^{\operatorname{aug}}_{Q,\beta}$};
\draw[->] (L1) -- (L2);
\end{tikzpicture}
\end{minipage}
\hfill
\begin{minipage}[b]{0.3\textwidth}
\centering
\begin{tikzpicture}[->,>=stealth,shorten >=0pt,auto,thick,scale=0.8]
\def\yhei{2.9}
\node[circle, fill=red, inner sep=2pt] (L1) at (-0.8,0+\yhei) {};
\node[circle, fill=red, inner sep=2pt] (L2) at (0.8,0+\yhei) {};
\node[circle, fill=blue, inner sep=2pt] (L3) at (0,-1.2+\yhei) {};
\node[] at (1.4, 0+\yhei) {\textcolor{red}{$\{3\}$}};
\node[] at (-1.4, 0+\yhei) {\textcolor{red}{$\{1\}$}};
\node[] at (0.8,-1.2+\yhei) {\textcolor{blue}{$\{1,3\}$}};
\node[] at (0, 0) {(c) $\Gcal^{\operatorname{aug}}_{Q,\gamma}$};
\draw[->] (L1) -- (L3);
\draw[->] (L2) -- (L3);
\end{tikzpicture}
\end{minipage}

\caption{Example Hasse diagrams (defined in~\cref{lem:hasse_diagram_among_colaug_traverse}) over the solution sets of matroid-preserving column augmentations. Instances from the earlier~\cref{eg:illustrate_column_augmentation_definitions} are used. In each diagram, the root vertices, corresponding to the minimal solutions (see~\cref{coro:determine_items_must_in_all_solution}) are highlighted in red, while the leaf vertex, corresponding to the unique maximal solution (see~\cref{coro:determine_items_must_not_in_any_solution}) are highlighted in blue.}
\label{fig:hasse_digram_among_column_augmentations}
\end{figure}
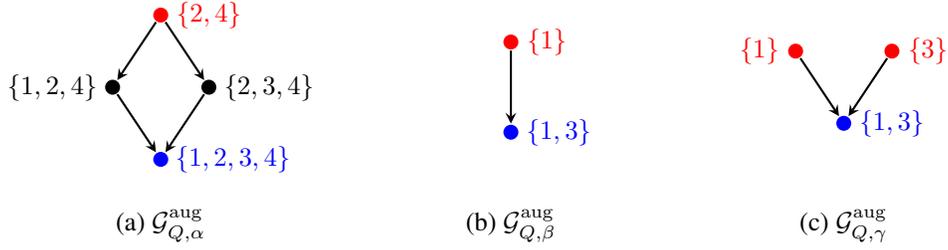

\begin{proof}[Proof of~\cref{lem:hasse_diagram_among_colaug_traverse}]
Let

\begin{equation}
    D = \{ i\in \{1,\ldots,m\} \mid \forall C\in \operatorname{circuits}(Q):\, i\in C \Rightarrow C\backslash\{i\} \notin \operatorname{Ind}(Q_{:,[n]\backslash\{x\}})   \}.
\end{equation}

From \cref{lem:construct_singleton_augmentation}, we know $D\in \operatorname{colaug}(Q,x)$. 

Let $D'\in \operatorname{colaug}(Q,x)$, then $D'\subseteq D$, because if there is $d'\in D'$ with $d'\notin D$, then there exists a circuit $C$ in $Q$ with $d'\in C$ such that $C\backslash \{d'\}$ has a partial transversal omitting the column $x$. This partial transversal may be extended to $C$ by sending $d'$ to the column $x$, which then can be extended to a basis $B_C\supseteq C$ in the transversal matroid represented by $(Q_{:,[n]\backslash \{x\}} \, \mathbbm{1}_{D'})$. But $Q$ cannot have a basis that contains one of its circuits, which implies that $D'\notin \operatorname{colaug}(Q,x)$. So $D$ is the maximal element of $\operatorname{colaug}(Q,x)$. 
 
 Let $D'\in \operatorname{colaug}(Q,x)$ with $D'\not=D$ and let $d\in D \backslash D'$. 
 Let $B\in \operatorname{bases}(Q)$, then 
 $B\in \operatorname{bases}((Q_{:,[n]\backslash \{x\}} \, \mathbbm{1}_{D'}))$. The maximal partial transversal $\phi$ that witnesses the independence of $B$ with respect to $(Q_{:,[n]\backslash \{x\}} \, \mathbbm{1}_{D'})$ also witnesses the independence of $B$  with respect to $(Q_{:,[n]\backslash \{x\}} \, \mathbbm{1}_{D'\cup\{d\}})$. So we have
 
 \begin{equation}
     \bases(Q) \;=\; \operatorname{bases}((Q_{:,[n]\backslash \{x\}} \, \mathbbm{1}_{D'})) \;\subseteq\; \operatorname{bases}((Q_{:,[n]\backslash \{x\}} \, \mathbbm{1}_{D'\cup\{d\}})).
 \end{equation}
 
Now, let $B\in\bases((Q_{:,[n]\backslash \{x\}} \, \mathbbm{1}_{D'\cup\{d\}}))$. The maximal partial transversal witnessing $B$ with respect to $(Q_{:,[n]\backslash \{x\}} \, \mathbbm{1}_{D'\cup\{d\}})$ is also a maximal partial transversal with respect to $(Q_{:,[n]\backslash \{x\}} \, \mathbbm{1}_{D})$, so $B\in \operatorname{bases}((Q_{:,[n]\backslash \{x\}} \, \mathbbm{1}_{D})) = \bases(Q)$. Therefore,

\begin{equation}
    \bases((Q_{:,[n]\backslash \{x\}} \, \mathbbm{1}_{D'\cup\{d\}})) \;=\;\bases(Q), \text{ and }D'\cup\{d\}\in \operatorname{colaug}(Q,x).
\end{equation}

Hence, for every $D_0,D_1\in \operatorname{colaug}(Q,x)$ there is a directed path to $D$ which gives

\begin{equation}
    D_0 \rightarrow \cdots \rightarrow D \leftarrow \cdots \leftarrow D_1,
\end{equation}

Therefore, $\Gcal^{\operatorname{aug}}_{Q,x}$ is weakly connected.
\end{proof}

\medskip\medskip\medskip

\subsubsection{From One Column Augmentation to Multiple Columns' Joint Augmentation}

We have now shown how one can obtain particular solutions or traverse all solutions to a single column augmentation. In what follows, we shift from one column augmentation to multiple column augmentation, which directly relates to our final graphical criterion to be shown in next section: we need to augment the whole $X$ columns, identifying their outgoing edges.

Interestingly, this seemingly combinatorially complex satisfiability problem can be decomposed locally, that is, it suffices to satisfy each singleton column augmentation independently, shown below.

\smallskip

\LEMDISENTANGLEUNIONTOSINGLES*

\begin{proof}[Proof of~\cref{lem:all_union_reduce_to_single_L_Yis}]

Clearly, condition \ref{eq:lemma9-rhs} is a special case of condition \ref{eq:lemma9-lhs} with the choices $\bm{x}\in \{\emptyset, \{X_1\}, \ldots, \{X_k\}\}$, thus if \ref{eq:lemma9-lhs} holds, then so does \ref{eq:lemma9-rhs}. 

If \ref{eq:lemma9-lhs} does not hold, then there is some $\bm{x}\subseteq X$ such that $\bases(Q_{:,L\cup\bm{x}})\not=\bases(H_{:,L\cup\bm{x}})$.
W.l.o.g. we may assume that there is some $B\in \bases(Q_{:,L\cup\bm{x}})$ with $B\notin \bases(H_{:,L\cup\bm{x}})$. 

\textbf{If $|\bm{x}|\leq 1$,} then \ref{eq:lemma9-rhs} clearly does not hold. 

\textbf{Now assume that $|\bm{x}|> 1$.}
Choose $B\in \bases(Q_{:,L\cup\bm{x}}) \backslash \bases(H_{:,L\cup\bm{x}})$ and
a partial transversal $\phi\colon B \rightarrow L\cup\bm{x}$ for $Q_{:,L\cup\bm{x}}$ such that $\delta:=|\{b\in B \mid H_{b,\phi(b)}=0\}|$ is minimal. Due to the minimality of $\delta$ (may be obtained via basis exchange), there is exactly one $b\in B$ such that $H_{b,\phi(b)} = 0$. Let 

\begin{equation}
    B'=\{b'\in B \mid \phi(b')\in L \cup \{\phi(b)\},
\end{equation}

then $B'$ is a basis of $Q_{:,L\cup\{\phi(b)\}}$, but $B'$ is not a basis of $H_{:,L\cup\{\phi(b)\}}$, shown below: 

Assume that $B'$ is a basis of $H_{:,L\cup\{\phi(b)\}}$, then there is a partial transversal $\psi\colon B'\rightarrow L\cup\{\phi(b)\}$. Construct 

\begin{equation}
    \phi'\colon B\rightarrow H_{:,L\cup\bm{x}}
\end{equation}

such that 

\begin{equation}
\phi'(b') \;=\;
    \begin{cases}
        \psi(b') &\text{ for } b'\in B', \\
        \phi(b')  &\text{ for }  b'\in B\backslash B'.
    \end{cases}
\end{equation}

$\phi'$ is a partial transversal, because 

\begin{equation}
    \phi[B\backslash B'] \cap (L\cup\{\phi(b)\}) = \emptyset,
\end{equation}

due to the definition of $B'$. Thus if $\phi'(b_0)=\phi'(b_1)$. Then:

$1^\circ$ Either $\{b_0,b_1\}\subseteq B'$: in this case, $\phi'(b_0)=\psi(b_0)=\psi(b_1)=\phi'(b_1)$ implies $b_0=b_1$;

$2^\circ$ Otherwise we have $\{b_0,b_1\}\subseteq B\backslash B'$, and then $\phi'(b_0)=\phi(b_0)=\phi(b_1)=\phi'(b_1)$ implies $b_0=b_1$.

But then $\phi'$ witnesses that $B$ is a basis of $H_{:,L\cup\bm{x}}$, contradicting the original assumption. Thus $\bases(Q_{:,L\cup\{\phi(b)\}}\not=\bases(H_{:,L\cup\{\phi(b)\}}$ and \ref{eq:lemma9-rhs} does not hold, too.

We have then finished proof to~\cref{lem:all_union_reduce_to_single_L_Yis}.
\end{proof}

\medskip\medskip\medskip

\subsection{Proofs of the Graphical Criterion and Transformational Characterization}

We first note that the graphical criterion (\cref{thm:structural_graphical_criteria}) is a direct consequence of~\cref{lem:all_union_reduce_to_single_L_Yis}, that is, instead of checking for bases of all subsets $\bm{x} \subseteq X$, we only need to check for bases for each singleton $X_i \in X$. Since~\cref{lem:all_union_reduce_to_single_L_Yis} is already proved above, in this section, we focus on the proof of the transformational characterization (\cref{thm:final_graphical_criteria}).

\LEMADMISSIBLEEDGEADDITIONDELETIONS*

\begin{proof}[Proof of~\cref{lem:edge_flips}]

Let us first prove a weaker version of this result, that is, without the permutation part involved in checking equivalence (\cref{thm:equivalence_as_edge_ranks}). Put formally, let $\Hcal$ be the digraph after altering an edge $V_i \rightarrow V_j$ in $\Gcal$. We study the if and only if condition (in terms of this edge) for the following
\begin{equation}\label{eq:weaker_version_of_edge_add_del_without_permutation}
    r_\Gcal(Z, Y) = r_\Hcal(Z, Y) \quad \text{for all } Z \subseteq V(\Gcal) \text{ and } \ L \subseteq Y \subseteq V(\Gcal)
\end{equation}
to hold. According to~\cref{lem:all_union_reduce_to_single_L_Yis}, this condition holds if and only if a reduced version hold:
\begin{equation}\label{eq:weaker_version_of_edge_add_del_without_permutation_matroid_language}
\bases(Q^{(\Gcal)}_{:, L\cup \{V_i\}}) \;=\; \bases(Q^{(\Hcal)}_{:, L\cup \{V_i\}}).
\end{equation}

That is, one only need to check whether a single transversal matroid is changed. Then, when can an edge in a bipartite graph be altered while the transversal matroid induced by this bipartite graph keeps unchanged? We show the condition by the following lemma.

\medskip\medskip
\begin{lemma}[\textbf{When an edge in a bipartite graph can be altered without changing the transversal matroid}]\label{lem:when_an_edge_in_bipartite_can_be_altered_helper}
Let $Q \in \{0,1\}^{m \times n}$ be a binary support matrix. For any $(V_j,V_i)\in [m]\times [n]$ such that $Q_{V_j, V_i} = 0$, define $H\in\{0,1\}^{m\times n}$ by $H_{V_j,V_i}=1$ and $H_{z,y}=Q_{z,y}$ for all other entries. For convenience, denote $V_i$'s non-children in $Q$ and column indices except for $V_i$ by:
\begin{equation}
\begin{aligned}
     R &\coloneqq \{z\in [m]: Q_{z, V_i} = 0\}; \\ %
     Y &\coloneqq [n]\backslash \{V_i\}.
\end{aligned}
\end{equation}
Then, the following conditions are equivalent to each other:
\begin{enumerate}
    \item $\bases(Q) = \bases(H)$; \label{it:global-bases}
    \item $\bases(Q_{R, :}) = \bases(H_{R, :})$; \label{it:R-bases}
    \item $\mrank(Q_{R, :}) = \mrank(H_{R, :})$; \label{it:R-rank}
    \item $\mrank(Q_{R\backslash\{V_j\}, Y}) < \mrank(Q_{R, Y})$, that is, $V_j$ is a coloop among $R$ in the transversal matroid induced by $Q_{R, Y}$, and so removing it from ground set lowers the rank (by $1$). \label{it:coloop}
\end{enumerate}
\end{lemma}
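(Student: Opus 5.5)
I would prove the cycle of implications 1$\Rightarrow$2$\Rightarrow$3$\Rightarrow$4$\Rightarrow$1, using only elementary bipartite-matching arguments. Let $r_Q$ and $r_H$ denote the matching-rank functions on row sets. Since $H$ is $Q$ with the single entry $(V_j,V_i)$ turned on, every matching in $Q$ is a matching in $H$. Hence $\operatorname{Ind}(Q)\subseteq\operatorname{Ind}(H)$ and $r_Q\le r_H$ pointwise.

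\emph{1$\Rightarrow$2 and 2$\Rightarrow$3.} Equal bases give equal independent families, hence equal rank functions on every row set, in particular on every subset of $R$. So the restrictions to $R$ (which are exactly the matroids of $Q_{R,:}$ and $H_{R,:}$) coincide, and 2 follows. Then 2$\Rightarrow$3 holds because the matching rank is the common size of the bases.

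\emph{3$\Rightarrow$4.} Column $V_i$ of $Q_{R,:}$ is identically zero by definition of $R$. Therefore $\mrank(Q_{R,:})=\mrank(Q_{R,Y})$. In $H_{R,:}$ the column $V_i$ has a single nonzero entry, at row $V_j\in R$. Any matching in $H_{R,:}$ either avoids column $V_i$ or matches $V_j$ to $V_i$. This gives
\[
\mrank(H_{R,:})=\max\bigl(\mrank(Q_{R,Y}),\ 1+\mrank(Q_{R\setminus\{V_j\},Y})\bigr).
\]
Deleting one row lowers the rank by at most one. So condition 3 (equality with $\mrank(Q_{R,Y})$) holds iff $\mrank(Q_{R\setminus\{V_j\},Y})<\mrank(Q_{R,Y})$. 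This is precisely condition 4, and the formula also gives 4$\Rightarrow$3.

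\emph{4$\Rightarrow$1.} This is the main obstacle, since it passes from a statement about the rows $R$ to all row subsets. Because $\operatorname{Ind}(Q)\subseteq\operatorname{Ind}(H)$, it suffices to show that every $H$-independent set $S$ is $Q$-independent. Take an $H$-matching $\phi$ of $S$ that uses the new edge, so $\phi(V_j)=V_i$; otherwise there is nothing to prove.

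Write $S=S_0\cup S_R\cup\{V_j\}$, where $S_0=\phi^{-1}(\text{rows outside }R)$ and $S_R$ is the set of rows of $R$ matched into $Y$. Rows outside $R$ are exactly those with $Q_{z,V_i}=1$, and $V_i$ is already used by $V_j$, so every row of $S_0$ is matched into $Y$ as well.

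By 4, $V_j$ is a coloop of the matroid of $Q_{R,Y}$. Extend the independent set $S_R\cup\{V_j\}$ to a basis $B$ of that matroid. Because $V_j$ is a coloop, $B\setminus\{V_j\}$ is independent but not spanning. I would instead exploit the rank drop directly: the matching $\phi|_{S_R}$ into $Y$ can be augmented (by a standard alternating-path argument in $Q_{R,Y}$) to a matching of $S_R\cup\{V_j\}$ into $Y$ that uses only columns already used by $\phi$ on $S_R$ together with columns unused by $\phi$ on all of $S$.

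The delicate point is to make sure the augmenting path does not collide with the columns $\phi(S_0)$. I would handle this by working in the bipartite graph $Q_{R,Y\setminus\phi(S_0)}$. Here I would use the fact that $V_j$ remains a coloop after deleting columns that are matched to rows outside $R$; this should follow from 4 via submodularity of matching rank. If that fails, I would fall back on the transversal-matroid fact that $S$ is $Q$-independent iff Hall's condition holds, and verify Hall's condition using the rank drop.

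Once a $Q$-matching of $S_R\cup\{V_j\}$ into $Y\setminus\phi(S_0)$ is available, combine it with $\phi|_{S_0}$ and map $V_i$ to nothing. This is a $Q$-matching of $S$, so $S$ is $Q$-independent, and hence $\operatorname{Ind}(H)=\operatorname{Ind}(Q)$ and 1 holds.
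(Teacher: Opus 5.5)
Your steps $1\Rightarrow2\Rightarrow3$ and $3\Leftrightarrow4$ are fine; the max-formula is the one the paper uses. However, $4\Rightarrow1$ has a genuine gap, exactly at the point you flag as delicate. Your main line looks for a $Q$-matching of $S$ that keeps $\phi|_{S_0}$, re-matches $S_R\cup\{V_j\}$ into $Y\setminus\phi(S_0)$, and leaves $V_i$ unused. Such a matching need not exist, and the auxiliary claim that $V_j$ remains a coloop of $Q_{R,Y\setminus\phi(S_0)}$ is false.

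Take rows $a,b,c$ and columns $y_1,y_2,V_i$, with $Q_{a,y_1}=Q_{b,y_2}=Q_{c,y_1}=Q_{c,V_i}=1$, all other entries $0$, and $V_j=a$. Then $R=\{a,b\}$ and $\mrank(Q_{R,Y})=2>1=\mrank(Q_{\{b\},Y})$, so condition 4 holds. The set $S=\{a,c\}$ is $H$-independent via $\phi(a)=V_i$, $\phi(c)=y_1$. So $S_R=\varnothing$ and $Y\setminus\phi(S_0)=\{y_2\}$, where $a$ is a loop. Worse, $S$ cannot be matched into $Y$ at all: its only $Q$-matching is $a\mapsto y_1$, $c\mapsto V_i$.

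The missing idea is that a row of $S$ outside $R$ (which is $Q$-adjacent to $V_i$ by definition of $R$) may have to be rerouted onto $V_i$. The paper's own $2\Rightarrow1$ step keeps the $Z_{\mathrm{out}}$-part of the matching unchanged and collides on this same example, so that argument cannot be borrowed as is.

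Either of two short arguments closes the gap.

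\emph{Exchange argument.} Let $M_Y$ be the transversal matroid of $Q_{:,Y}$ on $[m]$, and let $I=S\setminus\{V_j\}$, which $\phi$ matches into $Y$ using only $Q$-edges. Suppose $I\cup\{V_j\}$ is $M_Y$-dependent. Its unique circuit $C\ni V_j$ is not contained in $R$: condition 4 says $V_j$ is a coloop of the restriction $M_Y|R$, so it lies in no circuit of $M_Y$ inside $R$. Pick $z\in C\setminus R$. Then $(I\setminus\{z\})\cup\{V_j\}$ is matchable into $Y$, and $z\mapsto V_i$ completes a $Q$-matching of $S$.

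\emph{Hall argument.} Your Hall fallback also works once made explicit. Write $\Gamma_Q(T)$ and $\Gamma_H(T)$ for column neighbourhoods. A set $T\subseteq S$ has $\Gamma_Q(T)\neq\Gamma_H(T)$ only when $V_j\in T\subseteq R$. Every such $T$ lies in $S\cap R=(S\cap R\setminus\{V_j\})\cup\{V_j\}$. This set is independent in $Q_{R,Y}$, because $\phi$ matches $S\cap R\setminus\{V_j\}$ into $Y$ and $V_j$ is a coloop. Hence $|\Gamma_Q(T)|\ge|T|$ still holds.

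As written, though, you only name this route, while the claims you actually commit to are false.
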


\begin{proof}[Proof of~\cref{lem:when_an_edge_in_bipartite_can_be_altered_helper}]
We first have two immediate observations. (i) By construction, $Q_{R,\{V_i\}}$ is the zero column, whereas $H_{R,\{V_i\}}$ has a single $1$ in row $V_j$.  (ii) For any $Z\subseteq [m]$ with $V_j\notin Z$, the submatrices $Q_{Z,:}$ and $H_{Z,:}$ coincide, hence their matching ranks (and base behavior) coincide.

\medskip
We now prove the implications among the four conditions by
\[
\eqref{it:global-bases}\Rightarrow\eqref{it:R-bases}\Rightarrow\eqref{it:R-rank}
\iff\eqref{it:coloop}\Rightarrow\eqref{it:R-bases}\Rightarrow\eqref{it:global-bases}.
\]

\paragraph{\eqref{it:global-bases} $\Rightarrow$ \eqref{it:R-bases}.}
Trivial. Taking restrictions on ground sets preserves equality of matroids.

\paragraph{\eqref{it:R-bases} $\Rightarrow$ \eqref{it:R-rank}.}
Trivial. Same matroids have the same ranks.

\paragraph{\eqref{it:R-rank} $\iff$ \eqref{it:coloop}.}
Let $\nu \coloneqq \mrank(Q_{R, Y })$ and $\nu' \coloneqq \mrank(Q_{R\setminus\{V_j\}, Y })$. Note that
\begin{equation}
\mrank(Q_{R,:}) \;=\; \mrank(Q_{R,Y}) \;=\; \mrank(H_{R,Y}) \;=\; \nu,
\end{equation}
because the column $V_i$ is useless (full zero) for $R$ in $Q$. In $H$, the only new edge incident to $R$ is $(V_i,V_j)$; therefore any matching on $R$ in $H$ is either:
\begin{itemize}
\item a matching that \textit{does not use} column $V_i$, hence has size at most $\nu$, or
\item a matching that \textit{does use} the edge $(V_i,V_j)$ and then matches the remaining $R\setminus\{V_j\}$ into $Y$, hence has size at most $1+\nu'$.
\end{itemize}
Consequently,
\begin{equation}\label{eq:max-formula}
\mrank(H_{R,:}) \;=\; \max\{\nu,\; 1+\nu'\}.
\end{equation}
Thus $\mrank(H_{R,:})=\mrank(Q_{R,:})$ holds if and only if $1+\nu'\le \nu$, i.e., $\nu'<\nu$, which is precisely \eqref{it:coloop}. This proves \eqref{it:R-rank} $\iff$ \eqref{it:coloop}.

\paragraph{\eqref{it:coloop} $\Rightarrow$ \eqref{it:R-bases}.}
Assume \eqref{it:coloop}. In the transversal matroid $\mathcal{M}$ induced by $Q_{R,Y}$, the inequality $\mrank(Q_{R\setminus\{V_j\},Y})<\mrank(Q_{R,Y})$ means that $V_j$ is a \emph{coloop} of $\mathcal{M}$ (see~\cref{def:basics_of_matroid}). A standard matroid identity for coloops states that for all $Z\subseteq R\setminus\{V_j\}$,
\begin{equation}\label{eq:coloop-rank}
\mrank(Q_{Z\cup\{V_j\},Y}) \;=\; \mrank(Q_{Z,Y})+1.
\end{equation}
Combining this with~\cref{eq:max-formula} (applied now to \emph{each} $Z\subseteq R$) shows that adding the edge $(V_i,V_j)$ cannot change the matching rank of \emph{any} $Z\subseteq R$; so in particular, the bases on $R$ is unchanged:
$\bases(Q_{R,:})=\bases(H_{R,:})$.

\paragraph{\eqref{it:R-bases} $\Rightarrow$ \eqref{it:global-bases}.}
For any $Z\subseteq[m]$, we write $Z_R\coloneqq Z\cap R$ and $Z_{\mathrm{out}}\coloneqq Z\setminus R$. 
\begin{itemize}
\item If a maximum matching of $H_{Z,:}$ \textit{does not use} $(V_i,V_j)$, then it is also a matching in $Q_{Z,:}$, and the ranks agree.
\item If a maximum matching of $H_{Z,:}$ \textit{does use} $(V_i,V_j)$, then its restriction to $Z_R$ is a maximum matching of $H_{Z_R,:}$ that uses the column $V_i$. By \eqref{it:R-bases} (which holds for all subsets of $R$ as shown above), there exists a maximum matching of $Q_{Z_R,:}$ of the \emph{same} size that avoids $V_i$. Replacing the $H$-matching on $Z_R$ by this $Q$-matching on $Z_R$ (and keeping the $Z_{\mathrm{out}}$-part unchanged) yields a matching of $Q_{Z,:}$ of the same cardinality as the original one in $H_{Z,:}$.
\end{itemize}

  Hence $\mrank(Q_{Z,:})=\mrank(H_{Z,:})$ for all $Z\subseteq[m]$, which is equivalent to $\bases(Q)=\bases(H)$.

\medskip
All implications are proved, so the four conditions are equivalent. The result on deleting an edge is just the same as adding back this edge from the resulted graph.
\end{proof}

\medskip

The condition shown in~\cref{lem:when_an_edge_in_bipartite_can_be_altered_helper} is exactly the condition we have in~\cref{lem:edge_flips}, and hence the weaker version without permutation (\cref{eq:weaker_version_of_edge_add_del_without_permutation}) is already proved.

The prove the full version, we only need to show that when the condition in~\cref{eq:weaker_version_of_edge_add_del_without_permutation_matroid_language} fails, then with any permutation they still cannot be rendered equivalent. This is straightforward, since with one edge difference, the independent sets $\operatorname{Ind}(Q^{(\Gcal)}_{:, L\cup \{V_i\}})$ and $\operatorname{Ind}(Q^{(\Hcal)}_{:, L\cup \{V_i\}})$, if not equal, must admit a strict inclusion relation between them, so there is no way for these two matroids to be isomorphic.

We have now finished the proof of~\cref{lem:edge_flips}.

\end{proof}

\medskip\medskip\medskip

\THMFINALGRAPHICALCRITERIA*

\begin{proof}[Proof of~\cref{thm:final_graphical_criteria}]

The proof to this result for traversing the equivalence class for the whole class directly relates to the helpful lemmas we have shown in~\cref{subsec:matroid_preserving_proofs}, i.e., how the whole solution set of column(s) augmentation is structured.

\cref{lem:hasse_diagram_among_colaug_traverse} is core to our result: it shows that the whole space of satisfiable column augmentations can be traversed by applying sequences of ``one-edge different'' operations, and these operations are exactly the ``admissible edge additions/deletions'' we show in~\cref{lem:edge_flips} (for edges from $X$), and~\cref{lem:when_an_edge_in_bipartite_can_be_altered_helper} (for edges from $L$). From~\cref{lem:hasse_diagram_among_colaug_traverse}, we also have a way to traverse all bipartite graphs that realizes a given transversal matroid. This can be viewed as a generalization from single-column to multi-column augmentation:

\smallskip

\begin{corollary}[\textbf{Traverse all bipartite graphs that realize a transversal matroid}]\label{coro:traverse_all_bipartite_graphs_under_a_matroid}
Let $Q,H \in \{0,1\}^{m\times n}$ be two binary matrices. $Q$ and $H$ induce a same transversal matroid, i.e., $\bases(Q) = \bases(H)$, if and only if $Q$ can reach $H$ via a sequence of admissible edge additions/deletions defined in~\cref{lem:when_an_edge_in_bipartite_can_be_altered_helper}, followed by a column permutation. Moreover, similar to~\cref{coro:determine_items_must_not_in_any_solution}, among all matrices that can be reached from $Q$ via sequences of edge additions/deletions, there exists a unique maximal matrix whose support is the union of supports of all these reachable matrices.
    
\end{corollary}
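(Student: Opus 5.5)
The statement to prove is the corollary: $\bases(Q)=\bases(H)$ holds if and only if $Q$ reaches $H$ by admissible edge additions/deletions (\cref{lem:when_an_edge_in_bipartite_can_be_altered_helper}) followed by a column permutation; moreover, the reachable set has a unique maximal element. The plan is to treat columns one at a time and reduce each step to the single-column results already proved.

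\textbf{The ``if'' direction.} This is immediate. Each admissible addition or deletion preserves $\bases$ by \cref{lem:when_an_edge_in_bipartite_can_be_altered_helper}. A column permutation does not change matching ranks of row sets, since $\mrank(M_{Z,:})$ is invariant under permuting columns. Hence the transversal matroid on the rows is unchanged along the whole sequence.

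\textbf{The ``only if'' direction.} Let $\bases(Q)=\bases(H)$. I would build an intermediate matrix $M$ with $M_{:,[n]}$ maximal, that is, each column replaced by its maximal column augmentation.

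Concretely, process columns $y=1,\dots,n$ in order. Given the current matrix $Q^{(k)}$, replace column $k$ by the unique maximal element $D_k$ of $\operatorname{colaug}(Q^{(k)},k)$. This element exists by \cref{lem:construct_singleton_augmentation} and \cref{coro:determine_items_must_not_in_any_solution}. By the proof of \cref{lem:hasse_diagram_among_colaug_traverse}, the current column reaches $D_k$ by single-entry additions, each staying inside $\operatorname{colaug}$. Each such addition preserves $\bases$, so it is admissible in the sense of \cref{lem:when_an_edge_in_bipartite_can_be_altered_helper}, where admissibility is exactly invariance of $\bases$. This yields a column-wise saturated matrix $\hat Q$ with $\bases(\hat Q)=\bases(Q)$.

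Doing the same from $H$ gives $\hat H$. It remains to show that $\hat Q$ and $\hat H$ agree up to column permutation. By reversibility of deletions, $H$ is then reachable from $\hat H$, and hence $Q$ reaches $H$ via $\hat Q$.

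\textbf{The main obstacle: uniqueness of the saturated matrix.} I would argue this through the known canonical presentation of transversal matroids. Every transversal matroid has a unique \emph{maximal presentation} (Bondy/Mason; the sets are complements of certain cyclic flats, cf.\ the $\alpha$-system construction in Phase 1). A presentation to which no edge can be added without changing the matroid is precisely this maximal presentation. So the first step is to show that column-wise saturation gives global saturation, meaning no single entry of $\hat Q$ can be flipped to $1$ while preserving $\bases$. After processing column $k$, later columns might enable new additions in column $k$. To handle this, I would iterate the sweep until no admissible addition remains. The process terminates because supports only grow.

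At termination, $\hat Q$ is a presentation with no admissible addition. Then I would invoke the theorem that maximal presentations, as sets of rows, are unique up to ordering: any presentation extends to the unique maximal one by enlarging sets. If citing this is not acceptable, the fallback is to prove uniqueness directly. Suppose a column $c$ of $\hat Q$ has support that is not a column support of $\hat H$. Use \cref{coro:determine_items_must_in_all_solution} to show that the supports of fully saturated columns are complements of cocircuit-determined flats, and hence are intrinsic to the matroid.

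\textbf{The ``moreover'' claim.} This follows once uniqueness is established. Every matrix reachable from $Q$ by additions/deletions alone (no permutation) has column supports contained in those of the saturated $\hat Q$ reached from it, with the column labels fixed. So $\hat Q$ is the unique maximal matrix, and its support is the union of all reachable supports. This mirrors \cref{coro:determine_items_must_not_in_any_solution} applied column by column.
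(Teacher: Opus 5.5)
\textbf{Comparison with the paper.} Your route differs from the paper's, which gives no separate proof. The paper presents this corollary as the multi-column version of \cref{lem:hasse_diagram_among_colaug_traverse}, where every single-column solution flows upward to the unique maximal $D$, and adds only the warning that columns cannot be treated as a Cartesian product. That sketch leaves open why two different presentations $Q$ and $H$ saturate to the same matrix up to column order: \cref{lem:hasse_diagram_among_colaug_traverse} holds the other columns fixed and sees no correspondence between the columns of $Q$ and those of $H$.

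Your appeal to the Bondy--Mason unique maximal presentation theorem supplies exactly this missing piece. Write $M$ for the common transversal matroid on the rows. Combine the theorem with the sandwich observation: if $A\subseteq A'\subseteq B$ columnwise and $A,B$ both present $M$, then so does $A'$, by monotonicity of partial transversals. Together these show that a presentation admitting no admissible addition is the maximal one. So your ``if'' and ``only if'' directions go through, with two small repairs.
\begin{itemize}
\item The classical theorem concerns presentations by exactly $r=\operatorname{rank}(M)$ sets. Here $n>r$ is allowed, and extra columns need not be empty: a rank-one matroid with single non-loop $e$ has the presentation $(\{e\},\{e\})$. Fix this by padding every column with the same $n-r$ new rows $N$. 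The padded matroid is $M\oplus$(free matroid on $N$), of rank $n$ and independent of the presentation, so you can apply the theorem there and then delete $N$. In the paper's use on $Q^{(\Gcal)}_{:,L}$ one has $n=r=|L|$, so this is moot there.
\item To place the permutation last, pull the deletions $\hat H\to H$ back through the permutation. This is legitimate because admissibility is invariant under relabeling columns.
\end{itemize}

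\textbf{Gap in the ``moreover'' part.} Uniqueness of the maximal presentation holds only up to column permutation. It therefore does not give a unique \emph{labelled} maximum among matrices reachable from $Q$ without permutation. A priori, some reachable $Q'$ could saturate to a column-permuted copy of $\hat Q$, and then the union of reachable supports would not even be a presentation.

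The missing idea is already in condition~4 of \cref{lem:when_an_edge_in_bipartite_can_be_altered_helper}. Since column $i$ vanishes on $R_i=[m]\setminus A_i$, adding $(e,i)$ is admissible iff $e$ is a coloop of $M|R_i$. This condition depends only on the fixed $M$ and on $A_i$. Hence the saturated column is $A_i\cup\operatorname{coloops}(M|R_i)$, whatever the other columns are, so your repeated sweeps stop after one pass.

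This set is also invariant under admissible flips. If $e$ is a coloop of $M|R_i$, every circuit of $M|R_i$ avoids $e$, so $\operatorname{coloops}(M|(R_i\setminus\{e\}))=\operatorname{coloops}(M|R_i)\setminus\{e\}$. Deletions are the inverse moves, and flips in other columns touch neither $A_i$ nor $M$. Therefore every matrix reachable from $Q$ lies columnwise below the same labelled $\hat Q$, which is itself reachable. This proves the ``moreover'' claim directly, without the external theorem.
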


\cref{coro:traverse_all_bipartite_graphs_under_a_matroid} directly relates to our traversal on the $Q^{(\Gcal)}_{:,L}$ part. But it is worth noting that unlike the independent decomposition of column augmentations for each $X_i$, here the edge additions/deletions have to be operated within the whole matrix space. We cannot simply run column augmentation for each $L_i$ and take the Cartesian product. To see this, let $Q=[[1,1]]$ with columns $\alpha, \beta$. Obviously, $\operatorname{colaug}(Q,\alpha) = \{\varnothing, \{1\}\}$, and also $\operatorname{colaug}(Q,\beta) = \{\varnothing, \{1\}\}$. However, we cannot take a product and let $Q' = [[0,0]]$, which induces a matroid different from $Q$.

\medskip

Now, putting~\cref{lem:hasse_diagram_among_colaug_traverse},~\cref{coro:traverse_all_bipartite_graphs_under_a_matroid}, and~\cref{lem:all_union_reduce_to_single_L_Yis} together, we have a way to traverse all digraphs that achieve the same matroids over the tower of all sources that include some latent vertices:

\smallskip

\begin{corollary}[\textbf{Traverse all digraphs that realize same matroid tower under latents}]\label{coro:traverse_all_digraphs_under_matroid_tower}

Let $Q,H\in \{0,1\}^{m\times n}$ be two binary matrices with columns partitioned as $[n] = L\cup X$. Then, the condition

\begin{equation}
    \bases(Q_{:,L\cup\bm{x}})=\bases(H_{:,L\cup\bm{x}}), \quad \forall \bm{x}\subseteq X,
\end{equation}

holds, if and only if the $\bases(Q_{:,L})=\bases(H_{:,L})$, (graphical criterion in~\cref{coro:traverse_all_bipartite_graphs_under_a_matroid}), and for all $X_i \in X$, $\{i \in [m] : Q_{i,X_i} = 1\}\;\in\; \operatorname{colaug}(H_{:, L\cup\{X_i\}}, X_i)$ (graphical criterion in~\cref{lem:hasse_diagram_among_colaug_traverse}).

\end{corollary}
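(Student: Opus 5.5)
We prove the two directions separately; the statement is essentially \cref{lem:all_union_reduce_to_single_L_Yis} with each singleton condition rewritten in the language of \cref{def:all_vector_column_augmentations_possible_fillings}.

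For ``only if'', we specialise the hypothesis. Taking $\bm{x}=\varnothing$ gives $\bases(Q_{:,L})=\bases(H_{:,L})$ directly. Fix $X_i\in X$ and take $\bm{x}=\{X_i\}$, so that $\bases(Q_{:,L\cup\{X_i\}})=\bases(H_{:,L\cup\{X_i\}})$. Note that
\[
[\,H_{:,L}\,|\,\mathbbm{1}_{D_i}\,]=[\,H_{:,L}\,|\,Q_{:,X_i}\,],\qquad D_i\coloneqq\{i\in[m]:Q_{i,X_i}=1\},
\]
i.e., the matrix obtained from $H_{:,L\cup\{X_i\}}$ by replacing its $X_i$ column with $\mathbbm{1}_{D_i}$. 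Its independent sets are $\{Z_1\cup Z_2: Z_1\in\operatorname{Ind}(H_{:,L}),\ Z_2\in\operatorname{Ind}(Q_{:,X_i})\}$ by \cref{lem:column_stack_independent_sets_union}. Since $\operatorname{Ind}(H_{:,L})=\operatorname{Ind}(Q_{:,L})$, this family equals $\operatorname{Ind}(Q_{:,L\cup\{X_i\}})$, again by \cref{lem:column_stack_independent_sets_union}. Hence its bases coincide with $\bases(Q_{:,L\cup\{X_i\}})=\bases(H_{:,L\cup\{X_i\}})$, which is exactly the statement $D_i\in\operatorname{colaug}(H_{:,L\cup\{X_i\}},X_i)$.

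For ``if'', we reverse the same chain. Membership $D_i\in\operatorname{colaug}(H_{:,L\cup\{X_i\}},X_i)$ says $\bases([H_{:,L}|Q_{:,X_i}])=\bases(H_{:,L\cup\{X_i\}})$. Using $\bases(H_{:,L})=\bases(Q_{:,L})$, hence equal independent families, \cref{lem:column_stack_independent_sets_union} gives $\bases([H_{:,L}|Q_{:,X_i}])=\bases(Q_{:,L\cup\{X_i\}})$. Therefore $\bases(Q_{:,L\cup\{X_i\}})=\bases(H_{:,L\cup\{X_i\}})$ for every $X_i$. Together with the $L$ condition, this is precisely condition~\eqref{eq:lemma9-rhs}, and \cref{lem:all_union_reduce_to_single_L_Yis} lifts it to all $\bm{x}\subseteq X$.

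The parenthetical graphical readings then follow from earlier results. The $L$ part is traversable by \cref{coro:traverse_all_bipartite_graphs_under_a_matroid}. Each $X_i$ column ranges over a weakly connected $\operatorname{colaug}$ set by \cref{lem:hasse_diagram_among_colaug_traverse}, whose single-entry moves are admissible flips in the sense of \cref{lem:when_an_edge_in_bipartite_can_be_altered_helper}.

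The only delicate point is the substitution step. The $\operatorname{colaug}$ set is defined relative to $H$'s own $L$ columns, whereas the hypothesis only gives equality of the $L$-matroids, not of the $L$-columns themselves. \cref{lem:column_stack_independent_sets_union} is exactly what bridges this gap, since it shows that the independent sets of an augmentation depend on the two blocks only through their matroids. No permutation issues arise, because all columns are fixed labels here.
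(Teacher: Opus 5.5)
Your proof is correct and takes essentially the paper's route. The corollary is \cref{lem:all_union_reduce_to_single_L_Yis} with each singleton matroid equality rephrased as membership in $\operatorname{colaug}(H_{:,L\cup\{X_i\}},X_i)$. Your use of \cref{lem:column_stack_independent_sets_union} to swap $Q_{:,L}$ for $H_{:,L}$ (equal matroids, possibly different columns) makes explicit a bridging step the paper leaves implicit; the paper relies on the same observation in the Phase~2 discussion of glvLiNG.
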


In other words, $Q$ can reach $H$ via a sequence of admissible edge additions/deletions, followed by a permutation only among the columns in $L$. \cref{coro:traverse_all_digraphs_under_matroid_tower} directly relates to our traversal on the $Q^{(\Gcal)}$.

If we are to allow the equivalence up to row permutation, i.e., permuting the ground set as in~\cref{thm:equivalence_as_edge_ranks}, only a row permutation appended to the end of the operations in~\cref{coro:traverse_all_digraphs_under_matroid_tower} is needed.

Finally, a treatment to ensure nonzero diagonals for digraphs.
\begin{itemize}
    \item Since in our case we need to exclude matrices with zero diagonals, this row permutation becomes the ``at most one step'' within the sequence (\cref{thm:final_graphical_criteria}), instead of at the end.
    \item The $L$-relabeling part, however, can still be put at the end, since to relabel the $L$ vertices in a digraph $\Gcal$, it is to apply a permutation on the columns $Q^{(\Gcal)}_{:,L}$ first, and then to apply the same permutation back on the rows $Q^{(\Gcal)}_{L,:}$. This operation still ensures the nonzero diagonals. This becomes the ``up to $L$-relabeling'' term in~\cref{thm:final_graphical_criteria}.
\end{itemize}

We have now finished the proof of~\cref{thm:final_graphical_criteria}.

\end{proof}

\bigskip\bigskip

\subsection{Other Immediate or Known Results}

We omit the proofs of the remaining results occurred in this manuscript: some of them follow immediately from the already proved results, including~\cref{lem:equivalence_as_closure_columns,lem:edge_rank_constraints_in_support_matrices,thm:equivalence_as_edge_ranks}, and the others are results shown by existing work, including~\cref{lem:rank_constraints_in_mixing_matrices,thm:duality_between_two_ranks,lem:cycle_reversals}.

\newpage

\section{Discussion}\label{app:discussion}

\subsection{Summary: A Side-by-side Comparison Between Path Ranks and Edge Ranks}\label{app:discussion_summary_side_by_side_path_and_edge_rank}
\begin{table}[H]
\centering
\caption{A side-by-side comparison between path ranks and edge ranks.}
\label{table:comparison_between_matrix_rank_and_matching_rank}
\begin{tabular}{@{}p{0.24\textwidth}p{0.35\textwidth}p{0.35\textwidth}@{}}
\toprule
\textbf{Aspect} & \textbf{Path rank (Matrix rank)} & \textbf{Edge rank (Matching rank)} \\
\midrule
Intuition & Algebraic independence & Combinatorial independence \\
\addlinespace
Full rank of a $d\times d$ square matrix $M$
& $\operatorname{rank}(M) = d \iff$ the \textit{determinant} of $M$ is nonzero
& $\operatorname{mrank}(M) = d \iff$ the \textit{permanent} of $M$ is nonzero\\
\addlinespace
Graphical constraints in digraphs
& $\rho_\Gcal(Z, Y)$, the maximum number of vertex-disjoint directed paths from $Y$ to $Z$ (\cref{def:path_ranks}), equals the matrix rank of generic mixing submatrices $A_{Z,Y}$ (\cref{lem:rank_constraints_in_mixing_matrices})
& $r_\Gcal(Z, Y)$, the size of the maximum bipartite matching from $Y$ to $Z$ via direct edges (\cref{def:edge_ranks}), equals the matching rank of the support submatrix $Q^{(\Gcal)}_{Z,Y}$ (\cref{lem:edge_rank_constraints_in_support_matrices}) \\
\addlinespace
Matroid representations
& Strict gammoids in digraphs~\citep{perfect1968applications}
& Transversal matroids in bipartite graphs~\citep{ingleton1973gammoids} \\
\midrule
Duality (\cref{thm:duality_between_two_ranks})
& \multicolumn{2}{c}{$\min(|Z|, |Y|) - \rho_\Gcal(Z, Y) = |V| - \max(|Z|, |Y|) - r_\Gcal(V\backslash Y, \ V\backslash Z)$} \\
\bottomrule
\end{tabular}
\end{table}

\medskip\medskip

\subsection{Another Example Distributional Equivalence Class}\label{app:discussion_another_example}

\begin{figure}[h!]
\centering
\begin{minipage}[c]{0.79\textwidth}
\centering
\setlength{\tabcolsep}{2pt}
\begin{tabular}{|c|c|c|c|}
\hline
\makebox[0pt][l]{\hspace*{0.1em}\raisebox{4.6em}{$\Gcal_1$}\rule{0pt}{5.6em}}
\DrawCyclicGraphExample{\draw (L) -- (X1); \draw (L) -- (X3); \draw (X2) -- (L); \draw (X3) to (X1); \draw (X3) to (X2);} &
\makebox[0pt][l]{\hspace*{0.1em}\raisebox{4.6em}{$\Gcal_2$}\rule{0pt}{5.6em}}
\DrawCyclicGraphExample{\draw (L) -- (X1); \draw (L) -- (X3); \draw (X3) to (X1);  \draw[bend left=20] (X3) to (X2); \draw[bend left=20] (X2) to (X3); } &
\makebox[0pt][l]{\hspace*{0.1em}\raisebox{4.6em}{$\Gcal_3$}\rule{0pt}{5.6em}}
\DrawCyclicGraphExample{\draw (L) -- (X1); \draw (L) -- (X2); \draw (X2) -- (X3); \draw (X3) -- (L); \draw (X3) to (X1);} &
\makebox[0pt][l]{\hspace*{0.1em}\raisebox{4.6em}{$\Gcal_4$}\rule{0pt}{5.6em}}
\DrawCyclicGraphExample{\draw (L) -- (X1); \draw (L) -- (X2); \draw[bend right=20] (X2) to (X3); \draw (X3) to (X1); \draw[bend right=20] (X3) to (X2);} \\
\hline
\makebox[0pt][l]{\hspace*{0.1em}\raisebox{4.6em}{$\Gcal_5$}\rule{0pt}{5.6em}}
\DrawCyclicGraphExample{\draw (L) -- (X1); \draw[bend right=20] (L) to (X3); \draw[bend right=20] (X3) to (L); \draw (X2) -- (L); \draw (X3) to (X1); \draw (X3) to (X2);} &
\makebox[0pt][l]{\hspace*{0.1em}\raisebox{4.6em}{$\Gcal_6$}\rule{0pt}{5.6em}}
\DrawCyclicGraphExample{\draw (L) -- (X1); \draw[bend left=20] (L) to (X3); \draw[bend left=20] (X3) to (L); \draw (X3) to (X1);  \draw[bend left=20] (X3) to (X2); \draw[bend left=20] (X2) to (X3); } &
\multicolumn{2}{c|}{
  \makebox[0pt][l]{\hspace*{-3.8em}\raisebox{4.6em}{$\Gcal_7$}\rule{0pt}{5.6em}}
  \DrawCyclicGraphExample{\draw (L) -- (X1); \draw (L) -- (X2); \draw[bend right=20] (X2) to (X3); \draw (X3) to (X1); \draw[bend right=20] (X3) to (X2); \draw (X3) -- (L);}
} \\
\hline
\makebox[0pt][l]{\hspace*{0.1em}\raisebox{4.6em}{$\Gcal_8$}\rule{0pt}{5.6em}}
\DrawCyclicGraphExample{\draw (L) -- (X1); \draw[bend right=20] (L) to (X3); \draw[bend right=20] (X3) to (L); \draw (X2) -- (L); \draw (X3) to (X2);} &
\makebox[0pt][l]{\hspace*{0.1em}\raisebox{4.6em}{$\Gcal_9$}\rule{0pt}{5.6em}}
\DrawCyclicGraphExample{\draw (L) -- (X1); \draw[bend left=20] (L) to (X3); \draw[bend left=20] (X3) to (L); \draw[bend left=20] (X3) to (X2); \draw[bend left=20] (X2) to (X3); } &
\multicolumn{2}{c|}{
  \makebox[0pt][l]{\hspace*{-3.8em}\raisebox{4.6em}{$\Gcal_{10}$}\rule{0pt}{5.6em}}
  \DrawCyclicGraphExample{\draw (L) -- (X1); \draw (L) -- (X2); \draw[bend right=20] (X2) to (X3); \draw[bend right=20] (X3) to (X2); \draw (X3) -- (L);}
} \\
\hline
\end{tabular}
\end{minipage}
\hspace{0.02\textwidth}
\begin{minipage}[c]{0.17\textwidth}
\centering
\begin{tikzpicture}[shorten >=-1pt, very thick, x=0.7cm, y=0.7cm]
  \node (G1) at (-1, 7) {$\Gcal_1$};
  \node (G2) at (0.2, 6) {$\Gcal_2$};
  \node (G3) at (0.5, 7.5) {$\Gcal_3$};  %
  \node (G4) at (1.46, 7.4) {$\Gcal_4$}; %
  
  \node (G5) at (-1, 4) {$\Gcal_5$};
  \node (G6) at (0.2, 3) {$\Gcal_6$};
  \node (G7) at (2, 5) {$\Gcal_7$};
  
  \node (G8) at (-1, 1) {$\Gcal_8$};
  \node (G9) at (0.2, 0) {$\Gcal_9$};
  \node (G10) at (2, 2) {$\Gcal_{10}$};

  \draw (G1) -- (G5);
  \draw (G2) -- (G6);
  \draw (G3) -- (G7);
  \draw (G4) -- (G7);
  \draw (G5) -- (G8);
  \draw (G6) -- (G9);
  \draw (G7) -- (G10);

  \draw[black, densely dashed] (G1) -- (G3);
  \draw[black, densely dashed] (G2) -- (G4);
  \draw[black, densely dashed] (G5) -- (G6);
  \draw[black, densely dashed] (G6) -- (G7);
  \draw[black, densely dashed] (G7) -- (G5);
  \draw[black, densely dashed] (G8) -- (G10);
  \draw[black, densely dashed] (G9) -- (G10);
  \draw[black, densely dashed] (G9) -- (G8);

\end{tikzpicture}
\end{minipage}

\caption{Left: An example distributional equivalence class consisting of 10 digraphs. Right: Transitions among these digraphs, where solid edges indicate edge additions or deletions, and dashed edges indicate cycle reversals.}
\label{fig:example_n4l1_10_digraphs}
\end{figure}
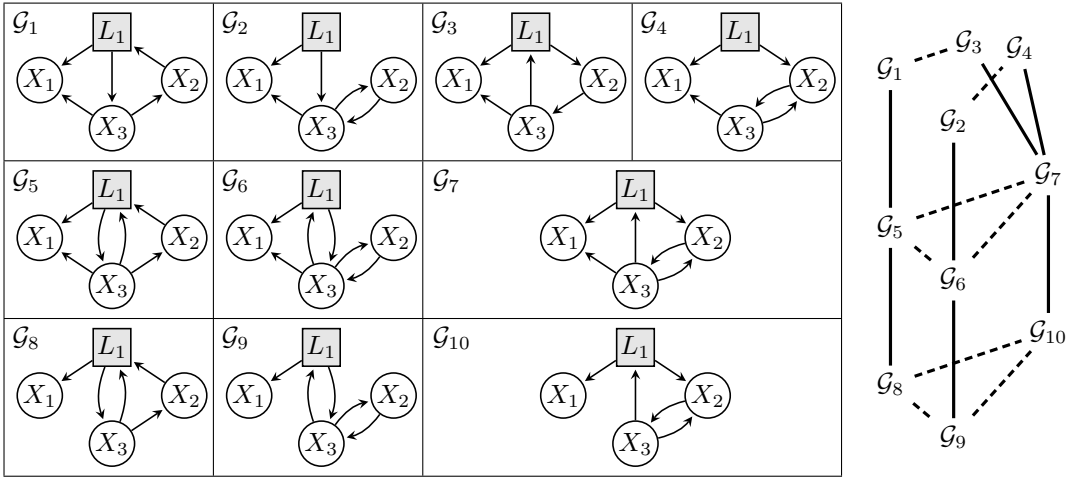

We show another example distributional equivalence class in~\cref{fig:example_n4l1_10_digraphs}, in addition to the~\cref{fig:example_n5l2_6_digraphs} already shown in main text. The points of this example, different from those of~\cref{fig:example_n5l2_6_digraphs}, are that:

\begin{enumerate}
    \item Partitioned by cycle reversals (removing the dashed edges in the right of~\cref{fig:example_n4l1_10_digraphs}), the classes connected by only edge additions/deletions (solid edges) are not necessarily isomorphic to each other. Here, there are $3,3,4$ digraphs within each such class, respectively.
    \item To illustrate cases where cycles intersect.
\end{enumerate}

\medskip\medskip

\update{

\subsection{A Presentation of the Equivalence Class}\label{app:discussion_presentation_equiv_class}

In the main text, we have presented both a graphical criterion to check for equivalence (\cref{thm:structural_graphical_criteria}) and a transformational characterization to traverse the entire equivalence class (\cref{thm:final_graphical_criteria}). These results are analogous to the ``same adjacencies and v-structures'' and the ``covered edge reversal (Meek conjecture)'' in the fully observed, acyclic, Markov equivalence setting. 

However, note that in that classical setting, there is another familiar result, CPDAG, which serves as an informative presentation of the equivalence class. This naturally raises the question: can we construct an analogous presentation in the context of this work? We answer this affirmatively. In what follows, we outline how this presentation can be constructed step by step.

\paragraph{Step 1. Identifiability of ancestral relations among observed variables.} As a preliminary observation, we first note that the ancestral relations among observed variables $X$ are invariant across all equivalent digraphs (this follows from how admissible edge additions/deletions are defined, and the fact that cycle reversal does not alter the ancestral relations). Thus, presenting an arbitrary digraph in the equivalence class suffices to inform users the true ancestral relations among $X$. For applications such as experimental design involving observed variables, this alone is informative enough.
    
\paragraph{Step 2. Unique maximal digraph within the class.} We show that under each cycle-reversal configuration, there exists a unique maximal digraph in the equivalence class such that every equivalent digraph is a subgraph of it. We further provide an explicit construction of this maximal digraph (\cref{coro:determine_items_must_not_in_any_solution}), without needing to enumerate all equivalent digraphs and then take the maximal one. Analogous to the ``largest chain graph'' in~\citet{frydenberg1990chain}, this maximal digraph can serve as a basis of the presentation, informing users which causal relations are \emph{guaranteed to be absent}.

\paragraph{Step 3. Characterizing edges that must appear.} Building on the previous step, we also characterize edges that must be present in \textit{all} equivalent digraphs. Again, these edges can be determined efficiently via a graphical condition (\cref{coro:determine_items_must_in_all_solution}), without needing to enumerate all equivalent digraphs and then take the intersection. Note that it is an explicit construction, so iterative procedures (such as arrow propagation in Meek rules~\citep{meek1995causal}) are not needed either. These edges can be visually highlighted on the basis maximal digraph, in analogous to the arrows in CPDAGs, or ``visible edges'' in PAGs~\citep{zhang2008visibleedge}. They inform users which causal relations \emph{they can fully trust}.

\medskip
Such presentation is formally defined in~\cref{thm:presentation_to_equiv_class}, and examples of it are shown in~\cref{fig:example_of_presentation_of_equiv_class}. 

\smallskip

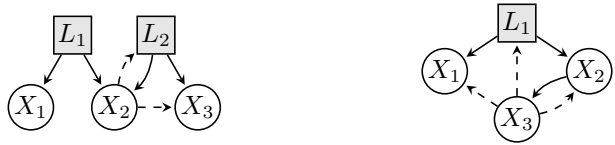
\begin{figure}[H]
  \begin{minipage}{0.9\textwidth}  %
    \centering
    \begin{minipage}[c]{0.4\textwidth}
      \centering
      \begin{tikzpicture}[->,>=stealth,shorten >=1pt,auto,semithick,inner sep=0.2mm,scale=1]
        \DrawCyclicGraphExampleTwo{
          \draw (L1) -- (X1);
          \draw (L1) -- (X2);
          \draw[bend left=20] (L2) to (X2);
          \draw[dashed, bend left=20] (X2) to (L2);
          \draw (L2) to (X3);
          \draw[dashed] (X2) to (X3);
        }
      \end{tikzpicture}
    \end{minipage}
    \hspace{2em}
    \begin{minipage}[c]{0.4\textwidth}
      \centering
      \begin{tikzpicture}[->,>=stealth,shorten >=1pt,auto,semithick,inner sep=0.2mm,scale=1]
        \DrawCyclicGraphExample{
          \draw (L) -- (X1);
          \draw (L) -- (X2);
          \draw[bend right=20] (X2) to (X3);
          \draw[dashed] (X3) to (X1);
          \draw[dashed,bend right=20] (X3) to (X2);
          \draw[dashed] (X3) -- (L);
        }
      \end{tikzpicture}
    \end{minipage}
  \end{minipage}
  \vspace{-1em}
    \caption{\update{Illustrative presentations of equivalence classes. \textbf{Left:} Presentation of equivalent digraphs $\Gcal_1,\Gcal_2,\Gcal_3$ under a same cycle-reversal configuration from~\cref{fig:example_n5l2_6_digraphs}. The basis digraph shown is the unique maximal equivalent digraph (Step 2 above). In it, solid edges denote those that must appear in all equivalent digraphs, while dashed edges are those that can be removed (Step 3 above). One may use~\cref{coro:determine_items_must_not_in_any_solution,coro:determine_items_must_in_all_solution} to check how they are determined. \textbf{Right:} A similar presentation for digraphs $\Gcal_3,\Gcal_4,\Gcal_7,\Gcal_{10}$ from~\cref{fig:example_n4l1_10_digraphs}. \textbf{Remark:} One might ask why we present the equivalence class separately for each cycle‑reversal configuration rather than for the entire class. The reason is that taking the union over all digraphs in the entire class can, unlike within one configuration, yield a supergraph that is itself out of the equivalence class, potentially producing misleading interpretations. In fact, this separation only leads to more informative presentations, shown by~\cref{thm:presentation_to_equiv_class} below.}}
    \label{fig:example_of_presentation_of_equiv_class}
\end{figure}

\begin{theorem}[\textbf{Presentation of an equivalence class}]\label{thm:presentation_to_equiv_class}
    For an irreducible model $(\Gcal, X)$, we construct a digraph whose vertices are $V(\Gcal)$ and whose directed edges come in two types: solid and dashed. All edges are determined by~\cref{coro:determine_items_must_not_in_any_solution}, and among them the solid ones are determined by~\cref{coro:determine_items_must_in_all_solution}. We denote this digraph by $\operatorname{CP}(\Gcal)$, echoing the sense of ``complete partial'' as in CPDAGs.

    For convenience, let $\Ecal(\Gcal)$ be the whole equivalence class, that is, the set of all digraphs $\Hcal$ on vertices $V(\Gcal)$ such that $\Hcal \overset{X}{\sim}\Gcal$. Let $\mathcal{F}(\Gcal)$ denote the set of all digraphs reachable from $\Gcal$ via sequences of admissible edge addition/deletions as defined in~\cref{lem:edge_flips}. Clearly, $\mathcal{F}(\Gcal) \subseteq \Ecal(\Gcal)$.
    
    Then, the presentation $\operatorname{CP}(\Gcal)$ enjoys the following properties:

    \begin{enumerate}
        \item $\operatorname{CP}(\Gcal) \in \mathcal{F}(\Gcal)$;
        \item For every $\Hcal \in \mathcal{F}(\Gcal)$, the edge set of $\Hcal$ is a subset of the edge set of $\operatorname{CP}(\Gcal)$;
        \item The intersection of the edge sets of all $\Hcal \in \mathcal{F}(\Gcal)$ equals the solid edges of $\operatorname{CP}(\Gcal)$;
        \item For every $\Hcal \in \mathcal{E}(\Gcal)$, let $\operatorname{CP}(\Hcal)$ be its own presentation. Then, $\operatorname{CP}(\Hcal)$ can be transformed into $\operatorname{CP}(\Gcal)$ via an $L$-relabel and a cycle reversal (alongside the solid/dashed edge types).
    \end{enumerate}

\end{theorem}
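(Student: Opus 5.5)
The plan is to reduce everything to the column-wise structure of the support matrix $Q^{(\Gcal)}$, so that the four properties follow from the already established column-augmentation results. Start from \cref{thm:equivalence_as_edge_ranks}. Combined with \cref{lem:all_union_reduce_to_single_L_Yis}, it says that, with the row permutation fixed to the identity, a digraph $\Hcal$ on $V(\Gcal)$ satisfies the same matroid tower as $\Gcal$ iff two conditions hold. First, $\bases(Q^{(\Hcal)}_{:,L})=\bases(Q^{(\Gcal)}_{:,L})$. Second, for every $X_i$, the $X_i$-column of $Q^{(\Hcal)}$ lies in $\operatorname{colaug}(\cdot,X_i)$.

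Define $\operatorname{CP}(\Gcal)$ column by column.
\begin{itemize}
\item For each $X_i$ column, the edge set is the maximal solution $D$ of \cref{lem:construct_singleton_augmentation}, which is $\bigcup\operatorname{colaug}$ by \cref{coro:determine_items_must_not_in_any_solution}. The solid entries are $\bigcap\operatorname{minimum}(\operatorname{diffcc})$, which is $\bigcap\operatorname{colaug}$ by \cref{coro:determine_items_must_in_all_solution}.
\item For the $L$ columns, use the unique maximal matrix of \cref{coro:traverse_all_bipartite_graphs_under_a_matroid}. Its solid entries are those whose deletion is never admissible by \cref{lem:when_an_edge_in_bipartite_can_be_altered_helper}.
\end{itemize}
One must check that the diagonal entries are always present, i.e. always solid. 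They are, because every $\Hcal\in\mathcal F(\Gcal)$ is a digraph with full diagonal, and every matrix in the relevant reachable set has full diagonal.

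Properties 1 and 2 then follow.
\begin{itemize}
\item Each $X_i$ column can be moved independently. By \cref{lem:hasse_diagram_among_colaug_traverse}, every solution reaches the maximal $D$ by single additions, each of which is admissible in the sense of \cref{lem:edge_flips}. Independence across columns comes from the singleton decomposition of \cref{lem:all_union_reduce_to_single_L_Yis}.
\item The $L$ block reaches its maximal matrix by \cref{coro:traverse_all_bipartite_graphs_under_a_matroid}. Since the $X_i$-column criterion $\operatorname{colaug}(\cdot)$ depends on the $L$ block only through its matroid, the $L$-block moves do not spoil the $X$ columns.
\end{itemize}
Hence $\operatorname{CP}(\Gcal)\in\mathcal F(\Gcal)$, and every $\Hcal\in\mathcal F(\Gcal)$ is entrywise dominated by it.

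For property 3, the $X$ columns follow directly from item 3 of \cref{coro:determine_items_must_in_all_solution}, together with the observation that $\mathcal F(\Gcal)$ realizes the full Cartesian product of the per-column solution sets. The $L$ block is harder because there is no product structure, as the $[[1,1]]$ example shows. The plan there is to define solid $L$-edges as those present in every matrix of the reachable set from \cref{coro:traverse_all_bipartite_graphs_under_a_matroid}. To make this checkable, I would show that an $L$-edge is absent from some realization iff it is deletable from the maximal matrix. One direction is immediate. For the other, use the Hasse-diagram monotonicity argument of \cref{lem:hasse_diagram_among_colaug_traverse} applied to one $L$ column at a time, with the other $L$ columns frozen at their maximal values.

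Property 4 is the main obstacle. Given $\Hcal\in\Ecal(\Gcal)$, \cref{thm:final_graphical_criteria} gives an $L$-relabeling $\sigma$ and at most one cycle reversal $\kappa$ such that $\kappa(\Gcal)$ reaches $\sigma(\Hcal)$ via edge additions and deletions. Thus $\sigma(\Hcal)\in\mathcal F(\kappa(\Gcal))$, so $\mathcal F(\sigma(\Hcal))=\mathcal F(\kappa(\Gcal))$ and the two presentations coincide.

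It then remains to show that $\operatorname{CP}(\kappa(\Gcal))$ is obtained from $\operatorname{CP}(\Gcal)$ by applying the same cycle reversal with solid/dashed types transported. At the matrix level, a cycle reversal is a row permutation $P$ that preserves nonzero diagonals. Row-permuting $Q$ permutes the ground set of every matroid in the tower, so it maps $\operatorname{colaug}$ sets, unions, and intersections bijectively. Therefore $P\cdot\operatorname{CP}(\Gcal)=\operatorname{CP}(\kappa(\Gcal))$ entrywise.

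The delicate point is that $\kappa$ is a cycle reversal of $\Gcal$ but must also be admissible for $\operatorname{CP}(\Gcal)$. Its reversed cycles must be present there, and the relabeled diagonal must remain nonzero. This holds since $\operatorname{CP}(\Gcal)\supseteq\Gcal$ edgewise and the row permutation is the same. I expect this transport argument, together with the $L$-block intersection claim in property 3, to be where the real care is needed.
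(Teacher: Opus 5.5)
Your column-wise route is the one the paper indicates. The paper gives no separate proof; it only says the edges come from \cref{coro:determine_items_must_not_in_any_solution} and the solid ones from \cref{coro:determine_items_must_in_all_solution}. Properties 1--3 work along your lines once two details are made explicit.

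First, for an $L$ column $V_i$, whether a single addition is admissible depends only on the matroid $M$ of $Q_{:,L}$ and on the current column. In \cref{lem:when_an_edge_in_bipartite_can_be_altered_helper}, $Q_{R,Y}$ presents $M|R$ because column $V_i$ vanishes on $R$. This makes the maximal $V_i$-column independent of the other columns, and it is what lets you raise the other $L$ columns to their maxima while freezing $V_i$.

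Second, $\operatorname{colaug}$ contains sets that omit the diagonal row. So you need that, off the diagonal, the intersection over $\mathcal{F}(\Gcal)$ equals $\bigcap\operatorname{colaug}$. This follows from upward closure below the maximal solution, which contains the diagonal entry.

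The genuine gap is in property 4. The row permutation $P$ that implements a cycle reversal does not fix the diagonal. Take $V_j$ on a reversed cycle with predecessor $V_k$. Then $P$ sends the cycle-edge entry $(V_j,V_k)$ to the diagonal entry $(V_k,V_k)$. It sends the diagonal entry $(V_j,V_j)$ to $(V_k,V_j)$, which is the reversed edge $V_j\to V_k$.

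Moves in $\mathcal{F}$ never touch the diagonal but may delete cycle edges. Hence $\mathcal{F}(\kappa(\Gcal))\neq P\cdot\mathcal{F}(\Gcal)$, and intersections over $\mathcal{F}$ are not $P$-equivariant; only the unconstrained $\operatorname{colaug}$ families are. As a result, the type of a reversed cycle edge $V_j\to V_k$ in $\operatorname{CP}(\kappa(\Gcal))$ is decided by whether row $V_j$ lies in every solution for column $V_j$. It is not the type of the old edge $V_k\to V_j$. Nor is it ``always solid'' as you stipulate, and that stipulation is in fact false for $\bigcap\operatorname{colaug}$.

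The paper's own \cref{fig:example_n5l2_6_digraphs} shows this. Reversing the 2-cycle $L_2\rightleftarrows X_2$ takes $\Gcal_2$ to $\Gcal_5$. In both presentations $L_2\to X_2$ is solid and $X_2\to L_2$ is dashed, since $\Gcal_1$ and $\Gcal_4$ respectively lack $X_2\to L_2$. Your rule (diagonal solid, then apply $P$) makes both edges of the new 2-cycle solid, and edge-wise transport would swap the two types. The correct label arises because column $X_2$ of $\Gcal_1$ can be replaced by $\{L_2,X_3\}$, dropping the diagonal row, without changing the matroid.

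To repair the argument:
\begin{itemize}
\item Label every entry of $Q^{(\operatorname{CP}(\Gcal))}$, diagonal included, by membership in $\bigcap\operatorname{colaug}$, with the other $L$ columns at their maxima.
\item Prove that the off-diagonal labels agree with the intersections over $\mathcal{F}$.
\item State property 4 as exact $P$-equivariance of this labelled matrix, so that reversed cycle edges inherit the labels of the diagonal entries they come from.
\end{itemize}
Also, \cref{thm:final_graphical_criteria} only places the reversal somewhere in the sequence. The reversed cycles therefore lie in some $\Gcal'\in\mathcal{F}(\Gcal)$, not necessarily in $\Gcal$. Run the transport from $\Gcal'$, using $\operatorname{CP}(\Gcal')=\operatorname{CP}(\Gcal)$.
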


\medskip

It is worth noting that a dashed edge in a presentation means that there exists at least one equivalent digraph without this edge. However, it does not imply that dashed edges can be arbitrarily removed without affecting equivalence: they have to obey the rank constraints. This is in a similar spirit of undirected edges in a CPDAG: an undirected edge means that there exist at least two equivalent DAGs who have different orientations on this edge. However, it does not imply that undirected edges can be arbitrarily oriented: there are additional constraints like no new v-structures, no cycles, etc.

We are not sure whether such additional constraints can, or should, be also incorporated into the presentation, or at least summarized as a set of rules like Meek rules (especially given the availability of an interactive traversal tool). But in any case, we put it here as a possible future step:

\paragraph{Step 4. Quantifying bounds on edges between vertex groups.} Extending step 3, one may describe bounds on the number of edges between vertex groups (e.g., ``at least $2$ and at most $4$ edges from vertices $Y$ to vertices $Z$''). Such constraints may be presented like ``underlined bows'' in cyclic digraphs~\citep{richardson1996discovering} or ``hyperedges'' in mDAGs~\citep{evans2016graphs}. We have not developed this result (though we hypothesize that they likely also follow from~\cref{thm:structural_graphical_criteria}), because we are not sure how much practical informativeness it can offer to users.

\medskip

Lastly, we also list the presentation of prior knowledge as a future step.

\paragraph{Step 5. Incorporating additional prior knowledge.} As with other equivalence presentations, prior knowledge such as acyclicity, stable cycles, or certain causal orderings can further refine the equivalence class and its presentation~\citep{perkovic2017interpreting}. While we have not explored this part either, it motivates future theoretical developments, such as interventional equivalence classes, and parameter identifiability results based on the equivalence class established in this work.

}

\subsection{Examples of Non-Rank Constraints in Mixing Matrices}

In~\cref{thm:equivalence_as_path_ranks} we have shown that path rank equivalence in mixing matrices sufficiently lead to distributional equivalence. However, this does not imply that there are no other constraints in mixing matrices. As an analogy, in the causally sufficient linear Gaussian system, CI equivalence (zero partial correlations in covariance matrices) sufficiently lead to distributional equivalence, but there are still other constraints, like the Tetrad constraints in the covariance matrices.

Below we give an example of non-rank constraints in mixing matrices.

Consider a digraph $\Gcal$ with $4$ vertices:

\begin{center}
\begin{tikzpicture}[->,>=stealth,shorten >=1pt,auto,semithick,inner sep=0.2mm,scale=1]
  \tikzstyle{Xstyle} = [draw, circle, minimum size=0.6cm]
  \node[Xstyle] (X1) at (1.2,1.6) {$1$};
  \node[Xstyle] (X2) at (0,0.8) {$2$};
  \node[Xstyle] (X3) at (1.2,0) {$3$};
  \node[Xstyle] (X4) at (2.4,0.8) {$4$};
  \draw (X1) -- node[midway, above, yshift=4pt] {$a$} (X2);
  \draw (X2) -- node[midway, below, yshift=-2pt, xshift=-3pt] {$b$} (X3);
  \draw (X3) -- node[midway, left, xshift=-3pt] {$c$} (X1);
  \draw (X1) -- node[midway, above, yshift=4pt] {$d$} (X4);
  \draw (X4) -- node[midway, below, yshift=-2pt, xshift=3pt] {$e$} (X3);
\end{tikzpicture}
\end{center}

Its mixing matrix is:

\begin{equation}
A \;\;=\;\; \begin{bNiceMatrix}[first-row,first-col]
& 1 & 2 & 3 & 4 \\
1 & 1 & {b} {c} & {c} & {c} {e} \\
2 & {a} & 1-{c} {d} {e} & {a} {c} & {a} {c} {e} \\
3 & {a} {b} + {d} {e} & {b} & 1 & {e} \\
4 & {d} & {b} {c} {d} & {c} {d} & 1-{a} {b} {c}
\end{bNiceMatrix} \times \frac{1}{1-abc - cde}  \; .
\end{equation}

We can verify the following constraint holds:

\begin{equation}
\begin{aligned}
    &\ \ \ \ \ A_{2,4}\,A_{3,2}\,A_{4,1} \;-\; A_{2,1}\,A_{3,4}\,A_{4,2} \\
    &= ace\times b \times d \;-\; a \times e \times bcd \\
    & = 0.
\end{aligned}
\end{equation}

Just like rank constraints, this constraint is also immune to arbitrary column scaling, that is, it also survives in the OICA estimated mixing matrix. However, this is not a rank constraint.

One may also verify some other non-rank constraints in this $A$, for example,

\begin{equation}
\begin{aligned}
    &\ \ \ \ \  A_{2,2}\,A_{3,4}\,A_{4,1} + A_{2,4}\,A_{3,1}\,A_{4,2} + A_{2,1}\,A_{3,2}\,A_{4,4}\\
    &=2\,A_{2,1}\,A_{3,4}\,A_{4,2} - A_{2,2}\,A_{3,1}\,A_{4,4},
\end{aligned}
\end{equation}

and

\begin{equation}
\begin{aligned}
    &\ \ \ \ \  A_{2,4}^2\,A_{3,1}\,A_{3,2}\,A_{4,2} + A_{2,1}\,A_{2,2}\,A_{3,4}^2\,A_{4,2} + A_{2,1}\,A_{2,4}\,A_{3,2}^2\,A_{4,4}\\
    &= 2\,A_{2,1}\,A_{2,4}\,A_{3,2}\,A_{3,4}\,A_{4,2} + A_{2,2}\,A_{2,4}\,A_{3,1}\,A_{3,2}\,A_{4,4},
\end{aligned}
\end{equation}

both of which are also immune to column scaling.

We are not sure whether there are any specific geometry interpretations underlying these equality constraints. These examples are brutal-force searched from ideal elimination.

We notice that these equality constraints occur among the $\{2,3,4\}$ rows, meaning that when vertex $\{1\}$ is latent and $\{2,3,4\}$ observed, these constraints will also appear in the OICA mixing matrix. Fortunately, with~\cref{thm:equivalence_as_path_ranks}, we know rank constraints alone can determine the distributional equivalence, so the equivalence among these constraints as well.

For example, one may verify that these constraints also occur in all $3$ digraphs in the equivalence class, shown below, while this equivalence class is obtained only by the rank-based criterion (which is trivial in this case since only cycle reversals are applied).

\begin{figure}[H]
\centering

\begin{minipage}[b]{0.3\textwidth}
\centering
\begin{tikzpicture}[->,>=stealth,shorten >=1pt,auto,semithick,inner sep=0.2mm,scale=1]
  \tikzstyle{Lstyle} = [draw, rectangle, minimum size=0.5cm, fill=gray!20]
  \tikzstyle{Xstyle} = [draw, circle, minimum size=0.6cm]
  \node[Lstyle] (X1) at (1.2,1.6) {$1$};
  \node[Xstyle] (X2) at (0,0.8) {$2$};
  \node[Xstyle] (X3) at (1.2,0) {$3$};
  \node[Xstyle] (X4) at (2.4,0.8) {$4$};
  \draw (X1) --  (X2);
  \draw (X2) -- (X3);
  \draw (X3) --  (X1);
  \draw (X1) --  (X4);
  \draw (X4) --  (X3);
\end{tikzpicture}
\end{minipage}
\hfill
\begin{minipage}[b]{0.3\textwidth}
\centering
\begin{tikzpicture}[->,>=stealth,shorten >=1pt,auto,semithick,inner sep=0.2mm,scale=1]
  \tikzstyle{Lstyle} = [draw, rectangle, minimum size=0.5cm, fill=gray!20]
  \tikzstyle{Xstyle} = [draw, circle, minimum size=0.6cm]
  \node[Lstyle] (X1) at (1.2,1.6) {$1$};
  \node[Xstyle] (X2) at (0,0.8) {$2$};
  \node[Xstyle] (X3) at (1.2,0) {$3$};
  \node[Xstyle] (X4) at (2.4,0.8) {$4$};
  \draw (X1) --  (X2);
  \draw (X1) -- (X3);
  \draw (X2) --  (X4);
  \draw (X3) --  (X4);
  \draw (X4) --  (X1);
\end{tikzpicture}
\end{minipage}
\hfill
\begin{minipage}[b]{0.3\textwidth}
\centering
\begin{tikzpicture}[->,>=stealth,shorten >=1pt,auto,semithick,inner sep=0.2mm,scale=1]
  \tikzstyle{Lstyle} = [draw, rectangle, minimum size=0.5cm, fill=gray!20]
  \tikzstyle{Xstyle} = [draw, circle, minimum size=0.6cm]
  \node[Lstyle] (X1) at (1.2,1.6) {$1$};
  \node[Xstyle] (X2) at (0,0.8) {$2$};
  \node[Xstyle] (X3) at (1.2,0) {$3$};
  \node[Xstyle] (X4) at (2.4,0.8) {$4$};
  \draw (X1) --  (X3);
  \draw (X1) --  (X4);
  \draw (X3) --  (X2);
  \draw (X4) --  (X2);
  \draw (X2) --  (X1);
\end{tikzpicture}
\end{minipage}

\end{figure}

Note that the nice result of~\cref{thm:equivalence_as_path_ranks} only occurs at the linear non-Gaussian case, where path ranks are one-sided, so that it can be directly dualized to a transversal matroid that can be represented by vectors that lie in the faces of some simplex.

In the linear Gaussian case, with the two-sided path ranks in covariance matrices, there can be more constraints. In that setting, however, rank constraints equivalence do not necessarily imply distributional equivalence: there can be other unmatched equality constraints, e.g., the Pentad, Hexad constraints and beyond~\citep{drton2007algebraic}, let alone other inequality constraints.

\subsection{Related Work}\label{app:discussion_related_work}

\update{

\paragraph{Equivalence characterizations} We first review various approaches to characterize equivalence of causal models. At the same time, we summarize the multiple results developed in this work and situate them within this broader landscape of related literature.

\begin{table}[ht]
\centering
\caption{\update{A side-by-side overview of representative works on equivalence characterizations across different settings using different approaches. The final column summarizes this work's contributions.}}
\label{tab:equiv_comparison}
\setlength{\tabcolsep}{4pt}
\renewcommand{\arraystretch}{1.6}
{\color{black}
\begin{tabular}{
    @{\hspace{0pt}}
    m{0.02\textwidth}
    @{\hspace{1pt}}
    m{0.08\textwidth}
    >{\raggedright\arraybackslash}m{0.27\textwidth} %
    >{\raggedright\arraybackslash}m{0.27\textwidth} %
    >{\raggedright\arraybackslash}m{0.27\textwidth} %
}
\toprule

\multicolumn{2}{l}{\makecell{\textbf{Settings}}} &
\textbf{Markov equivalence in fully observed acyclic graphs} &
\textbf{Markov equivalence in acyclic graphs with latents} &
\textbf{Distributional equivalence in LiNG models with latents and cycles (this work)}
\\
\midrule

\multirow{3}{*}{\rotatebox{90}{\makebox[8cm][c]{\textbf{Structural}}}}
& \textbf{Level 1}
& ``Same d-separations''
& ``Same d-separations''
& ``Same path/edge ranks up to permutation'' (\cref{thm:equivalence_as_path_ranks,thm:equivalence_as_edge_ranks})
\\

& \textbf{Level 2}
& ``Same adjacencies and minimal complexes/ v-structures'' \citep{frydenberg1990chain, verma1991equivalence}
& ``Same FCI outputs'' \citep{spirtes1992equivalence}; ``Same MAG adjacencies, v-structures, and colliders on discriminating paths'' \citep{spirtes1996polynomial,richardson2002ancestral}; ``Same head and tails'' \citep{hu2020faster}
& ``Same bases in children for $L$ itself and with each singleton $X_i$, up to permutation'' (\cref{thm:structural_graphical_criteria})
\\

& \textbf{Level 3}
& ``Maximal (deflagged) chain graphs; essential graphs'' \citep{frydenberg1990chain, andersson1997characterization, roverato2006graphical}; ``CPDAGs'' \citep{spirtes1991,meek1995causal}
& ``Arrowhead completeness'' \citep{ali2005towards}; ``Full completeness of PAG orientations'' \citep{zhang2008completeness}; With background knowledge (still incomplete; \citep{andrews2020completeness,venkateswaran2024towards})
& ``Unique maximal equivalent graph with edges that must always appear, up to cycle reversal'' (\cref{thm:presentation_to_equiv_class})
\\

\midrule

\multicolumn{2}{c}{\makecell{\textbf{Transfor-} \\ \textbf{mational}}} 
& ``Covered edge reversal (Meek conjecture)'' \citep{chickering1995transformational,meek1997graphical,chickering2002optimal}; ``Weakly covered edge reversal''~\citep{markham2022transformational} for unconditional equivalence
& ``Covered edge reversal'' \citep{zhang2005transformational,tian2005generating,ogarrio2016hybridgfci,claassen2022}
& ``Admissible edge additions/deletions and cycle reversals'' (\cref{thm:final_graphical_criteria})
\\

\midrule

\multicolumn{2}{c}{\makecell{\textbf{Traversal} \\ \textbf{algorithms}}} 
& DAGs traversal within one CPDAG \citep{meek1995causal,chickering1995transformational,wienobst2023efficient}; CPDAGs traversal \citep{steinsky2003enumeration,chen2016enumerating}
& MAGs traversal within one PAG \citep{wang2024efficient,wang2025polynomial}
& BFS/DFS by admissible transformations (\cref{thm:final_graphical_criteria}), with additional parallel speedup by column decomposition (\cref{lem:all_union_reduce_to_single_L_Yis,lem:hasse_diagram_among_colaug_traverse})
\\

\bottomrule
\end{tabular}
}
\end{table}

In general, approaches to characterize equivalence can be categorized into three types:

\begin{enumerate}

    \item \textbf{Structural characterizations,} which provide conditions for determining equivalence between given graphs, and give rise to summary presentations. However, they do not directly lead to equivalence class traversal methods. They can be further stratified by their complexity, informativeness, or purpose, as follows:

    \begin{itemize} %
        \item \textbf{Level 1.} Graphical conditions necessary and sufficient for determining equivalence, but more as definitions than practical criteria; usually require combinatorial complexities.
        \item \textbf{Level 2.} Practical graphical criteria for determining equivalence; still necessary and sufficient, but more efficient than Level 1.
        \item \textbf{Level 3.} Sound and complete conditions or presentations that summarize the equivalence class. While Level~2 criteria efficiently determine equivalence, they do not fully capture what can be identified; Level~3 addresses this gap. Level 3 can also be used for determining equivalence, but it will be less efficient than Level 2.
    \end{itemize}

    \item \textbf{Transformational characterizations,} which provide natural ways for traversing the equivalence class, and are useful for developing score-based algorithms. However, as a complement, they are not suited for directly determining equivalence between given graphs, or for developing summary presentations to the equivalence class.

    \item \textbf{Traversal algorithms,} for enumerating, sampling, or counting elements of the equivalence class, where transformational characterizations are usually helpful.
\end{enumerate}

We present in~\cref{tab:equiv_comparison} a side-by-side overview of representative prior works and this work across these approaches in different settings. This unified view may help to better understand the contributions of this work, and as well to clarify the methodological implications among these approaches.

}

\update{

There is also a wide range of additional work characterizing equivalences under many other settings, which are not put in~\cref{tab:equiv_comparison} due to space limit. These include efforts to develop Markov properties and establish Markov equivalence in fully observed models with cycles, such as in linear Gaussian settings~\citep{richardson1996discovering,claassen2023establishing}, discrete settings~\citep{pearl1996identifying}, and general nonlinear settings~\citep{spirtes1994conditional,forre2017markov,mooij2020constraint}, as well as nonlinear settings with latent variables and selection bias~\citep{yao2025sigma}. The distributional equivalence of fully observed linear Gaussian cyclic models has been studied in~\citet{ghassami2020characterizing,drton2025identifiability}. Nonparametric equivalence with latent variables (usually referred to as semi-Markov models) has also been characterized in~\citep{evans2018margins,markham2020measurement,jiang2023learning,richardson2023nested}, beyond the original CI constraints~\citep{richardson2002ancestral,zhang2008completeness}. When interventions or multi-domain data are involved, Markov equivalence (typically as CIs in each domain and invariant changes across domains) has been studied extensively in~\citep{tian2001causal,eberhardt2005number,eberhardt2007interventions,eberhardt2008almost,he2008active,hauser2012characterization,hauser2015jointly,zhang2015discovery,rothenhausler2015backshift,meinshausen2016methods,magliacane2016ancestral,ghassami2017learning,wang2017permutation,yang2018characterizing,kocaoglu2019characterization,huang2020causal,mooij2020joint,squires2020permutation,zhang2023identifiability,dai2025when,luo2025gene,luo2026characterization,yao2025unifying}, with various focuses spanning across latent variables, selection bias, unknown intervention targets, active experimental design, and so on. From a method view, transformational characterizations have gained increasing attention recently, including~\citep{ghassami2020characterizing,markham2022transformational,wang2024efficient,wang2025polynomial,johnson2025characteristic,amendola2025structural}.

}

\medskip\medskip\medskip

Below, we then provide a more comprehensive review of the relevant literature on latent-variable causal discovery, in particular those under the linear non-Gaussian models~\citep{shimizu2006linear}.

\paragraph{Parametric settings for latent-variable causal discovery} A prosperous line of statistical tools beyond conditional independencies have been developed. These include rank constraints \citep{sullivant2010trek,spirtes2000causation} and more general equality constraints \citep{drton2018algebraic} in the linear Gaussian setting; and high-order moment constraints \citep{xie2020generalized, adams2021identification, robeva2021multi, dai2022independence, dai2024local,chen2024identification,xia2026conditional}, which exploit non-Gaussianity for identifiability. In addition to these, matrix decomposition methods \citep{anandkumar2013learning}, copula-based constraints \citep{cui2018learning}, and mixture oracles \citep{kivva2021learning} were also developed.

\paragraph{Algorithms for latent-variable causal discovery} Building on these statistical tools, many latent variable causal discovery algorithms have been proposed. Many of them fall within the constraint-based framework, by using CI tests and algebraic constraints to infer causal relations. Examples include those based on rank or tetrad constraints \citep{silva2003learning, silva2006learning, silva2004generalized, choi2011learning, kummerfeld2016causal, huang2022latent, dong2023versatile, dong2025permutationbased}. Recent efforts have also attempted to formalize score-based methods for latent-variable causal discovery \citep{jabbari2017discovery, ngscore, dong2026scorebased}.

\paragraph{Linear non-Gaussian models} Thanks to the strong identifiability results given by OICA, the linear non-Gaussian models have received much attention for causal discovery with latent variables or cycles: \citep{amendola2023,salehkaleybar2020learning,wang2023causal,maeda2020,silva2017learning,dai2024local,yang2022causal,yang2024causal,shimizu2022statistical,drton2025causaldisjoint,liu2021learning,schkoda2024causal,tramontano2022learning,rothenhausler2015backshift}, together with those discussed in~\cref{sec:introduction}, and many more. Beyond structure learning, LiNG models also provide benign conditions for many other tasks, including causal effect identification~\citep{tchetgen2024introduction,kivva2023cross,xie2022testability,tramontano2024parameter,tramontano2025causal}, model selection~\citep{schkoda2025goodness}, covariate selection~\citep{zhang2024covariate}, experimental design~\citep{sharifian2025near}, etc.

\medskip\medskip\medskip

\update{
Below, we also discuss how the results in this work, especially the edge rank tools and the motivation of a bipartite matching view, may be generalized to other parameter settings.

For the linear Gaussian setting, existing results in the literature can be directly translated into our edge rank language. Unlike the non-Gaussian setting where the mixing matrix is identifiable, in the Gaussian setting, only the covariance matrix is available. The graphical characterization of covariance matrix ranks, known as ``trek-separation,'' has been established by~\citet{sullivant2010trek}. Specifically, the concept of a bottleneck, which we term the ``path rank'' on the one-sided directed paths, is extended to the bottleneck along the two-sided directed paths, known as ``treks". Since the duality between path ranks and edge ranks hold universally in graphs regardless of the parametric setting, the existing characterization on trek-based path ranks can be directly translated into trek-based edge rank language. As for the technical roadmap, one may first note that the non-Gaussian equivalence condition we build in this work is necessary but not sufficient for the Gaussian setting. That is, two graphs that are equivalent in non‑Gaussian models are guaranteed to remain equivalent in Gaussian models; however, graphs that are distinguishable under non‑Gaussianity may collapse into the same equivalence class under Gaussianity. We see the closing of this gap as the most immediate future direction for extending our current work.

For the discrete setting, our results are likely generalizable as well. Several recent works have explored path ranks in graphs from discrete data~\citep{gu2023identifiability,chen2024learningdiscrete,chen2025identification,lyu2025identifiability}, where the algebraic counterpart becomes the tensor ranks in the contingency table~\citep{teicher1967identifiability}. However, a precise graphical characterization, analogous to ``trek-separation'' in the Gaussian case above, has yet to be developed. That said, such a characterization is promising, since the linear Gaussian and discrete models behave similarly in many aspects. For example, both are closed under marginalization and conditionalization; both admit a correspondence between Markov and distributional equivalence, in both cyclic and acyclic cases~\citep{geiger1996graphical,pearl1996identifying}. Motivated by these parallels already noted in literature, we believe our results can also extend to the discrete setting, and will be directly applicable once the corresponding graphical characterization is developed.

For nonlinear or even nonparametric settings, theoretical generalization remains possible. When the model is partially linear and partially nonlinear, low-dimensional bottlenecks in the linear component remain directly observable through covariance ranks~\citep{spirtes2013}. When the model is fully nonlinear or even nonparametric, there also exists prior results on the identifiability of latent-variable models~\citep{hu2008identification,hu2017econometrics}. Although the techniques differ, the underlying motivation remains closely related to ranks, particularly those in the Jacobian matrix. However, despite the theoretical meaningfulness, the practical estimation and reliable testing of these ranks remain an open challenge. This challenge can be echoed by viewing rank constraints as generalizations of conditional independence constraints. In linear models, conditional independencies correspond to low ranks in the covariance matrix and can be directed tested via Fisher's Z test. In contrast, robust conditional independence tests in nonlinear settings are still under active development~\citep{duong2024NormalizingFF,yang2025double}.

}

\newpage

\section{Evaluation Results}\label{app:experiments}

\subsection{Quantifying the Sizes of Equivalence Classes}

\begin{center}

\vspace{2em}

\begin{turn}{90}

\begin{minipage}{0.82\textheight}   %

\setlength{\tabcolsep}{5pt}

\captionsetup{type=table}
\vspace{-0.7em}
\caption{\parbox{0.75\textheight}{\small
For different numbers of vertices $n$ and latent vertices $l$, we report the total number of weakly connected digraphs with $n$ vertices, the subset that are irreducible when the first $l$ vertices are latent (both with and without $L$-isomorphic variants), and the corresponding numbers of distributional equivalence classes they fall into. The final two columns present statistics on how irreducible digraphs (both with and without $L$-isomorphic variants) are distributed among those equivalence classes.
}}
\label{table:size_exhaustive}

\vspace{-0.3em}

\resizebox{\textwidth}{!}{%
\begin{tabular}{c c c c c c M{4cm} M{4cm}}
\toprule
\makecell{\#Nodes} & 
\makecell{\#All weakly\\connected\\digraphs} & 
\makecell{\#Latents} & 
\makecell{\#Irreducible digraphs\\(with $L$-isomorphic\\variants)} & 
\makecell{\#Irreducible digraphs\\(unique up to\\$L$-relabeling)} & 
\makecell{\#Distributional\\equivalence\\classes} &
\raisebox{2.5ex}{\makecell{Stats of \#digraphs per\\class (with $L$-isomorphic\\variants)}} &
\raisebox{2.5ex}{\makecell{Stats of \#digraphs per\\class (unique up to\\$L$-relabeling)}} \\
\midrule
\multirow{2}{*}[0pt]{\rule{0pt}{0.8cm}3}
 & \multirow{2}{*}[0pt]{\rule{0pt}{0.8cm}\makecell{54 \\ (18 DAGs)}} & 0 & 54 & 54 & 41 & {\includegraphics[width=3.8cm]{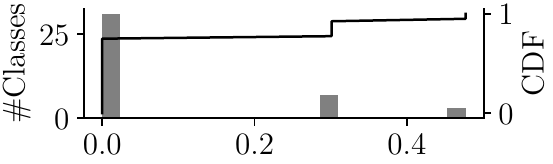}} & {\includegraphics[width=3.8cm]{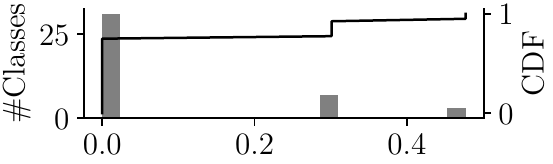}} \\
 &    & 1 & 16 & 16 & 4 & {\includegraphics[width=3.8cm]{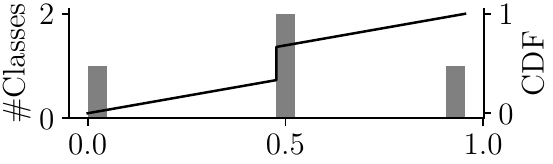}} & {\includegraphics[width=3.8cm]{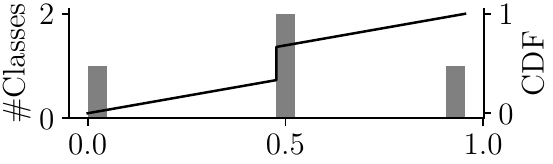}} \\
\midrule
\multirow{3}{*}[0pt]{\rule{0pt}{1.5cm}4} 
 & \multirow{3}{*}[0pt]{\rule{0pt}{1.5cm}\makecell{3,834 \\ (446 DAGs)}} & 0 & 3,834 & 3,834 & 1,788 & {\includegraphics[width=3.8cm]{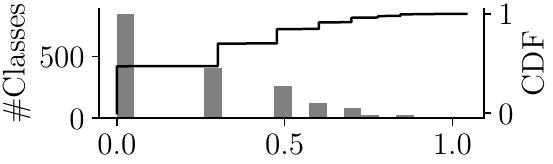}} & {\includegraphics[width=3.8cm]{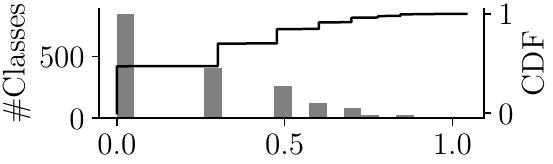}} \\
 &       & 1 & 2,000 & 2,000 & 201 & {\includegraphics[width=3.8cm]{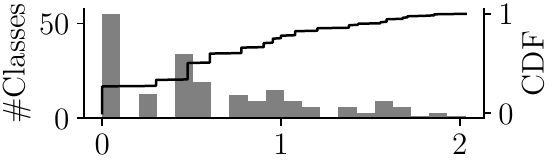}} & {\includegraphics[width=3.8cm]{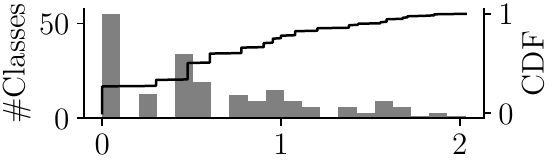}} \\
 &       & 2 & 896 & 464 & 4 & {\includegraphics[width=3.8cm]{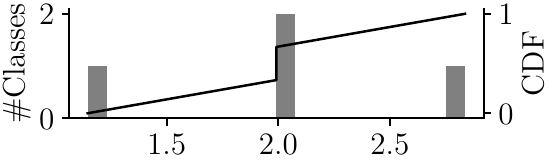}} & {\includegraphics[width=3.8cm]{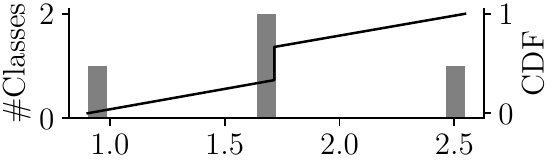}} \\
\midrule
\multirow{4}{*}[0pt]{\rule{0pt}{2.2cm}5} 
 & \multirow{4}{*}[0pt]{\rule{0pt}{2.2cm}\makecell{1,027,080  \\ (26,430 DAGs)}} & 0 & 1,027,080 & 1,027,080 & 230,436 & {\includegraphics[width=3.8cm]{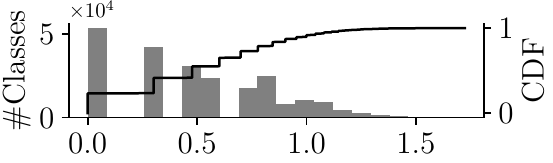}} & {\includegraphics[width=3.8cm]{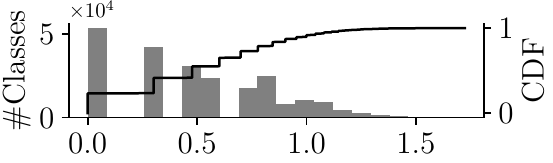}} \\
 &            & 1 & 712,512 & 712,512 & 28,650 & {\includegraphics[width=3.8cm]{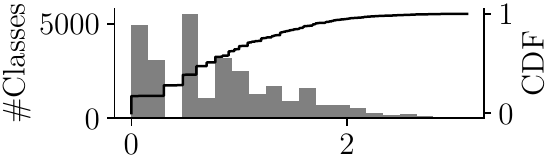}} & {\includegraphics[width=3.8cm]{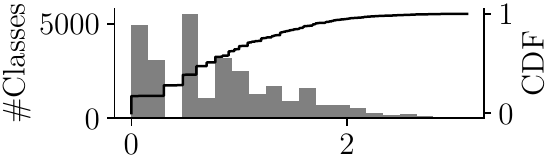}} \\
 &            & 2 & 480,640 & 242,320 & 783 & {\includegraphics[width=3.8cm]{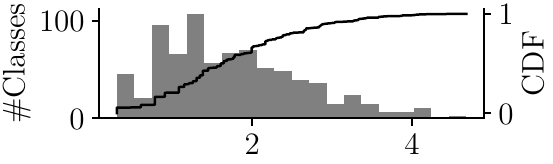}} & {\includegraphics[width=3.8cm]{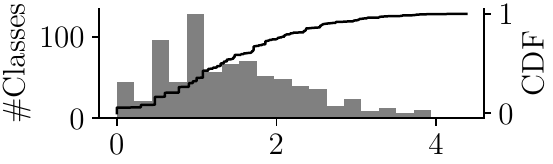}} \\
 &            & 3 & 294,400 & 50,112 & 4 & {\includegraphics[width=3.8cm]{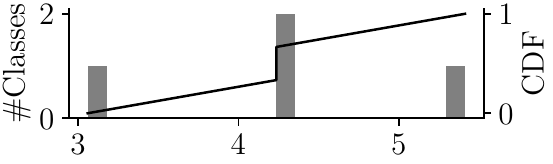}} & {\includegraphics[width=3.8cm]{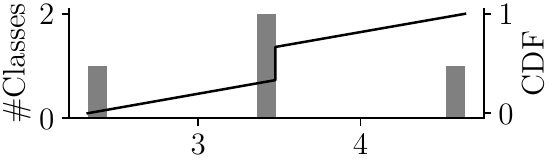}} \\
\midrule
\multirow{5}{*}[0pt]{\rule{0pt}{2.8cm}6} 
 & \multirow{5}{*}[0pt]{\rule{0pt}{2.8cm}\makecell{1,067,308,488  \\ (3,596,762 DAGs)}} & 0 & 1,067,308,488 & 1,067,308,488 & 90,085,890 & {\includegraphics[width=3.8cm]{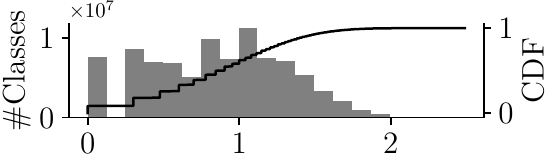}} & {\includegraphics[width=3.8cm]{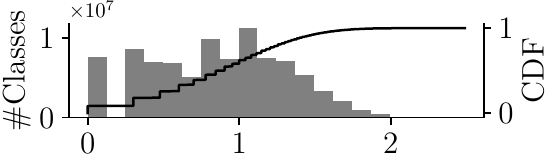}} \\
 &               & 1 & 868,762,432 & 868,762,432 & 12,608,550 & {\includegraphics[width=3.8cm]{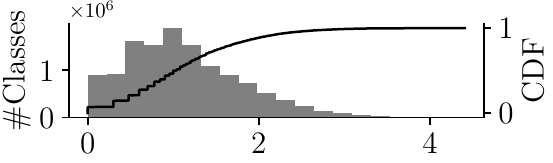}} & {\includegraphics[width=3.8cm]{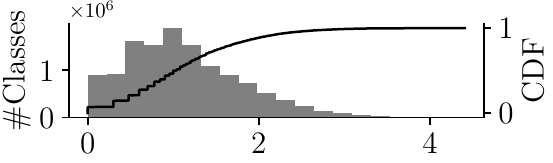}} \\
 &               & 2 & 702,697,472 & 352,061,248 & 453,742 & {\includegraphics[width=3.8cm]{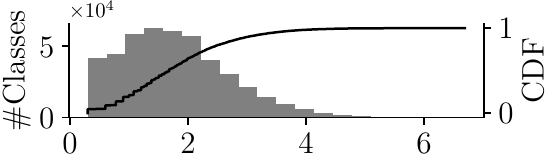}} & {\includegraphics[width=3.8cm]{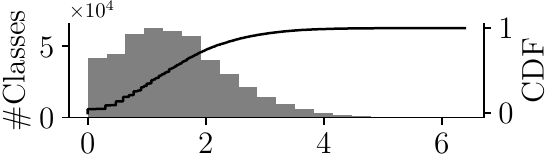}} \\
 &               & 3 & 560,760,320 & 93,975,744 & 3,297 & {\includegraphics[width=3.8cm]{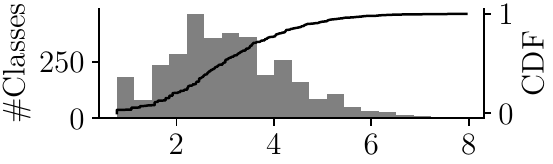}} & {\includegraphics[width=3.8cm]{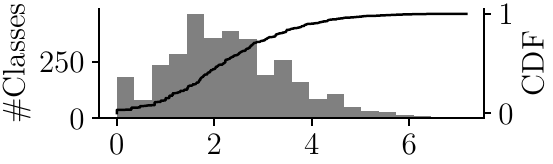}} \\
 &               & 4 & 419,405,824 & 17,646,432 & 4 & {\includegraphics[width=3.8cm]{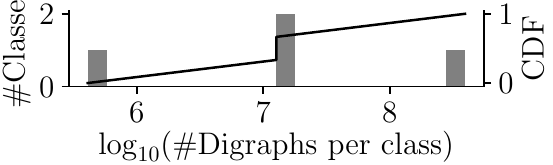}} & {\includegraphics[width=3.8cm]{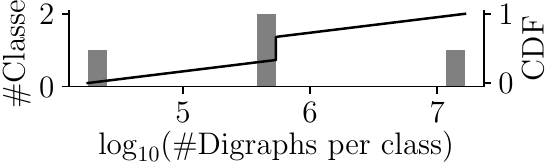}} \\
\bottomrule
\end{tabular}
}
\end{minipage}

\end{turn}
\end{center}

\newpage

\subsection{Assessing glvLiNG Algorithm's Runtime}

\begin{table}[H]
\centering
\caption{Running time comparison between our glvLiNG algorithm and a mixed integer linear programming (MILP) baseline for constructing digraphs that satisfy the rank constraints of oracle OICA mixing matrices. Ground-truth graphs are generated from the Erd\H{o}s–R\'enyi model with total number of vertices $n$ and average in-degree $\operatorname{avgdeg}$, with $\ell$ vertices randomly designated as latent. Each entry reports the mean and standard deviation over 50 models (when completed); empty entries indicate runs that did not finish within 10 minutes. All times are reported in seconds. Experiments were run on an Apple M4 chip.}
\label{table:running_time}
\begin{tabular}{ccccccc}
\toprule
$n$ & $\ell$ & $\operatorname{avgdeg}$ & MILP & glvLiNG & glvLiNG Phase 1 & glvLiNG Phase 2 \\
\midrule
\multirow{2}{*}{5} & \multirow{2}{*}{1} & 1 &  0.045 $\pm$ 0.013 & 0.015 $\pm$ 0.005 & 0.014 $\pm$ 0.005 & 0.001 $\pm$ 0.000 \\
 &  & 3 & 0.112 $\pm$ 0.008  & 0.020 $\pm$ 0.002 & 0.019 $\pm$ 0.002 & 0.001 $\pm$ 0.000 \\
\midrule
\multirow{4}{*}{7} & \multirow{2}{*}{1} & 1 &   & 0.101 $\pm$ 0.045 & 0.098 $\pm$ 0.044 & 0.002 $\pm$ 0.000 \\
 &  & 3 &   & 0.169 $\pm$ 0.024 & 0.165 $\pm$ 0.024 & 0.004 $\pm$ 0.000 \\
 \cmidrule{2-7}
 & \multirow{2}{*}{3} & 1 &  37.402 $\pm$ 0.000 & 0.048 $\pm$ 0.013 & 0.045 $\pm$ 0.012 & 0.003 $\pm$ 0.001 \\
 &  & 3 &   & 0.083 $\pm$ 0.014 & 0.075 $\pm$ 0.013 & 0.007 $\pm$ 0.001 \\
\midrule
\multirow{6}{*}{9} & \multirow{2}{*}{1} & 1 &   & 0.691 $\pm$ 0.304 & 0.687 $\pm$ 0.303 & 0.004 $\pm$ 0.001 \\
 &  & 3 &   & 1.129 $\pm$ 0.191 & 1.122 $\pm$ 0.190 & 0.007 $\pm$ 0.001 \\
 \cmidrule{2-7}
 & \multirow{2}{*}{3} & 1 &   & 0.319 $\pm$ 0.091 & 0.308 $\pm$ 0.090 & 0.009 $\pm$ 0.001 \\
 &  & 3 &   & 0.667 $\pm$ 0.132 & 0.634 $\pm$ 0.128 & 0.031 $\pm$ 0.007 \\
 \cmidrule{2-7}
 & \multirow{2}{*}{5} & 1 &   & 0.082 $\pm$ 0.023 & 0.075 $\pm$ 0.022 & 0.005 $\pm$ 0.001 \\
 &  & 3 &   & 0.260 $\pm$ 0.057 & 0.230 $\pm$ 0.051 & 0.023 $\pm$ 0.005 \\
\midrule
\multirow{6}{*}{11} & \multirow{2}{*}{1} & 1 &   & 3.637 $\pm$ 1.533 & 3.630 $\pm$ 1.532 & 0.007 $\pm$ 0.000 \\
 &  & 3 &   & 7.706 $\pm$ 0.883 & 7.693 $\pm$ 0.881 & 0.013 $\pm$ 0.002 \\
 \cmidrule{2-7}
 & \multirow{2}{*}{3} & 1 &   & 2.174 $\pm$ 0.499 & 2.142 $\pm$ 0.500 & 0.030 $\pm$ 0.002 \\
 &  & 3 &   & 4.979 $\pm$ 0.763 & 4.873 $\pm$ 0.748 & 0.102 $\pm$ 0.021 \\
 \cmidrule{2-7}
 & \multirow{2}{*}{5} & 1 &   & 0.530 $\pm$ 0.111 & 0.492 $\pm$ 0.111 & 0.032 $\pm$ 0.002 \\
 &  & 3 &   & 2.348 $\pm$ 0.426 & 2.170 $\pm$ 0.396 & 0.159 $\pm$ 0.045 \\
\midrule
\multirow{6}{*}{13} & \multirow{2}{*}{1} & 1 &   & 22.838 $\pm$ 9.667 & 22.827 $\pm$ 9.667 & 0.011 $\pm$ 0.001 \\
 &  & 3 &   & 38.173 $\pm$ 7.725 & 38.155 $\pm$ 7.721 & 0.017 $\pm$ 0.005 \\
 \cmidrule{2-7}
 & \multirow{2}{*}{3} & 1 &   & 12.501 $\pm$ 4.602 & 12.404 $\pm$ 4.593 & 0.094 $\pm$ 0.016 \\
 &  & 3 &   & 23.517 $\pm$ 5.807 & 23.362 $\pm$ 5.751 & 0.150 $\pm$ 0.062 \\
 \cmidrule{2-7}
 & \multirow{2}{*}{5} & 1 &   & 4.277 $\pm$ 1.430 & 4.069 $\pm$ 1.433 & 0.190 $\pm$ 0.009 \\
 &  & 3 &   & 13.150 $\pm$ 4.009 & 12.376 $\pm$ 3.712 & 0.723 $\pm$ 0.310 \\
\bottomrule
\end{tabular}
\end{table}

\newpage

\subsection{Benchmarking Existing Methods under Oracle Inputs}

\vspace{-1em}

\begin{table}[H]
\centering
\caption{
Evaluation of existing methods under possible model misspecification on arbitrary latent-variable models. Ground-truth graphs are generated from the Erd\H{o}s-R\'enyi model with total number of vertices $n$ and average in-degree $\operatorname{avgdeg}$, with $\ell$ vertices randomly designated as latent. Only irreducible models are chosen. Each entry reports the mean and standard deviation of the structural Hamming distances (SHDs) between the result and truth over 50 random models.\newline
Algorithms are provided with their oracle tests, that is, for them to directly query oracle generalized independent noise (GIN) conditions from the digraph. When the number of their identified latent variables is fewer than truth, we simply add isolated latent variables into the result. When the identified number of latents is larger (which seems not happened), we planned to choose the removal that leads to best result. Finally, the best possible result is reported, i.e., we choose the digraph in the ground-truth equivalence class that is closer to their output as the truth. The latent variables are viewed as unlabeled.
}
\label{table:polingam_fastgin}
\begin{tabular}{ccccc}
\toprule
$n$ & $\ell$ & $\operatorname{avgdeg}$ & PO-LiNGAM & LaHiCaSl \\
\midrule
\multirow{8}{*}{10} & \multirow{4}{*}{3} & 1 & 15.48 $\pm$ 2.75 & 31.64 $\pm$ 4.62 \\
 &  & 2 & 24.30 $\pm$ 4.41 & 36.80 $\pm$ 4.22 \\
 &  & 3 & 35.40 $\pm$ 3.61 & 39.96 $\pm$ 4.88 \\
 &  & 4 & 45.22 $\pm$ 3.53 & 41.04 $\pm$ 3.81 \\
\cmidrule{2-5}
 & \multirow{4}{*}{5} & 1 & 18.04 $\pm$ 3.99 & 32.36 $\pm$ 4.38 \\
 &  & 2 & 28.44 $\pm$ 3.48 & 36.68 $\pm$ 4.31 \\
 &  & 3 & 39.18 $\pm$ 3.60 & 40.42 $\pm$ 4.40 \\
 &  & 4 & 50.00 $\pm$ 3.45 & 41.00 $\pm$ 4.57 \\
\midrule
\multirow{12}{*}{15} & \multirow{4}{*}{3} & 1 & 31.10 $\pm$ 4.87 & 74.22 $\pm$ 8.04 \\
 &  & 2 & 48.26 $\pm$ 6.67 & 76.80 $\pm$ 6.66 \\
 &  & 3 & 64.12 $\pm$ 5.59 & 81.64 $\pm$ 8.28 \\
 &  & 4 & 84.60 $\pm$ 5.73 & 85.56 $\pm$ 8.61 \\
\cmidrule{2-5}
 & \multirow{4}{*}{5} & 1 & 35.02 $\pm$ 5.74 & 71.70 $\pm$ 6.94 \\
 &  & 2 & 54.84 $\pm$ 6.04 & 78.44 $\pm$ 7.01 \\
 &  & 3 & 72.96 $\pm$ 6.62 & 79.98 $\pm$ 8.15 \\
 &  & 4 & 92.28 $\pm$ 7.71 & 82.50 $\pm$ 7.35 \\
\cmidrule{2-5}
 & \multirow{4}{*}{7} & 1 & 36.44 $\pm$ 5.63 & 71.34 $\pm$ 8.37 \\
 &  & 2 & 58.00 $\pm$ 6.24 & 77.90 $\pm$ 8.25 \\
 &  & 3 & 79.90 $\pm$ 7.34 & 79.16 $\pm$ 8.79 \\
 &  & 4 & 101.04 $\pm$ 5.90 & 84.56 $\pm$ 8.99 \\
\midrule
\multirow{16}{*}{20} & \multirow{4}{*}{3} & 1 & 48.60 $\pm$ 6.12 & 129.88 $\pm$ 14.18 \\
 &  & 2 & 76.92 $\pm$ 8.29 & 136.36 $\pm$ 13.05 \\
 &  & 3 & 103.04 $\pm$ 8.30 & 138.24 $\pm$ 11.98 \\
 &  & 4 & 129.14 $\pm$ 10.46 & 146.86 $\pm$ 13.02 \\
\cmidrule{2-5}
 & \multirow{4}{*}{5} & 1 & 54.04 $\pm$ 5.23 & 122.78 $\pm$ 12.67 \\
 &  & 2 & 84.10 $\pm$ 8.44 & 136.04 $\pm$ 12.85 \\
 &  & 3 & 115.72 $\pm$ 8.72 & 139.76 $\pm$ 13.64 \\
 &  & 4 & 146.44 $\pm$ 9.10 & 141.64 $\pm$ 10.75 \\
\cmidrule{2-5}
 & \multirow{4}{*}{7} & 1 & 58.70 $\pm$ 6.36 & 128.46 $\pm$ 11.96 \\
 &  & 2 & 92.86 $\pm$ 8.41 & 132.48 $\pm$ 12.36 \\
 &  & 3 & 124.00 $\pm$ 9.60 & 140.76 $\pm$ 12.31 \\
 &  & 4 & 155.48 $\pm$ 7.83 & 143.76 $\pm$ 11.55 \\
\cmidrule{2-5}
 & \multirow{4}{*}{9} & 1 & 64.40 $\pm$ 7.38 & 120.08 $\pm$ 12.43 \\
 &  & 2 & 98.04 $\pm$ 10.43 & 135.24 $\pm$ 12.75 \\
 &  & 3 & 134.16 $\pm$ 10.34 & 137.52 $\pm$ 12.30 \\
 &  & 4 & 167.86 $\pm$ 11.28 & 142.76 $\pm$ 11.39 \\
\bottomrule
\end{tabular}
\end{table}

\update{

\subsection{Evaluating glvLiNG's Performance with Existing Methods in Simulations}\label{app:experiment_simulation}

\paragraph{OICA estimation part} We first describe how we handle the OICA estimation part. 

For our choice of OICA implementation, we have tried multiple options and find that overall, the MATLAB implementation\footnote{https://github.com/gilgarmish/oica} of SDP-ICA~\citep{podosinnikova2019overcomplete} tends to provide best estimated mixing matrices across multiple settings. We thus adopt it in our experiments.

For the number of latent variables, although theoretically identifiable, existing OICA implementations still require specifying this number as an input. Hence, following the common practice as in~\citep{salehkaleybar2020learning}, we test multiple candidate values and select the one minimizing the loss on a held out set.

\paragraph{Handling empirical ranks in an OICA matrix} We then explain how we process the mixing matrix estimated from OICA. Having obtained an OICA-estimated mixing matrix, the core task of glvLiNG becomes constructing a bipartite graph to realize the rank patterns in this mixing matrix, which define a transversal matroid. When OICA is not an oracle, these empirical ranks may violate matroid axioms, just like how conditional independencies in data may violate a graphoid in nonparametric settings.

To address this, in our implementation, we assign a ``full-rank confidence score'' to each relevant block of $A$. Specifically, let $\sigma_{\min}$ be $A$'s minimum singular value, we use the score $\frac{1}{1+\exp(-\alpha(\sigma_{\min} - \epsilon))}$, and in experiments, we set $\alpha=25$ and $\epsilon=0.02$. Then, in phase 1 (recovering latent outgoing edges), we approximate the closest valid transversal matroid that maximizes agreement with these scores. In phase 2 (recovering observed outgoing edges), for efficiency we simply threshold these scores to determine each variable's outgoing edges independently. We have simulated and verified that this procedure is robust to moderately noisy ranks, by e.g., assigning true full-rank blocks scores from $\mathcal{N}(0.75, 0.2)$ and others from $\mathcal{N}(0.25, 0.2)$, both $0,1$ truncated.

\paragraph{Simulation setup} In simulation, we compare glvLiNG with existing methods including LaHiCaSl\footnote{https://github.com/jinshi201/LaHiCaSl}~\citep{xie2024generalized} and PO-LiNGAM\footnote{https://github.com/Songyao-Jin/PO-LiNGAM}~\citep{jin2023structural}. We generate random Erd\H{o}s-R\'enyi model with total number of vertices $n$ from $5$ to $13$, number of latent variables $\ell$ from $1$ to $5$, average in-degree $d$ of $1$ and $3$, and sample size $N$ from $1,000$ to $200,000$. We sample data with linear causal weights uniformly from $[-2.5,-0.5] \cup [0.5, 2.5]$, and exogenous noise are sampled from a uniform distribution $[-0.5,0.5]$, following~\citep{podosinnikova2019overcomplete}. We calculate the minimum SHD between all graphs in the true equivalence class to the discovery output graph as the SHD result.

\begin{figure}[t]
    \centering
    \includegraphics[width=\linewidth]{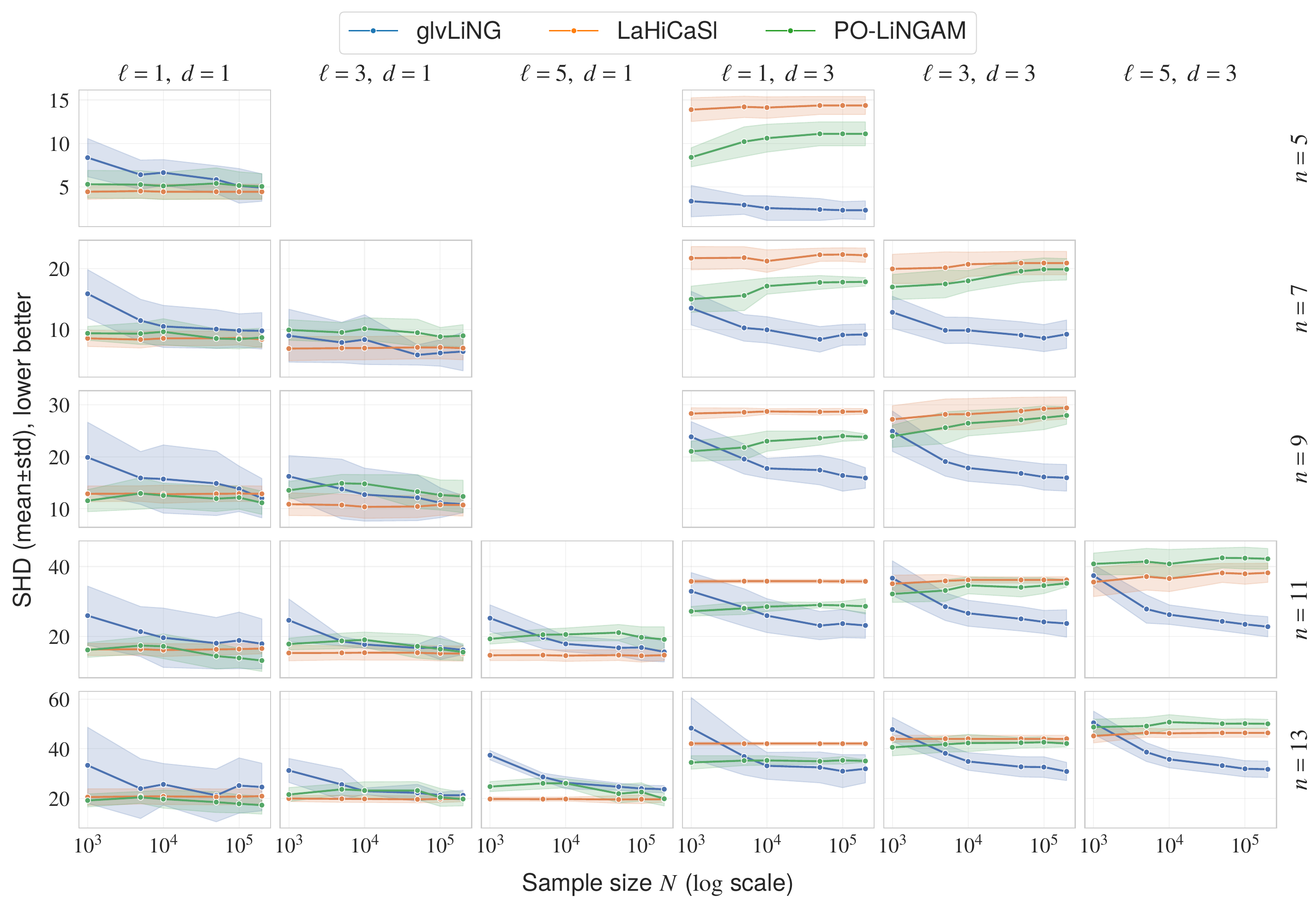}
    \caption{\update{Simulation results comparing glvLiNG with existing methods with varying sample size $N$ (the global x-axis), and each subplot shows a setting under a specific number of total variables $n$, number of latent variables $\ell$, and the average in-degree $d$. Mean and standard deviation of SHD are calculated from $25$ random irreducible models.}}
    \label{fig:app_simulation_results}
\end{figure}

\paragraph{Simulation results} The results are presented in~\cref{fig:app_simulation_results}. From it we have the following observations:

First of all, it is not surprising to see that LaHiCaSl and PO-LiNGAM perform better when the graph is sparser. For example, when $d=1$, these two methods perform better than glvLiNG, though the difference remains modest. This is perhaps because, when the graph is sparser, maintaining irreducibility typically means more edges outgoing from latent variables, while edges from observed ones to others are fewer. This aligns well with the model assumptions of these two methods. For example, LaHiCaSl assumes a hierarchical latent-variable model in which all observed variables are leaf nodes. Given this additional benefit from their sparsity constraints, and the fact that both LaHiCaSl and PO-LiNGAM estimates ranks using the GIN condition which is more efficient than OICA, it is not surprising to see that they perform better in this setting.

However, when the graph is denser, glvLiNG performs particularly better. For example, when $d=3$, glvLiNG consistently outperforms the other two methods, and the difference is considerable. This is perhaps because, when the graph is denser, more complex structures become common, including arbitrary edges between latent and observed variables, as well as cycles. Model assumptions of existing methods are more likely to be violated, making them less effective at recovering these structures. In contrast, with a structural-assumption-free design, glvLiNG avoids such model misspecification, and still allows the recovery of these structures.

We also observe that glvLiNG tends to be more robust to latent dimensionality. For example, when $n=13$ and $d=3$, increasing the number of latents $\ell$ from $1$ to $5$, the average SHD of glvLiNG increases from 33.1 to 35.7, while other method, such as PO-LiNGAM, increases from 35.3 to 50.7. This is perhaps because glvLiNG jointly recovers all latent-outgoing edges at once, using an OICA mixing matrix whose dimensionality is already fixed. In contrast, the other two methods requires incrementally clustering and adding latent variables.

\subsection{Analyzing a Real-World Dataset with glvLiNG Algorithm}\label{app:experiment_realdata}

For the real-world experiment, we use a Hong Kong stock market dataset that involves the daily dividend/split-adjusted closing prices for 14 major stocks from January 4, 2000 to June 17, 2005 (1331 samples). These 14 stocks represent the dominant sectors of the market: 3 of them are on banking (HSBC Holdings, Hang Seng Bank, Bank of East Asia), 5 on real estate (Cheung Kong, Henderson Land, Hang Lung Properties, Sun Hung Kai Properties, Wharf Holdings), 3 on utilities (CLP Holdings, HK \& China Gas, HK Electric), and 3 on commerce (Hutchison, Swire Pacific 'A', Cathay Pacific Airways). All of them were constituents of Hang Seng Index (HSI), and they were almost the largest companies of the Hong Kong stock market at the time.

By applying glvLiNG on this dataset, we recovered an equivalence class of causal graphs containing 2 latent variables. The presentation (see~\cref{app:discussion_presentation_equiv_class}) of this equivalence class is shown in~\cref{fig:real_stock_data}. Here is a summary: the class consists of 19,008 causal graphs with 16=14+2 vertices, and among them the numbers of edges range between 29 to 34. In the presentation, there are 20 ``solid'' (must appear) and 14 ``dashed'' (may appear) edges.

This result suggests several interesting observations, as follows:

\begin{enumerate}
    \item Large banks seem to be major upstream causes. For example, the two largest banks, HSBC Holdings and Hang Seng Bank, together form a 2-cycle that has 9 children across sectors, but there are no edges into them.
    \item Real estates, in contrast, seem to be downstream effect receivers. For example, Cheung Kong has 10 parents, but only 1 edge pointing out from it.
    \item Utilities are heavily involved in cycles. For example, among 17 simple cycles in the graph, CLP Holdings belongs to 11 of them. These cycles are often across sectors as utilities - real estate - commerce - utilities.
    \item One latent variable seems interpretable. It has one parent HSBC Holdings, and three children (all with solid edges): Cheung Kong, Hutchison, and Swire Pacific 'A'. Among them, Cheung Kong and Hutchison were two core holdings of a same group.
    \item Stocks under the same sector tend to be connected more closely.
\end{enumerate}

\begin{figure}[t]
    \centering
    \includegraphics[width=\linewidth]{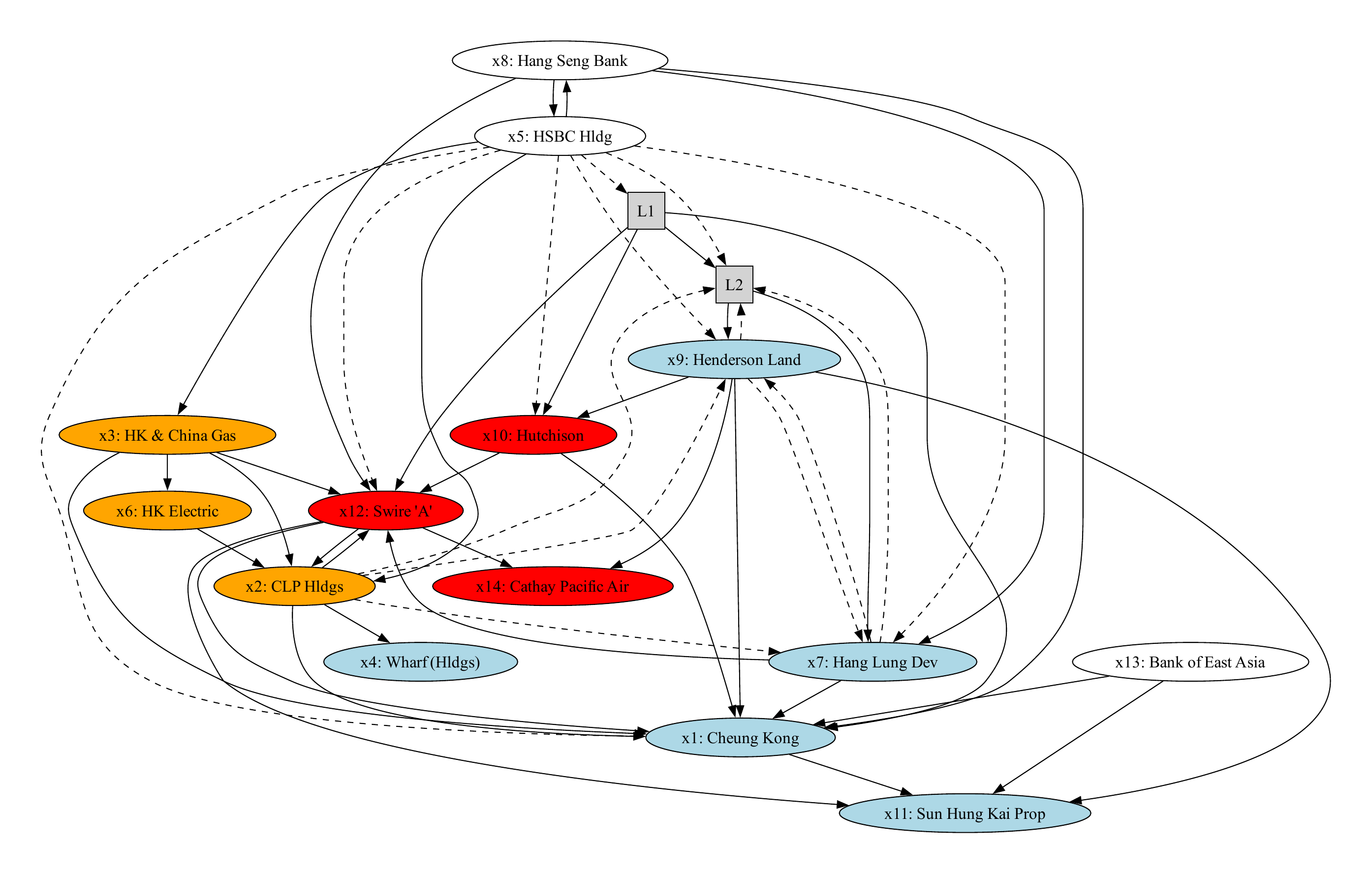}
    \caption{\update{Presentation of the equivalence class that glvLiNG estimates from the stock market data. Different colors of nodes indicate different sectors. Solid and dashed edges indicate edges that must appear in all or at least one equivalent graph(s).}}
    \label{fig:real_stock_data}
\end{figure}

}

\end{document}